
\documentclass[journal]{IEEEtran/IEEEtran}
%
% If IEEEtran.cls has not been installed into the LaTeX system files,
% manually specify the path to it like:
% \documentclass[journal]{../sty/IEEEtran}

% Some very useful LaTeX packages include:
% (uncomment the ones you want to load)

% *** MISC UTILITY PACKAGES ***
%
%\usepackage{ifpdf}
% Heiko Oberdiek's ifpdf.sty is very useful if you need conditional
% compilation based on whether the output is pdf or dvi.
% usage:
% \ifpdf
%   % pdf code
% \else
%   % dvi code
% \fi
% The latest version of ifpdf.sty can be obtained from:
% http://www.ctan.org/pkg/ifpdf
% Also, note that IEEEtran.cls V1.7 and later provides a builtin
% \ifCLASSINFOpdf conditional that works the same way.
% When switching from latex to pdflatex and vice-versa, the compiler may
% have to be run twice to clear warning/error messages.

% *** GRAPHICS RELATED PACKAGES ***
%
\ifCLASSINFOpdf
  \usepackage[pdftex]{graphicx}
\else
  \usepackage[dvips]{graphicx}
\fi

\usepackage[cmex10]{amsmath}
\usepackage{amsthm}

\usepackage{graphicx, bm, bbm, latexsym, amsfonts, epsfig, verbatim, epsfig, verbatim, enumerate, multirow, caption, graphicx, subcaption, algorithm, algpseudocode, hyperref, url, multicol, blindtext, color, latexsym, amsfonts, epsfig, verbatim, tikz, cases, algcompatible, diagbox, fixmath,makecell,booktabs,comment, lipsum}
\hypersetup{
	colorlinks=true, 
	breaklinks=true,
	urlcolor= black, 
	linkcolor= black, 
	citecolor=red, 
	pdftitle={},
	pdfauthor={},
}

\usepackage{graphicx, bbm, latexsym, amsfonts, epsfig, verbatim, amsmath, color, epsfig, verbatim, enumerate, multirow, caption, graphicx, subcaption, algorithm, algpseudocode}
\usepackage{url, color, latexsym, amsfonts, epsfig, verbatim, tikz, cases, bbm}
\usepackage[top=1.0in, bottom=1.0in, left=1.0in, right=1.0in]{geometry}
\usepackage{algcompatible}
\usepackage{diagbox}
\usepackage{booktabs, caption, makecell}

\usepackage{threeparttable}
\usepackage{array}
\newcolumntype{L}[1]{>{\raggedright\let\newline\\\arraybackslash\hspace{0pt}}m{#1}}
\newcolumntype{C}[1]{>{\centering\let\newline\\\arraybackslash\hspace{0pt}}m{#1}}
\newcolumntype{R}[1]{>{\raggedleft\let\newline\\\arraybackslash\hspace{0pt}}m{#1}}

\DeclareMathOperator*{\argmin}{arg\,min}

\newtheorem{theorem}{Theorem}
\newtheorem{lemma}{Lemma}

\newtheorem{corollary}{Corollary}
\newtheorem{definition}{Definition}
\newtheorem{assumption}{Assumption}

\newtheorem{proposition}{Proposition}
\newtheorem{remark}{Remark}

% customer Structure numbering

% \newcommand{\proof}{\noindent{\em Proof: }}
% \newcommand{\qed}{\hfill$\Box$ \newline\noindent}

% \DeclareMathOperator*{\esup}{ess\,sup}

\makeatletter
\newcommand{\subalign}[1]{%
  \vcenter{%
    \Let@ \restore@math@cr \default@tag
    \baselineskip\fontdimen10 \scriptfont\tw@
    \advance\baselineskip\fontdimen12 \scriptfont\tw@
    \lineskip\thr@@\fontdimen8 \scriptfont\thr@@
    \lineskiplimit\lineskip
    \ialign{\hfil$\m@th\scriptstyle##$&$\m@th\scriptstyle{}##$\hfil\crcr
      #1\crcr
    }%
  }%
}
\makeatother

\allowdisplaybreaks

\begin{document}
\title{Differentially Private Federated Learning via Inexact ADMM with Multiple Local Updates
}

\author{Minseok~Ryu,~\IEEEmembership{Member,~IEEE,} 
  and~Kibaek~Kim,~\IEEEmembership{Member,~IEEE,}% <-this % stops a space
\thanks{M. Ryu and K. Kim are with the Mathematics and Computer Science Division, Argonne National Laboratory, Lemont, IL, USA (Contact: \{mryu, kimk\}@anl.gov).
This material is based upon work  supported by the U.S. Department of Energy, Office of Science, Advanced Scientific Computing Research, under Contract DE-AC02-06CH11357.
We gratefully acknowledge the computing resources provided on Swing, a high-performance computing cluster operated by the Laboratory Computing Resource Center at Argonne National Laboratory.
}% <-this % stops a space
}

% make the title area
\maketitle

% As a general rule, do not put math, special symbols or citations
% in the abstract or keywords.
\begin{abstract}  
Differential privacy (DP) techniques can be applied to the federated learning model to statistically guarantee data privacy against inference attacks to communication among the learning agents. While ensuring strong data privacy, however, the DP techniques hinder achieving a greater learning performance. In this paper we develop a DP inexact alternating direction method of multipliers algorithm with multiple local updates for federated learning, where a sequence of convex subproblems is solved with the objective perturbation by random noises generated from a Laplace distribution. 
We show that our algorithm provides $\bar{\epsilon}$-DP for every iteration, where $\bar{\epsilon}$ is a privacy budget controlled by the user. We also present convergence analyses of the proposed algorithm. Using MNIST and FEMNIST datasets for the image classification, we demonstrate that our algorithm reduces the testing error by at most $31\%$ compared with the existing DP algorithm, while achieving the same level of data privacy. The numerical experiment also shows that our algorithm converges faster than the existing algorithm.
\end{abstract}

% Note that keywords are not normally used for peerreview papers.
\begin{IEEEkeywords}
Differential privacy, federated learning, inexact alternating direction method of multipliers, multiple local updates, convergence analyses.
\end{IEEEkeywords}

% For peer review papers, you can put extra information on the cover
% page as needed:
% \ifCLASSOPTIONpeerreview
% \begin{center} \bfseries EDICS Category: 3-BBND \end{center}
% \fi
%
% For peerreview papers, this IEEEtran command inserts a page break and
% creates the second title. It will be ignored for other modes.
\IEEEpeerreviewmaketitle

\section{Introduction} \label{sec:introduction}
In this work we propose a privacy-preserving algorithm for training a federated learning (FL) model \cite{konevcny2015federated}, namely, a machine learning (ML) model that aims to learn global model parameters \textit{without} collecting locally stored data from agents to a central server.
The proposed algorithm is based on an inexact alternating direction method of multipliers (IADMM) that solves a sequence of subproblems whose \textit{objective functions} are perturbed by injecting some random noises for ensuring \textit{differential privacy} (DP) on the distributed data.
We show that the proposed algorithm provides more accurate solutions compared with the state-of-the-art DP algorithm \cite{huang2019dp} while both algorithms provide the same level of data privacy.
As a result, the proposed algorithm can mitigate a trade-off between data privacy and solution accuracy (i.e., learning performance in the context of ML), which is one of the main challenges in developing DP algorithms, as described in \cite{dwork2014algorithmic}.

Developing highly accurate privacy-preserving algorithms can enhance the practical uses of FL in applications with sensitive data (e.g., electronic health records \cite{shickel2017deep} and mobile device data \cite{mcmahan2017communication}) because a greater learning performance can be achieved while preserving privacy on the sensitive data exposed to be leaked during a training process.
Furthermore, it would allow a stronger differential privacy budget to FL.
Because of the importance of FL, incorporating privacy-preserving techniques into optimization algorithms for solving the FL models has been studied extensively \cite{zhang2016dynamic, huang2019dp, wei2020federated, naseri2020toward}.

\textbf{Related Work.}
The empirical risk minimization (ERM) model used for learning parameters in supervised ML is often vulnerable to adversarial attacks \cite{madry2017towards}, a situation that motivates the application of privacy-preserving techniques (e.g., DP \cite{dwork2006calibrating} and homomorphic encryption \cite{kaissis2020secure}) to protect data.
Among these techniques, DP has been widely used in the ML community and is especially useful for protecting data against inference attacks \cite{shokri2017membership}.

Formally, DP is a privacy-preserving technique that randomizes the output of a query such that any single data point cannot be inferred by an adversary that can reverse-engineer the randomized output.
Depending on where to inject noises to randomize the output, DP can be categorized by input \cite{fukuchi2017differentially, kang2020input}, output \cite{dwork2006calibrating, chaudhuri2011differentially}, and objective \cite{chaudhuri2011differentially, kifer2012private} perturbation methods.
Compared with the input perturbation, which directly perturbs input data by adding random noises,  \textit{output perturbation} and \textit{objective perturbation} methods provide a randomized output of an optimization problem by injecting random noises into its true output and objective function, respectively.
In \cite{chaudhuri2011differentially}, the authors propose a differentially private ERM that utilizes the output and objective perturbation methods to ensure DP on data.
Also, Abadi et al.~\cite{abadi2016deep} apply the output perturbation to stochastic gradient descent (SGD) in order to ensure DP on data for every iteration of the algorithm.
The privacy-preserving technique in our work is the \textit{objective perturbation} method.
% :  randomizing the output of the subproblem by perturbing its objective function with some random noises.
%GAIL - Haven't you jut a few sentences earlier aid what this method involve? Perhaps just stop with "our work  is  the  obj pert. method."
For details of differentially private ML, we refer readers to \cite{sarwate2013signal,kifer2012private,iyengar2019towards}.

Within the context of FL, various optimization algorithms have been developed for solving the distributed ERM model in a communication-efficient manner.
For example, \texttt{FedAvg} in \cite{mcmahan2017communication} reduces the number of communication rounds by allowing each agent to conduct multiple local updates via SGD while a central server performs model averaging for a global update.
Another example is \texttt{FedProx}  \cite{li2018federated}, constructed by adding a proximal function to the objective function of the local model considered in \texttt{FedAvg}, resulting in better learning performance.
Recently, the authors in \cite{zhou2021communication} develop a communication-efficient ADMM by enhancing local computation whereas the vanilla ADMM conducts a single local computation per communication round.
Even though these algorithms can reduce the number of communication rounds to mitigate the chance of data leakage, they do not guarantee data privacy during a training process, preventing their practical uses.
Readers interested in details of FL should see \cite{kairouz2019advances,li2019survey,li2020federated}. 
% for details  about FL without the central server, see \cite{li2017robust,elgabli2020gadmm}.

In order to preserve privacy on data used for the FL model, various DP algorithms have been proposed in the literature, where the output and objective perturbations are incorporated for ensuring DP (see \cite{agarwal2018cpsgd, wei2020federated, naseri2020toward,zhang2016dynamic, huang2019dp}).
For example, the intermediate model parameters and/or gradients computed for every iteration of the \texttt{FedAvg}-type and \texttt{FedProx}-type algorithms are perturbed for guaranteeing DP as in \cite{naseri2020toward} and \cite{wei2020federated}, respectively, which can be seen as the output perturbation.
Also, in \cite{zhang2016dynamic}, the primal and dual variables computed for every iteration of the vanilla ADMM algorithm are perturbed, which can be seen as the output and objective perturbations, respectively. 
Zhang and Zhu \cite{zhang2016dynamic} compare the two perturbation methods, as Chaudhuri et al.~\cite{chaudhuri2011differentially} did under the general ML setting, and show that the objective perturbation can provide more accurate solutions compared with the output perturbation.
The use of the objective perturbation is somewhat limited, however, because it requires the objective function to be twice differentiable and strongly convex whereas the twice differentiability restriction can be relaxed to the differentiability for the output perturbation.
In \cite{huang2019dp}, the authors incorporate the output perturbation into IADMM that utilizes a first-order approximation with a proximal function.
Introducing the first-order approximation in ADMM enforces smoothness of the objective function, hence satisfying the  differentiability assumption for ensuring DP.
Also, the authors show that the algorithm has a $\mathcal{O}(1/\sqrt{T})$ rate of convergence in expectation, where $T$ is the number of iterations.
Moreover, their numerical experiments demonstrate that the algorithm outperforms DP-ADMM in \cite{zhang2016dynamic} and DP-SGD in \cite{abadi2016deep}.

\textbf{Contributions.}
In this paper, as compared with the DP-IADMM algorithm in \cite{huang2019dp}, we incorporate the \textit{objective perturbation} into IADMM that utilizes the first-order approximation.
Moreover, we introduce a multiple local updates technique into DP-IADMM to reduce communication rounds. We note that the technique has been applied to IADMM in \cite{zhou2021communication}, but not in the context of DP-IADMM.
Our main contributions are summarized as follows:
\begin{itemize}
  \itemsep0em
  \item Development of new IADMM algorithm with $\bar{\epsilon}$-DP on data
  \item Application of multiple local updates for better communication efficiency
  \item Proof that the rate of convergence in expectation for our DP algorithm is 
  \begin{itemize}
    \item $O(\frac{1}{\bar{\epsilon} \sqrt{T}})$ under a smooth convex function, 
    \item $O(\frac{1}{\bar{\epsilon}^2 \sqrt{T}})$ under a nonsmooth convex function,
    \item $O(\frac{1}{\bar{\epsilon}^2 T})$ under a strongly-convex function
  \end{itemize}   
  \item Numerical demonstration that our DP algorithm provides more accurate solutions compared with the existing DP algorithm \cite{huang2019dp}
\end{itemize}

\textbf{Organization and Notation.}
The remainder of the paper is organized as follows.
In Section \ref{sec:model} we describe an FL model using a distributed ERM and present the existing IADMM algorithm for solving the FL model.
In Section \ref{sec:MultipleLocalUpdate} we propose a new DP-IADMM algorithm for solving the FL model that ensures DP on data and converges to an optimal solution with the sublinear convergence rate.
In Section \ref{sec:experiments} we describe numerical experiments to demonstrate the outperformance of the proposed algorithm.

We denote by $\mathbb{N}$ a set of natural numbers. For $A \in \mathbb{N}$, we define $[A]:= \{1,\ldots,A\}$ and denote by $\mathbb{I}_{A}$ an $A \times A$ identity matrix.
We use $\langle \cdot, \cdot \rangle$ and $\|\cdot \|$ to denote the scalar product and the Euclidean norm, respectively.
For a given function $f$, we use $f'$ and $\nabla f$ to denote a subgradient and a gradient of $f$, respectively.

\section{Preliminaries} \label{sec:model}
%% 1. Model Description
\textbf{Distributed ERM.}
Consider a set $[P]$ of agents connected to a central server.
Each agent $p \in [P]$ has a training dataset $\mathcal{D}_p := \{x_{pi}, y_{pi} \}_{i=1}^{I_p}$, where $I_p$ is the number of data samples, $x_{pi} \in \mathbb{R}^J$ is a $J$-dimensional data feature, and $y_{pi} \in \mathbb{R}^K$ is a $K$-dimensional data label.
We consider a \textit{distributed} ERM problem given by
\begin{align}
\min_{w  \in \mathcal{W}} \ &  \sum_{p=1}^P \Big\{ \frac{1}{I} \sum_{i=1}^{I_p} l(w; x_{pi},y_{pi}) + \frac{\beta}{P} r(w) \Big\}, \label{ERM_0}
\end{align}
where
$w \in \mathbb{R}^{J \times K}$ is a global model parameter vector,
$\mathcal{W}$ is a compact convex set,
$l(\cdot)$ is a convex loss function, $r(\cdot)$ is a convex regularizer function, $\beta > 0$ is a regularizer parameter, and $I := \sum_{p=1}^P I_p$.

By introducing a local model parameter $z_p \in \mathbb{R}^{J \times K}$ defined for every agent $p \in [P]$, we can rewrite \eqref{ERM_0}  as
\begin{subequations}
\label{ERM_1}
\begin{align}
  \min_{w, \{z_p\}_{p=1}^P \in \mathcal{W}} \ & \sum_{p=1}^P f_p(z_p)  \\
   \mbox{s.t.} \ &  w  = z_{p}, \ \forall p \in [P], \label{ERM_1-1}
\end{align}
where  
\begin{align}
& f_p(z_p) :=  \frac{1}{I} \sum_{i=1}^{I_p} l(z_p; x_{pi},y_{pi}) + \frac{\beta}{P} r(z_p). \label{def_fn_f}
\end{align}
\end{subequations}   
Since \eqref{ERM_1} is a convex optimization problem, it can be expressed by the \textit{equivalent} Lagrangian dual problem:
\begin{align}
  \max_{ \{\lambda_p\}_{p=1}^P  } \min_{w, \{z_p\}_{p=1}^P  \in \mathcal{W}} \sum_{p=1}^P \left\{ f_p(z_p) + \langle \lambda_p, w-z_p \rangle \right\}, \label{ERM_LDual}
\end{align}
where $\lambda_p \in \mathbb{R}^{J \times K}$ is a dual vector associated with constraints \eqref{ERM_1-1}.

\textbf{ADMM.} 
ADMM is an iterative optimization algorithm that can find an optimal solution of \eqref{ERM_LDual} in an augmented Lagrangian form.
More specifically, for every $t \in [T]$, where $T$ is the number of iterations, it updates 
\begin{align}
(w^{t}, z^{t}, \lambda^{t}) \rightarrow (w^{t+1}, z^{t+1}, \lambda^{t+1})  \nonumber 
\end{align}
by solving the following subproblems sequentially:
\begin{subequations}
\label{ADMM}
\begin{align}
& w^{t+1} \leftarrow \argmin_{w} \ \sum_{p=1}^P \Big( \langle \lambda^t_p, w \rangle + \frac{\rho^t}{2} \|w-z^t_p\|^2 \Big), \label{ADMM-1} \\
& z^{t+1}_p \leftarrow \argmin_{z_p \in \mathcal{W}} \ f_p(z_p) - \langle \lambda^t_p, z_p \rangle \nonumber \\
& \hspace{25mm} + \frac{\rho^t}{2} \|w^{t+1}-z_p\|^2, \ \forall p \in [P], \label{ADMM-2} \\
& \lambda^{t+1}_p \leftarrow  \lambda^{t}_p + \rho^t (w^{t+1}-z^{t+1}_p), \ \forall p \in [P], \label{ADMM-3}
\end{align}
\end{subequations}
where $\rho^t > 0$ is a hyperparameter that may be fine-tuned for better performance. 
% See~\cite{boyd2011distributed} for more details on ADMM.

\textbf{Inexact ADMM.}
The subproblem \eqref{ADMM-2} does not need to be solved exactly in each iteration to guarantee the overall convergence. 
In~\cite{huang2019dp}, \eqref{ADMM-2} is replaced with the following inexact subproblem:
\begin{align}
z_p^{t+1} \leftarrow  \argmin_{z_p  \in \mathcal{W} } \ & \langle f'_p(z^t_p), z_p \rangle + \frac{1}{2\eta^{t}}\|z_p - z^t_p\|^2 + \nonumber \\
& \frac{\rho^t}{2} \|w^{t+1}-z_p + \frac{1}{\rho^t} \lambda^t_p  \|^2 ,  \label{ADMM-2-Prox}
\end{align}
which is obtained by (i) replacing the convex function $f_p(z_p)$ in \eqref{ADMM-2} with its lower approximation $\widehat{f}_p(z_p) := f_p(z^t_p) + \langle f'_p(z^t_p), \ z_p - z_p^t \rangle$, where $f'_p(z^t_p)$ is a subgradient of $f_p$ at $z^t_p$, and (ii) adding a \textit{proximal} term $\frac{1}{2\eta^{t}}\|z_p - z^t_p\|^2$ with a proximity parameter $\eta^{t} > 0$ that controls the proximity of a new solution $z^{t+1}_p$ from $z^t_p$ computed from the previous iteration.
Note that the proximal term is used for finding a new solution within a certain distance from the solution computed from the previous iteration and has been widely used for numerous optimization algorithms (e.g., the bundle method \cite{teo2010bundle}).

In this paper we refer to $\{\text{\eqref{ADMM-1}}\rightarrow\eqref{ADMM-2-Prox}\rightarrow\text{\eqref{ADMM-3}}\}_{t=1}^T$ as IADMM.
Within the context of federated learning, IADMM is composed of the following four components:
\begin{enumerate}
  \item The central server solves \eqref{ADMM-1} to update the global model parameter $w^{t+1}$.
  \item The central server broadcasts $w^{t+1}$ to all agents.
  \item Each agent $p$ solves \eqref{ADMM-2-Prox} and \eqref{ADMM-3} to update local model parameter $z_p^{t+1}$ and dual information $\lambda_p^{t+1}$.
  \item Each agent $p$ sends the local update $(z_p^{t+1}, \lambda_p^{t+1})$ to the server.
\end{enumerate}

\section{Differentially Private IADMM with Multiple Local Updates} \label{sec:MultipleLocalUpdate}

We generalize the IADMM algorithm by introducing multiple local updates and differential privacy techniques.
The proposed algorithm aims to
(i) improve learning performance by introducing multiple local updates and
(ii) protect data privacy against adversaries that can infer the locally stored data by reverse-engineering the local model parameters communicated during the training process.
We present the privacy and convergence analyses of the proposed algorithm in Section \ref{sec:privacy} and \ref{sec:convergence}, respectively.

\textbf{Multiple Local Updates.}
We introduce the multiple local updates in IADMM, namely, solving \eqref{ADMM-2-Prox}  multiple times, to improve communication efficiency.
In other words, for every $e \in [E]$, where $E$ is the number of local updates, we solve
\begin{align}
z_p^{t,e+1} \leftarrow  \argmin_{z_p  \in \mathcal{W} } \ &  \langle f'_p(z^{t,e}_p), z_p \rangle + \frac{1}{2\eta^{t}}\|z_p - z^{t,e}_p\|^2 + \nonumber \\
& \frac{\rho^t}{2} \|w^{t+1}-z_p + \frac{1}{\rho^{t}} \lambda^t_p  \|^2.  \label{ADMM-2-MLU}
\end{align}
This is different from the existing work \cite{zhou2021communication} that considers both multiple local primal and dual updates, namely, solving \eqref{ADMM-2-Prox} and \eqref{ADMM-3} multiple times per iteration, resulting in communicating not only local model parameters but also dual information.
In contrast, our approach does not require communicating dual information and hence reduces the communication burden.
This point will be made clearer when describing Algorithm \ref{algo:DP-IADMM-Prox}. 

\textbf{DP via Objective Perturbation.}
We propose to perturb the objective function of the constrained subproblem \eqref{ADMM-2-MLU} by adding some random noise for ensuring differential privacy.
DP is a data privacy preservation technique that aims to protect data by randomizing outputs of a function that takes data as inputs.
A formal definition follows.
\begin{definition}{\textbf{(Definition 3 in \cite{chaudhuri2011differentially}})} \label{def:differential_privacy_1}
A randomized function $\mathcal{A}$ provides $\bar{\epsilon}$-DP if for any two datasets $\mathcal{D}$ and $\mathcal{D}'$ that differ in a single entry and for any set $\mathcal{S}$,
\begin{align}
%   \textstyle 
\Big| \ln \Big( \frac{\mathbb{P}( \mathcal{A}(\mathcal{D}) \in \mathcal{S} ) }{\mathbb{P}( \mathcal{A}(\mathcal{D}') \in \mathcal{S} )  } \Big) \Big| \leq  \bar{\epsilon},  \label{def_differential_privacy_1}
\end{align}
where $\mathcal{A}(\mathcal{D})$ (resp. $\mathcal{A}(\mathcal{D}')$) is the randomized output of $\mathcal{A}$ on input $\mathcal{D}$ (resp. $\mathcal{D}'$).
\end{definition}
% According to the inequalities \eqref{def_differential_privacy_1}, $\mathbb{P}( \mathcal{A}(\mathcal{D}) \in \mathcal{S}) - \mathbb{P}( \mathcal{A}(\mathcal{D}') \in \mathcal{S}) \rightarrow 0$ as $\bar{\epsilon} \rightarrow 0$.
The definition implies that as $\bar{\epsilon}$ decreases, it becomes harder to distinguish the two datasets $\mathcal{D}$ and $\mathcal{D}'$ by analyzing the randomized outputs, thus providing stronger data privacy.

We aim to construct the randomized function $\mathcal{A}$ satisfying \eqref{def_differential_privacy_1} by introducing some calibrated random noise into the objective function of the subproblem \eqref{ADMM-2-MLU} to protect data in an $\bar{\epsilon}$-DP manner.
To this end, we add an affine function $\frac{1}{2\rho^t} \| \tilde{\xi}^{t,e}_p \|^2 - \langle w^{t+1} - z_p + \frac{1}{\rho^t} \lambda^t_p, \tilde{\xi}^{t,e}_p \rangle$ to \eqref{ADMM-2-MLU}, resulting in
\begin{align}
  z^{t,e+1}_p \leftarrow & \argmin_{z_p \in \mathcal{W}} \ \langle f'_p(z^{t,e}_p), z_p \rangle + \frac{1}{2\eta^{t}} \| z_p - z_p^{t,e}\|^2 \nonumber \\
  & + \frac{\rho^t}{2} \|w^{t+1}-z_p + \frac{1}{\rho^t}(\lambda^t_p - \tilde{\xi}^{t,e}_p) \|^2,  \label{DPADMM-2-Prox}
\end{align}  
where $\tilde{\xi}^{t,e}_p \in \mathbb{R}^{J \times K}$ is a noise vector sampled from a Laplace distribution with zero mean and a scale parameter $\bar{\Delta}^{t,e}_p / \bar{\epsilon}$ whose probability density function (pdf) is given by
%GAIL - why the quotes?
\begin{subequations}  
\label{Laplace}  
\begin{align}
& \text{Lap} ( \tilde{\xi}^{t,e}_p ; 0, \bar{\Delta}_p^{t,e}/\bar{\epsilon} ) :=  \frac{\bar{\epsilon}}{2 \bar{\Delta}_p^{t,e}} \exp \big(- \frac{\bar{\epsilon} \| \tilde{\xi}^{t,e}_p \|_1 }{\bar{\Delta}_p^{t,e}} \big), \label{Laplace-pdf} 
\end{align}
where $\bar{\epsilon} > 0$,
\begin{align}
& \bar{\Delta}_p^{t,e} := \max_{\mathcal{D}'_p \in \widehat{\mathcal{D}}_p} \| f'_p(z^{t,e}_p;\mathcal{D}_p) - f'_p(z^{t,e}_p;\mathcal{D}'_p)\|_1, \label{Delta} \\
& \widehat{\mathcal{D}}_p := \text{a collection of datasets differing a single } \nonumber \\
& \hspace{10mm} \text{entry from a given dataset } \mathcal{D}_p. \label{DataCollect}
\end{align}
\end{subequations}
Note that 
\eqref{DPADMM-2-Prox} with $\tilde{\xi}^{t,e}_p = 0$ is equal to \eqref{ADMM-2-MLU}. 
We use $f'_p(z^{t,e}_p;\mathcal{D}_p)$ and $f'_p(z^{t,e}_p)$ interchangeably, where $\mathcal{D}_p$ is a given dataset.

% \begin{align}
% &   \textstyle \frac{\rho^t}{2} \|w^{t+1}-z_p + \frac{1}{\rho^t} \lambda^t_p \|^2  + \frac{1}{2\rho^t} \| \tilde{\xi}^t_p \|^2 - \langle w^{t+1} - z_p + \frac{1}{\rho^t} \lambda^t_p, \tilde{\xi}^t_p \rangle \nonumber \\
% = & \textstyle \frac{\rho^t}{2} \Big\{ \|w^{t+1}-z_p + \frac{1}{\rho^t} \lambda^t_p \|^2 + \frac{1}{(\rho^t)^2} \| \tilde{\xi}^t_p \|^2 - \frac{2}{\rho^t} \langle w^{t+1} - z_p + \frac{1}{\rho^t} \lambda^t_p, \tilde{\xi}^t_p \rangle \Big\} \nonumber  \\
% = & \textstyle \frac{\rho^t}{2} \|w^{t+1}-z_p + \frac{1}{\rho^t} \lambda^t_p - \frac{1}{\rho^t}  \tilde{\xi}^t_p \|^2 ,  \nonumber
% \end{align} 

\textbf{DP-IADMM.}
In Algorithm \ref{algo:DP-IADMM-Prox}, we present the proposed DP-IADMM with multiple local updates.
We describe the steps of the algorithm
%Algorithm \ref{algo:DP-IADMM-Prox} 
as follows.
The computation at the central server is described in lines 1--9, while the local computation for each agent $p$ is described in lines 11--24.
In lines 2--3, the initial points are sent from the server to all agents.
In lines 5--6, the global parameter $w^{t+1}$ is computed and sent to the local agents.
In lines 15--22, the local agent $p$ receives $w^{t+1}$ from the server,  conducts local updates for $E$ times, and sends the resulting local model parameter $z_p^{t+1}$ to the server.
Note that $z_p^{t+1}$ is a randomized output:  it is perturbed by injecting random noise to the objective function of \eqref{DPADMM-2-Prox}.
The dual updates are performed at the server and the local agents individually as in line 8 and in line 23, respectively.
Note that those dual updates are identical since the initial points at the server and the local agents are the same.
%GAIL - not sure what you mean by "are consensus"

\begin{algorithm}[!ht]
\caption{DP-IADMM with multiple local updates.}
\label{algo:DP-IADMM-Prox}
\begin{algorithmic}[1]
\STATE \textbf{(Server):}
\STATE Initialize $\lambda^1_1, \ldots, \lambda^1_P, z^1_1, \ldots, z^1_P$.
\STATE \texttt{Send} $\lambda^1_p, z^1_p$ to all agent $p \in [P]$ (\textbf{to line 12}).
\FOR{$t \in [T]$}
\STATE $w^{t+1} \leftarrow \frac{1}{P} \sum_{p=1}^P ( z_p^t - \frac{1}{\rho^t} \lambda^t_p) $.
\STATE \texttt{Send} $w^{t+1}$ to all agents (\textbf{to line 15}).
\STATE \texttt{Receive} $z^{t+1}_p$ from all agents (\textbf{from line 22}).
\STATE $\lambda^{t+1}_p \leftarrow \lambda^t_p + \rho^t(w^{t+1}-z^{t+1}_p)$ for all $p \in [P]$.
\ENDFOR
\STATE  
\STATE \textbf{(Agent $p \in [P]$):}  
\STATE Receive $\lambda^1_p, z^1_p$ from the server (\textbf{from line 3}).
\STATE Initialize $z^{0,E+1}_p = z^1_p$.
\FOR{$t \in [T]$}
\STATE \texttt{Receive} $w^{t+1}$ from the server (\textbf{from line 6}).    
\STATE Set $z_{p}^{t,1} \leftarrow z_p^{t-1,E+1}$
\FOR{$e \in [E]$}
\STATE Sample $\tilde{\xi}^{t,e}_p$ from \eqref{Laplace}.
\STATE Compute $z^{t,e+1}_p$ by solving \eqref{DPADMM-2-Prox}.  
\ENDFOR        
\STATE $z^{t+1}_p \leftarrow \frac{1}{E} \sum_{e=1}^E z_p^{t,e+1}$.    
\STATE \texttt{Send} $z^{t+1}_p$ to the server (\textbf{to line 7}).
\STATE $\lambda^{t+1}_p \leftarrow \lambda^t_p + \rho^t(w^{t+1}-z^{t+1}_p)$.
\ENDFOR
\end{algorithmic}
\end{algorithm}
 
The benefits of Algorithm \ref{algo:DP-IADMM-Prox} include that (i) the quality of the solution can be improved via the multiple local updates that could result in reducing the total number of iterations,
(ii) the amount of communication is reduced by excluding the communication of the dual information, and
(iii) $\bar{\epsilon}$-DP on data is guaranteed for any communication rounds, which will be proved in the next section.

\subsection{Privacy Analysis} \label{sec:privacy}
In this section we show that $\bar{\epsilon}$-DP in Definition \ref{def:differential_privacy_1} is guaranteed for any iteration of Algorithm \ref{algo:DP-IADMM-Prox}.
To this end, using the following lemma, we show that the constrained subproblem \eqref{DPADMM-2-Prox} provides $\bar{\epsilon}$-DP.

\begin{lemma}{\textbf{(Theorem 1 in \cite{kifer2012private})}} \label{lemma:theorem1}
Let $\mathcal{A}$ be a randomized algorithm induced by the random noise $\tilde{\xi}$ that provides output $\phi(\mathcal{D},\tilde{\xi})$.
Let $\{\mathcal{A}_{\ell}\}_{\ell=1}^{\infty}$ be a sequence of randomized algorithms, each of which is induced by $\tilde{\xi}$ and provides output $\phi^{\ell}(\mathcal{D},\tilde{\xi})$.
If $\mathcal{A}_{\ell}$ is $\bar{\epsilon}$-DP for all $\ell$ and satisfies a pointwise convergence condition, namely, $\lim_{{\ell} \rightarrow \infty} \phi^{\ell}(\mathcal{D},\tilde{\xi}) = \phi(\mathcal{D},\tilde{\xi})$, then $\mathcal{A}$ is also $\bar{\epsilon}$-DP.
% Consider a sequence of randomized algorithms $\{\mathcal{A}_{\ell}\}_{\ell=1}^{\infty}$, each of which provides $\phi^{\ell}(\mathcal{D},\tilde{\xi})$.
% If $\mathcal{A}_{\ell}$ is $\bar{\epsilon}$-DP for all $\ell$ and satisfies a pointwise convergence condition, namely, $\lim_{{\ell} \rightarrow \infty} \phi^{\ell}(\mathcal{D},\tilde{\xi}) = \phi(\mathcal{D},\tilde{\xi})$, then $\mathcal{A}$ is also $\bar{\epsilon}$-DP.
\end{lemma}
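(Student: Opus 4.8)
The plan is to transfer the differential-privacy inequality from each member $\mathcal{A}_\ell$ of the sequence to the limiting algorithm $\mathcal{A}$ by regarding the outputs as random variables in the (Euclidean, hence metric) output space and invoking weak-convergence machinery. First I would fix a pair of neighboring datasets $\mathcal{D},\mathcal{D}'$ and observe that, since the \emph{same} noise $\tilde{\xi}$ induces every algorithm, the pointwise convergence hypothesis $\phi^{\ell}(\mathcal{D},\tilde{\xi}) \to \phi(\mathcal{D},\tilde{\xi})$ says precisely that $\mathcal{A}_\ell(\mathcal{D}) \to \mathcal{A}(\mathcal{D})$ for every realization of $\tilde{\xi}$, and in particular almost surely. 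Almost-sure convergence implies convergence in distribution, so $\mathcal{A}_\ell(\mathcal{D}) \xrightarrow{d} \mathcal{A}(\mathcal{D})$ and likewise $\mathcal{A}_\ell(\mathcal{D}') \xrightarrow{d} \mathcal{A}(\mathcal{D}')$.

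The core step is to establish $\mathbb{P}(\mathcal{A}(\mathcal{D}) \in F) \le e^{\bar{\epsilon}}\,\mathbb{P}(\mathcal{A}(\mathcal{D}') \in F)$ first for \emph{closed} sets $F$, because the two Portmanteau bounds available from weak convergence point in opposite directions and must be applied to the two datasets separately. For the open $\delta$-enlargement $F_\delta := \{x : \operatorname{dist}(x,F) < \delta\}$ I would chain $\mathbb{P}(\mathcal{A}(\mathcal{D}) \in F) \le \mathbb{P}(\mathcal{A}(\mathcal{D}) \in F_\delta) \le \liminf_\ell \mathbb{P}(\mathcal{A}_\ell(\mathcal{D}) \in F_\delta)$ (monotonicity, then the open-set Portmanteau bound), apply the $\bar{\epsilon}$-DP of $\mathcal{A}_\ell$ to replace $\mathcal{D}$ by $\mathcal{D}'$ at cost $e^{\bar{\epsilon}}$, and then pass back through $\limsup_\ell \mathbb{P}(\mathcal{A}_\ell(\mathcal{D}') \in \overline{F_\delta}) \le \mathbb{P}(\mathcal{A}(\mathcal{D}') \in \overline{F_\delta})$ (the closed-set Portmanteau bound). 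This yields $\mathbb{P}(\mathcal{A}(\mathcal{D}) \in F) \le e^{\bar{\epsilon}}\,\mathbb{P}(\mathcal{A}(\mathcal{D}') \in \overline{F_\delta})$, and letting $\delta \downarrow 0$ with continuity of the measure from above (using $\bigcap_{\delta>0}\overline{F_\delta} = F$, valid since $F$ is closed) gives the inequality for $F$.

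Finally I would upgrade from closed sets to the arbitrary measurable set $\mathcal{S}$ of Definition \ref{def:differential_privacy_1}. Since the outputs lie in a subset of Euclidean space, the laws of $\mathcal{A}(\mathcal{D})$ and $\mathcal{A}(\mathcal{D}')$ are Radon measures, so inner regularity gives $\mathbb{P}(\mathcal{A}(\mathcal{D}) \in \mathcal{S}) = \sup\{\mathbb{P}(\mathcal{A}(\mathcal{D}) \in F) : F \subseteq \mathcal{S},\ F \text{ closed}\}$; applying the closed-set inequality to each such $F$, bounding $\mathbb{P}(\mathcal{A}(\mathcal{D}') \in F) \le \mathbb{P}(\mathcal{A}(\mathcal{D}') \in \mathcal{S})$, and taking the supremum extends the bound to $\mathcal{S}$. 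The symmetric roles of $\mathcal{D}$ and $\mathcal{D}'$ then give the two-sided estimate, which is the logarithmic form in \eqref{def_differential_privacy_1}.

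I expect the main obstacle to be this middle step: pointwise convergence of the outputs does \emph{not} imply convergence of the probabilities $\mathbb{P}(\mathcal{A}_\ell(\mathcal{D}) \in \mathcal{S})$ for a fixed set, since probability mass can escape across the boundary $\partial\mathcal{S}$. The open/closed enlargement argument is exactly what absorbs this boundary effect, and care is needed to apply the two Portmanteau inequalities in the correct direction to each dataset so that the multiplicative factor $e^{\bar{\epsilon}}$ is preserved rather than doubled.
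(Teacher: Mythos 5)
Your proof is correct. Note, however, that the paper does not prove this lemma at all: it is imported verbatim as Theorem~1 of \cite{kifer2012private}, so there is no in-paper argument to compare against. What you have written is essentially a faithful reconstruction of the standard proof of that cited result: pointwise (hence almost sure, hence weak) convergence of $\mathcal{A}_\ell(\mathcal{D})$ and $\mathcal{A}_\ell(\mathcal{D}')$, the asymmetric use of the open-set and closed-set portmanteau inequalities on the two datasets so that the factor $e^{\bar{\epsilon}}$ survives the limit, the shrinking enlargement $\overline{F_\delta}\downarrow F$ for closed $F$, and inner regularity to pass to arbitrary Borel $\mathcal{S}$. The chain of inequalities is in the right order at every step ($a_\ell \le e^{\bar{\epsilon}} b_\ell$ gives $\liminf_\ell a_\ell \le e^{\bar{\epsilon}}\limsup_\ell b_\ell$, and $\bigcap_{\delta>0}\overline{F_\delta}=F$ does require $F$ closed, which you use). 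Two cosmetic remarks: continuity from above should formally be invoked along a countable sequence $\delta_n\downarrow 0$, and the multiplicative form $\mathbb{P}(\mathcal{A}(\mathcal{D})\in\mathcal{S})\le e^{\bar{\epsilon}}\,\mathbb{P}(\mathcal{A}(\mathcal{D}')\in\mathcal{S})$ you obtain is the correct rigorous reading of \eqref{def_differential_privacy_1} when one of the probabilities vanishes. Neither affects validity.
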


For the rest of this section we fix $t \in [T]$, $e \in [E]$, and $p \in [P]$. 
For ease of exposition, we denote the objective function of \eqref{DPADMM-2-Prox}, which is strongly convex, by  
\begin{align}
  G^{t,e}_p(z_p) := & \langle f'_p(z^{t,e}_p), z_p \rangle   + \textstyle \frac{1}{2\eta^{t}} \| z_p - z_p^{t,e}\|^2 \nonumber \\
  & + \textstyle \frac{\rho^t}{2} \|w^{t+1}-z_p + \frac{1}{\rho^t}(\lambda^t_p - \tilde{\xi}^{t,e}_p) \|^2 \label{fn_G}
\end{align}
and the feasible region of \eqref{DPADMM-2-Prox} by
\begin{align*}
     \mathcal{W} = \{ z_p \in \mathbb{R}^{J \times K} : h_m (z_p) \leq 0, \ \forall m \in [M] \}, 
\end{align*}
where $h_m$ is convex and twice continuously differentiable and $M$ is the total number of inequalities.

By utilizing an indicator function $\mathcal{I}_{\mathcal{W}}(z_p)$ that outputs zero if $z_p \in \mathcal{W}$ and $\infty$ otherwise, \eqref{DPADMM-2-Prox} can be expressed by the following problem:
\begin{align*}
\min_{z_p \in \mathbb{R}^{J \times K}} \ G^{t,e}_p (z_p) + \mathcal{I}_{\mathcal{W}}(z_p).
\end{align*}
We note that the indicator function can be approximated by the following function:
\begin{align}
& g_{\ell} (z_p) :=  \sum_{m=1}^M \ln ( 1+ e^{\ell h_m(z_p)}), \label{logfn-prox}  
\end{align}
where $\ell > 0$. 
Increasing $\ell$ enforces the feasibility, namely, $h_m(z_p) \leq 0$, resulting in $g_{\ell}(z_p) \rightarrow 0$.
It is similar to the logarithmic barrier function (LBF), namely $-(1/\ell) \sum_{m=1}^M \ln (-h_m(z_p))$, in that the approximation becomes closer to the indicator function as $\ell \rightarrow \infty$.
However, the penalty function $g_{\ell}$ is different from LBF in that the domain of $z_p$ is not restricted.
% the output of $g$ exists even when $h_m(z_p) > 0$.
By replacing the indicator function with the penalty function in \eqref{logfn-prox}, we construct the following \textit{unconstrained} problem:
\begin{align}
z^{t,e+1}_{p\ell} \leftarrow  \argmin_{z_p \in \mathbb{R}^{J \times K}} \ & G^{t,e}_p (z_p)  + g_{\ell}(z_p), \label{ADMM-2-Prox-log}
\end{align}
where the objective function is strongly convex because $g_{\ell}$ is convex over all domains and $G^{t,e}_p$ is strongly convex. Therefore, $z^{t,e+1}_{p\ell}$ is the unique optimal solution.
We first show that \eqref{ADMM-2-Prox-log} satisfies the pointwise convergence condition and provides $\bar{\epsilon}$-DP as in Propositions \ref{prop:pointwise_convergence} and \ref{prop:dpinapproximation}, respectively.
\begin{proposition}\label{prop:pointwise_convergence}
  For fixed $t$, $e$, and $p$,  $\lim_{\ell \rightarrow \infty} z^{t,e+1}_{p \ell}  = z_p^{t,e+1}$, where $z_p^{t,e+1}$ and $z^{t,e+1}_{p \ell}$ are from \eqref{DPADMM-2-Prox} and \eqref{ADMM-2-Prox-log}, respectively.
\end{proposition}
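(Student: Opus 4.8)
The plan is to treat the unconstrained problem \eqref{ADMM-2-Prox-log} as an exterior penalty relaxation of the constrained problem \eqref{DPADMM-2-Prox} and to invoke the classical convergence argument for penalty methods. Write $G := G^{t,e}_p$ for the objective in \eqref{fn_G}, $z^* := z^{t,e+1}_p$ for the unique minimizer of $G$ over $\mathcal{W}$, and $z_\ell := z^{t,e+1}_{p\ell}$ for the unique minimizer of $G + g_\ell$ over $\mathbb{R}^{J\times K}$; both minimizers exist and are unique because $G$ is $\mu$-strongly convex with $\mu = \frac{1}{\eta^{t}}+\rho^t$ (the two quadratic terms in \eqref{fn_G}), while $g_\ell$ in \eqref{logfn-prox} is convex and nonnegative. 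I would establish the proposition by showing (i) the sequence $\{z_\ell\}_\ell$ is bounded, (ii) every limit point lies in $\mathcal{W}$, and (iii) every limit point minimizes $G$ over $\mathcal{W}$. By uniqueness of $z^*$ this forces every limit point to equal $z^*$, and boundedness then upgrades subsequential convergence to $z_\ell \to z^*$.

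For (i) and (ii) the workhorse is the optimality inequality $G(z_\ell)+g_\ell(z_\ell)\le G(z^*)+g_\ell(z^*)$ together with the uniform bound $g_\ell(z^*)\le M\ln 2$, which holds because $h_m(z^*)\le 0$ gives $e^{\ell h_m(z^*)}\le 1$, so each summand in \eqref{logfn-prox} is at most $\ln 2$. Since $g_\ell\ge 0$, this yields $G(z_\ell)\le G(z^*)+M\ln 2$ for all $\ell$; as $G$ is strongly convex and hence coercive, its sublevel sets are bounded and (i) follows. For (ii), suppose $z_{\ell_k}\to\bar z$ with $h_m(\bar z)>0$ for some $m$; continuity of $h_m$ gives $h_m(z_{\ell_k})\ge\delta>0$ for large $k$, so $g_{\ell_k}(z_{\ell_k})\ge\ln(1+e^{\ell_k\delta})\to\infty$, contradicting the bound $g_{\ell_k}(z_{\ell_k})\le G(z^*)+M\ln 2 - G(z_{\ell_k})$, whose right-hand side is bounded since $\{z_{\ell_k}\}$ is bounded and $G$ is continuous. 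Hence $\bar z\in\mathcal{W}$.

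The main obstacle is step (iii): the naive comparison $G(\bar z)\le\lim_k\bigl[G(z^*)+g_{\ell_k}(z^*)\bigr]$ only gives $G(\bar z)\le G(z^*)+(\text{number of active constraints})\ln 2$, because the penalty $g_\ell(z^*)$ does not vanish at an active constraint (where $h_m(z^*)=0$ and the summand equals $\ln 2$). I would circumvent this by approximating $z^*$ from the strict interior. Assuming $\mathcal{W}$ satisfies Slater's condition, fix $\hat z$ with $h_m(\hat z)<0$ for all $m$ and set $z^*_\theta:=(1-\theta)z^*+\theta\hat z$ for $\theta\in(0,1]$; convexity of each $h_m$ gives $h_m(z^*_\theta)\le\theta\,h_m(\hat z)<0$, so $z^*_\theta$ is strictly feasible and $g_\ell(z^*_\theta)\to 0$ as $\ell\to\infty$. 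Applying the optimality inequality with comparison point $z^*_\theta$ gives $G(z_{\ell_k})\le G(z^*_\theta)+g_{\ell_k}(z^*_\theta)$; letting $k\to\infty$ yields $G(\bar z)\le G(z^*_\theta)$, and then $\theta\to 0$ with $z^*_\theta\to z^*$ and continuity of $G$ yields $G(\bar z)\le G(z^*)$. Since $\bar z\in\mathcal{W}$ and $z^*$ minimizes $G$ over $\mathcal{W}$, this forces $G(\bar z)=G(z^*)$, and strong convexity (uniqueness) gives $\bar z=z^*$.

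Since $\{z_\ell\}$ is bounded and its only possible limit point is $z^*$, the whole sequence converges to $z^*$, which is exactly Proposition \ref{prop:pointwise_convergence}. The only genuine subtlety is the non-vanishing boundary penalty in (iii); if one prefers to avoid Slater's condition, an alternative is a first-order/KKT argument showing that the approximate multipliers $\nu^\ell_m := \frac{\ell\,e^{\ell h_m(z_\ell)}}{1+e^{\ell h_m(z_\ell)}}\ge 0$ in the stationarity condition $\nabla G(z_\ell)+\sum_{m=1}^M \nu^\ell_m\,\nabla h_m(z_\ell)=0$ of \eqref{ADMM-2-Prox-log} stay bounded and that this identity passes to the limit into the KKT system of \eqref{DPADMM-2-Prox}, but the subsequence-plus-strict-feasibility route above is the cleanest.
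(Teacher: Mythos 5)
Your proof is correct, and its core mechanism is the same as the paper's: compare the penalized objective at $z^{t,e+1}_{p\ell}$ against a \emph{strictly} feasible point near $z^{t,e+1}_p$ (so that $g_\ell$ vanishes there as $\ell\to\infty$), and use strong convexity of $G^{t,e}_p$ to convert the resulting objective-value gap into convergence of the minimizers. The paper organizes this as a contradiction: it supposes $z^{t,e+1}_{p\ell}\to\widehat z\neq z^{t,e+1}_p$, derives $G^{t,e}_p(z^{t,e+1}_{p\ell})+g_\ell(z^{t,e+1}_{p\ell})-G^{t,e}_p(z^{t,e+1}_p)>\mu\zeta^2/2$ from strong convexity, and contradicts it using a point $\tilde z\in\mathcal B_\delta(z^{t,e+1}_p)\cap\mathbf{relint}(\mathcal W)$ in place of your segment $z^*_\theta=(1-\theta)z^*+\theta\hat z$; both devices rest on the same implicit Slater-type hypothesis, which you at least state explicitly. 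Where your write-up genuinely improves on the paper's is that the contradiction framing presupposes the sequence $\{z^{t,e+1}_{p\ell}\}$ converges to \emph{something}, whereas your three-step penalty argument (boundedness via $g_\ell(z^*)\le M\ln 2$ and coercivity, feasibility of limit points, optimality of limit points, then uniqueness) rules out non-convergent behavior as well. The cost is a slightly longer argument; what it buys is a complete proof of the stated limit rather than only the implication ``if the sequence converges, its limit is $z^{t,e+1}_p$.'' Your step (iii) correctly identifies and repairs the subtlety that $g_\ell$ does not vanish at active constraints, which is precisely why both proofs must detour through strictly feasible comparison points.
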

\begin{proof}
See Appendix \ref{apx-prop:pointwise_convergence}.
\end{proof}

\begin{proposition} \label{prop:dpinapproximation}
  For fixed $t$, $e$, $p$, $\ell$, and the dataset $\mathcal{D}_p$, we denote by $z^{t,e+1}_{p\ell}(\mathcal{D}_p)$ the optimal solution of \eqref{ADMM-2-Prox-log}. It provides $\bar{\epsilon}$-DP that satisfies
  \begin{align}
    \label{DP_1}
  \textstyle  \Big| \ln \Big(\frac{\mathbb{P} ( z^{t,e+1}_{p \ell} (\mathcal{D}_p )\in \mathcal{S} )}{\mathbb{P} ( z^{t,e+1}_{p \ell} (\mathcal{D}'_p )\in \mathcal{S} )} \Big)\Big|   \leq  \bar{\epsilon} ,
  \end{align}
  for all $\mathcal{S} \subset \mathbb{R}^{J \times K}$ and $\mathcal{D}'_p \in \widehat{\mathcal{D}}_p$, where $\widehat{\mathcal{D}}_p$ is from \eqref{DataCollect}.
\end{proposition}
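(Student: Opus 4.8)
The plan is to follow the standard objective-perturbation template (as in Chaudhuri et al.\ and the source of Lemma~\ref{lemma:theorem1}), exploiting the fact that the data-dependent part of the objective in \eqref{ADMM-2-Prox-log} is \emph{linear} in $z_p$. First I would record that the objective $G^{t,e}_p + g_\ell$ is strongly convex (by the remark following \eqref{ADMM-2-Prox-log}) and, since each $h_m$ is twice continuously differentiable, so is $g_\ell$ from \eqref{logfn-prox}; hence the objective is $C^2$ and its unique minimizer $z^{t,e+1}_{p\ell}$ is characterized by the stationarity condition $\nabla G^{t,e}_p(z_p) + \nabla g_\ell(z_p) = 0$.

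Next I would solve this stationarity condition for the noise. Because $\tilde{\xi}^{t,e}_p$ enters the quadratic term of $G^{t,e}_p$ in \eqref{fn_G} linearly, the condition rearranges into an explicit expression $\tilde{\xi}^{t,e}_p = \Phi(z_p)$, where
\[\Phi(z_p) = -f'_p(z^{t,e}_p;\mathcal{D}_p) - \tfrac{1}{\eta^t}(z_p - z^{t,e}_p) - \rho^t z_p + \rho^t w^{t+1} + \lambda^t_p - \nabla g_\ell(z_p).\]
The crucial observation is that the dataset $\mathcal{D}_p$ enters $\Phi$ \emph{only} through the constant term $f'_p(z^{t,e}_p;\mathcal{D}_p)$, which does not depend on $z_p$. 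Consequently the Jacobian $\partial\Phi/\partial z_p = -(\tfrac{1}{\eta^t}+\rho^t)\mathbb{I} - \nabla^2 g_\ell(z_p)$ is symmetric negative definite, hence nonsingular, and in particular \emph{independent of the dataset}. Since $-\Phi$ is the gradient of a strongly convex $C^2$ function, it is a diffeomorphism of $\mathbb{R}^{J\times K}$ onto itself, so the map from the noise to the optimal solution is a bijection.

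Then I would apply the change-of-variables formula to transform the Laplace density of $\tilde{\xi}^{t,e}_p$ in \eqref{Laplace-pdf} into the density of $z^{t,e+1}_{p\ell}(\mathcal{D}_p)$. Forming the ratio of these densities at a fixed output $z_p$ for the two datasets, the Jacobian determinants $|\det(\partial\Phi/\partial z_p)|$ cancel exactly, precisely because the Jacobian is identical for $\mathcal{D}_p$ and $\mathcal{D}'_p$. What remains is the ratio of Laplace densities evaluated at $\tilde{\xi} := \Phi(z_p)$ under $\mathcal{D}_p$ and $\tilde{\xi}' := \Phi(z_p)$ under $\mathcal{D}'_p$, and from the expression for $\Phi$ these differ only by $\tilde{\xi} - \tilde{\xi}' = -\big(f'_p(z^{t,e}_p;\mathcal{D}_p) - f'_p(z^{t,e}_p;\mathcal{D}'_p)\big)$.

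Finally I would bound this ratio. Using \eqref{Laplace-pdf}, the log-density-ratio equals $\tfrac{\bar{\epsilon}}{\bar{\Delta}^{t,e}_p}\big(\|\tilde{\xi}'\|_1 - \|\tilde{\xi}\|_1\big)$, and the reverse triangle inequality together with the sensitivity definition \eqref{Delta} gives $\big|\,\|\tilde{\xi}'\|_1 - \|\tilde{\xi}\|_1\,\big| \le \|\tilde{\xi} - \tilde{\xi}'\|_1 = \|f'_p(z^{t,e}_p;\mathcal{D}_p) - f'_p(z^{t,e}_p;\mathcal{D}'_p)\|_1 \le \bar{\Delta}^{t,e}_p$. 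Hence the pointwise density ratio lies in $[e^{-\bar{\epsilon}}, e^{\bar{\epsilon}}]$ for every $z_p$; integrating over any $\mathcal{S}$ then yields \eqref{DP_1}. The main obstacle is the middle step: verifying that $\Phi$ is a genuine diffeomorphism with dataset-independent Jacobian. This is exactly where the first-order (linearized) approximation of $f_p$ pays off, since a nonlinear data-dependent term would leave a residual Jacobian-determinant ratio that would itself need to be controlled.
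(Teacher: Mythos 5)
Your proposal is correct and follows essentially the same route as the paper's proof: solving the stationarity condition of \eqref{ADMM-2-Prox-log} to obtain the bijective (data-independent-Jacobian) map from the output to the noise, applying the change-of-variables/inverse-function theorem so the Jacobian determinants cancel, and bounding the resulting ratio of Laplace densities via the triangle inequality and the sensitivity \eqref{Delta}. The only cosmetic difference is that the paper argues bijectivity directly from uniqueness of the minimizer rather than from the gradient of a strongly convex $C^2$ function being a diffeomorphism, but the content is identical.
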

\begin{proof}
See Appendix \ref{apx-prop:dpinapproximation}.
\end{proof}

Based on Propositions \ref{prop:pointwise_convergence} and \ref{prop:dpinapproximation}, Lemma \ref{lemma:theorem1} can be used for proving the following theorem.
\begin{theorem} \label{thm:privacy}
  For fixed $t$, $e$, $p$, and the dataset $\mathcal{D}_p$, we denote by $z^{t,e+1}_p(\mathcal{D}_p)$ the optimal solution of \eqref{DPADMM-2-Prox}. 
  It provides $\bar{\epsilon}$-DP that satisfies
  \begin{align*}
  \textstyle  \Big| \ln \Big(\frac{\mathbb{P} ( z^{t,e+1}_{p} (\mathcal{D}_p )\in \mathcal{S} )}{\mathbb{P} ( z^{t,e+1}_{p} (\mathcal{D}'_p )\in \mathcal{S} )} \Big)\Big|   \leq  \bar{\epsilon}, 
  \end{align*}
  for all $\mathcal{S} \subset \mathbb{R}^{J \times K}$ and $\mathcal{D}'_p \in \widehat{\mathcal{D}}_p$, where $\widehat{\mathcal{D}}_p$ is from \eqref{DataCollect}.
\end{theorem}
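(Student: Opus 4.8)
The plan is to invoke the transfer lemma (Lemma \ref{lemma:theorem1}) directly, identifying the randomized algorithm $\mathcal{A}$ with the map returning the optimal solution $z^{t,e+1}_p(\mathcal{D}_p)$ of the constrained subproblem \eqref{DPADMM-2-Prox}, and identifying the approximating sequence $\{\mathcal{A}_\ell\}_{\ell=1}^{\infty}$ with the maps returning $z^{t,e+1}_{p\ell}(\mathcal{D}_p)$ from the penalized unconstrained problem \eqref{ADMM-2-Prox-log}. All of these algorithms are induced by the same Laplace noise $\tilde{\xi}^{t,e}_p$ sampled from \eqref{Laplace}, since the noise enters only through the data-independent term $\frac{1}{2\rho^t}\|\tilde{\xi}^{t,e}_p\|^2 - \langle \cdot, \tilde{\xi}^{t,e}_p \rangle$ common to both formulations. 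Because $G^{t,e}_p$ is strongly convex and $g_\ell$ is convex, each minimizer is unique, so $\phi(\mathcal{D}_p,\tilde{\xi}^{t,e}_p) := z^{t,e+1}_p(\mathcal{D}_p)$ and $\phi^{\ell}(\mathcal{D}_p,\tilde{\xi}^{t,e}_p) := z^{t,e+1}_{p\ell}(\mathcal{D}_p)$ are well-defined.

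Next I would verify the two hypotheses of Lemma \ref{lemma:theorem1}. That every member of the sequence is $\bar{\epsilon}$-DP is exactly the content of Proposition \ref{prop:dpinapproximation}, which supplies the bound \eqref{DP_1} uniformly in $\ell$, for all measurable $\mathcal{S}$ and all adjacent datasets $\mathcal{D}'_p \in \widehat{\mathcal{D}}_p$. The pointwise convergence requirement $\lim_{\ell\to\infty}\phi^{\ell}(\mathcal{D}_p,\tilde{\xi}^{t,e}_p) = \phi(\mathcal{D}_p,\tilde{\xi}^{t,e}_p)$ is exactly Proposition \ref{prop:pointwise_convergence}, established for each fixed realization of the noise and each fixed dataset. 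With both hypotheses in place, Lemma \ref{lemma:theorem1} yields that $\mathcal{A}$ is $\bar{\epsilon}$-DP, which is precisely the claimed inequality for $z^{t,e+1}_p(\mathcal{D}_p)$.

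The step requiring most care is ensuring that the two propositions are applied to a \emph{common} randomization, so that the transfer lemma is applicable as stated: the DP guarantee and the pointwise limit must both hold for the same noise variable $\tilde{\xi}^{t,e}_p$, not for independently drawn noises. I would make this explicit by observing that $z^{t,e+1}_p$ and $z^{t,e+1}_{p\ell}$ differ only in replacing the hard constraint $z_p \in \mathcal{W}$ with the smooth penalty $g_\ell$, while the noise-dependent part of the objective is identical; hence a single sample of $\tilde{\xi}^{t,e}_p$ simultaneously determines the whole sequence and its limit. The remainder is bookkeeping: fixing $t$, $e$, and $p$ throughout (as the statement permits) and matching the notation of Lemma \ref{lemma:theorem1} via $\mathcal{D} = \mathcal{D}_p$ and $\tilde{\xi} = \tilde{\xi}^{t,e}_p$.
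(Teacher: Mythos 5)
Your proposal is correct and follows exactly the paper's own route: the paper proves Theorem \ref{thm:privacy} by invoking Lemma \ref{lemma:theorem1} with the $\bar{\epsilon}$-DP of each penalized problem \eqref{ADMM-2-Prox-log} supplied by Proposition \ref{prop:dpinapproximation} and the pointwise convergence supplied by Proposition \ref{prop:pointwise_convergence}. Your additional remark that both hypotheses must be verified for a common realization of $\tilde{\xi}^{t,e}_p$ is a point the paper leaves implicit, and making it explicit is a welcome clarification rather than a deviation.
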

% \begin{proof}
% \end{proof}
\begin{remark}
Theorem \ref{thm:privacy} shows that $\bar{\epsilon}$-DP is guaranteed for every iteration of Algorithm \ref{algo:DP-IADMM-Prox}. This result can be extended by introducing the existing composition theorem in \cite{dwork2014algorithmic} to ensure $\bar{\epsilon}$-DP for the entire process of the algorithm.
\end{remark}

\subsection{Convergence Analysis} \label{sec:convergence}
In this section we show that a sequence of iterates generated by Algorithm \ref{algo:DP-IADMM-Prox} converges to an optimal solution of \eqref{ERM_1} in \textit{expectation} under the following assumptions.
\begin{assumption}\label{assump:convergence}  
  \noindent
  \begin{enumerate}
    \item[(i)] $\rho^t$ in \eqref{DPADMM-2-Prox} satisfies $\rho^1 \leq \rho^2 \leq \ldots \leq \rho^T \leq \rho^{\text{max}}$.
    \item[(ii)] $\exists \gamma > 0 : \gamma \geq 2\| \lambda^*\|$, where $\lambda^*$ is a dual optimal. 
    \item[(iii)] $f_p$ in \eqref{def_fn_f} is $H$-Lipschitz over a set $\mathcal{W}$ with respect to the Euclidean norm.    
  \end{enumerate}  
\end{assumption}
Assumption \ref{assump:convergence} is typically used for the convergence analysis of ADMM and IADMM (see Chapter 15 of \cite{beck2017first}). 
We adopt the assumptions because IADMM is a special case of Algorithm \ref{algo:DP-IADMM-Prox} by setting $E=1$ and $\tilde{\xi}^{t,e}_p=0$.

Based on Assumption \ref{assump:convergence} (iii) used for bounding subgradients, we define the following bounds (see Appendix \ref{apx:existence_of_UBs} for details):
\begin{align} 
& \textstyle U_1 := \max_{u \in \mathcal{W}, p \in [P]} \| f'_p(u; \mathcal{D}_p) \|, \label{def_upper} \\
& \textstyle  U_2 := \max_{u, v \in \mathcal{W} } \|u-v\|, \nonumber \\
& \textstyle  U_3 := \max_{u \in \mathcal{W}, p \in [P], \mathcal{D}'_p \in \widehat{\mathcal{D}}_p} \| f'_p(u;\mathcal{D}_p) - f'_p(u;\mathcal{D}'_p)\|_1. \nonumber
\end{align}
In what follows, we show that the rate of convergence in expectation produced by Algorithm \ref{algo:DP-IADMM-Prox} is 
\begin{enumerate}
  \item[C1.] $O(\frac{1}{\bar{\epsilon} \sqrt{T}})$ when $f_p$ is smooth in Theorem \ref{thm:smooth};
  \item[C2.] $O(\frac{1}{\bar{\epsilon}^2 \sqrt{T}})$ when $f_p$ is nonsmooth in Theorem \ref{thm:nonsmooth}; and
  \item[C3.] $O(\frac{1}{\bar{\epsilon}^2 T})$ when $f_p$ is strongly convex in Theorem \ref{thm:strong}.
\end{enumerate}
The result in C3 requires additional assumptions:
\begin{assumption}\label{assump:convergence_stronglyconvex}
\noindent
\begin{enumerate}
\item[(i)] $\exists \ \gamma > 0: \| \lambda^t \| \leq \gamma, \ \forall t$.
\item[(ii)] $\rho^t \leq \frac{t}{t-1} \rho^{t-1}, \ \forall t$.
\end{enumerate}
\end{assumption}
Assumption \ref{assump:convergence_stronglyconvex} (i) can be strict in practice. As indicated in \cite{azadi2014towards}, however, it can be considered as a price that we have to pay for faster convergence (see Assumption 3 in \cite{azadi2014towards}).
Assumption \ref{assump:convergence_stronglyconvex} (ii) is not strict since it is satisfied with a constant penalty method (i.e., $\rho^t = \rho, \ \forall t$) that is commonly considered in the literature (e.g.,~\cite{beck2017first}).

\begin{theorem} \label{thm:smooth}
  Suppose that Assumption \ref{assump:convergence} holds, $f_p$ (defined in \eqref{def_fn_f}) is $L$-smooth convex, and 
  \begin{align}
    \eta^t  = 1/(L + \sqrt{t}/\bar{\epsilon}), \ \forall t, \label{smooth_proximity}
  \end{align}
  where $\bar{\epsilon} > 0$ is from \eqref{Laplace}.
  Then we have
  \begin{subequations}    
  \begin{align}
    & \mathbb{E} \Big[ F(z^{(T)}) - F(z^*) + \gamma \| Aw^{(T)} - z^{(T)} \| \Big] \leq R^{\text{S}} (\sqrt{T}, \bar{\epsilon})  \nonumber \\
    & + \frac{U_2^2  (\rho^{\text{max}} +L/E) +  (\gamma + \|\lambda^1\|)^2 / \rho^1 }{2T}, \label{smooth_inequality}
  \end{align}
  where 
  \begin{align}    
    R^{\text{S}} (\sqrt{T}, \bar{\epsilon}) :=  \frac{2 PJKU_3^2 + U_2^2/(2E) }{ \bar{\epsilon} \sqrt{T} }, \label{smooth_rhs_term}
  \end{align}
  \end{subequations}  
  $U_2$, $U_3$ are from \eqref{def_upper}, $z^*$ is an optimal solution of \eqref{ERM_1},
  \begin{subequations}
  \label{Common_Notation}   
  \begin{align}    
    & F(z) := \textstyle \sum_{p=1}^P f_p(z_p), \\    
    & A^{\top} := \begin{bmatrix}
        \mathbb{I}_J \ \cdots \  \mathbb{I}_J        
      \end{bmatrix}_{J \times PJ}, \\
    & w^{(T)} := \textstyle \frac{1}{T} \sum_{t=1}^T w^{t+1},                   
  \end{align}  
  \end{subequations}
  and
  \begin{align}
  z^{(T)} := \textstyle \frac{1}{TE} \sum_{t=1}^T \sum_{e=1}^E z^{t,e+1}. \label{smooth_z}
  \end{align}
\end{theorem}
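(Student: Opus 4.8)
The plan is to start from the first-order optimality condition of the strongly convex perturbed subproblem \eqref{DPADMM-2-Prox}, accumulate it across the inner loop $e\in[E]$ and the outer loop $t\in[T]$, telescope the proximal terms, and take expectation over the injected Laplace noise only at the very end. Because $z_p^{t,e+1}$ is the unique minimizer of $G_p^{t,e}$ in \eqref{fn_G} over the convex set $\mathcal{W}$, its variational inequality reads, for every $u\in\mathcal{W}$,
\begin{equation*}
\Big\langle f'_p(z_p^{t,e}) + \tfrac{1}{\eta^t}\big(z_p^{t,e+1}-z_p^{t,e}\big) - \rho^t\big(w^{t+1}-z_p^{t,e+1}\big) - \lambda_p^t + \tilde{\xi}_p^{t,e},\ u - z_p^{t,e+1}\Big\rangle \ge 0 .
\end{equation*}
First I would eliminate the subgradient using convexity, $\langle f'_p(z_p^{t,e}),\,z_p^{t,e}-u\rangle \le f_p(z_p^{t,e})-f_p(u)$, together with the $L$-smoothness descent bound $f_p(z_p^{t,e+1}) \le f_p(z_p^{t,e}) + \langle f'_p(z_p^{t,e}),\,z_p^{t,e+1}-z_p^{t,e}\rangle + \tfrac{L}{2}\|z_p^{t,e+1}-z_p^{t,e}\|^2$. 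This converts the optimality condition into a one-step estimate of $f_p(z_p^{t,e+1})-f_p(u)$ in terms of the proximal and consensus inner products, a smoothness residual $\tfrac{L}{2}\|z_p^{t,e+1}-z_p^{t,e}\|^2$, and the linear noise term $\langle\tilde{\xi}_p^{t,e},\,z_p^{t,e+1}-u\rangle$.

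Second, I would apply the three-point identity $\langle a-b,\,b-c\rangle=\tfrac12(\|a-c\|^2-\|a-b\|^2-\|b-c\|^2)$ to the $\tfrac{1}{\eta^t}$ and $\rho^t$ inner products, turning them into differences of squared distances. Summing over $e\in[E]$ telescopes the proximal centers within an iteration, and the resulting $-\tfrac{1}{2\eta^t}\|z_p^{t,e+1}-z_p^{t,e}\|^2$ absorbs both the smoothness residual (since $\tfrac{1}{\eta^t}\ge L$ by \eqref{smooth_proximity}) and, via Young's inequality $\langle\tilde{\xi}_p^{t,e},\,z_p^{t,e+1}-z_p^{t,e}\rangle\le\tfrac{\eta^t}{2}\|\tilde{\xi}_p^{t,e}\|^2+\tfrac{1}{2\eta^t}\|z_p^{t,e+1}-z_p^{t,e}\|^2$, the ``close'' part of the noise term, leaving only the zero-mean part $\langle\tilde{\xi}_p^{t,e},\,z_p^{t,e}-u\rangle$ and the residual $\tfrac{\eta^t}{2}\|\tilde{\xi}_p^{t,e}\|^2$. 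Using the averaging rule $z_p^{t+1}=\tfrac1E\sum_e z_p^{t,e+1}$ with convexity of $f_p$ (Jensen) and the definitions \eqref{Common_Notation}, \eqref{smooth_z} of $w^{(T)}$ and $z^{(T)}$, I would pass from the per-step iterates to the averaged quantities on the left-hand side.

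Third, I would assemble the constants and handle the dual coupling. Summing over $t$, the $\rho^t$ three-point terms combine with the dual recursion \eqref{ADMM-3} and the monotonicity $\rho^1\le\cdots\le\rho^{\text{max}}$ of Assumption \ref{assump:convergence}(i) to telescope, producing $U_2^2\rho^{\text{max}}+(\gamma+\|\lambda^1\|)^2/\rho^1$ after division by $T$, with $U_2$ bounding all iterate differences. The $\tfrac{1}{2\eta^t}$ proximal boundary term telescopes (using $z_p^{t,1}=z_p^{t-1,E+1}$ from line 16 of Algorithm \ref{algo:DP-IADMM-Prox}) to $\tfrac{1}{2\eta^T}\|\cdot\|^2\le\tfrac12(L+\sqrt{T}/\bar{\epsilon})U_2^2$; after the $\tfrac{1}{TE}$ normalization, its $L$-part contributes the $U_2^2 L/E$ inside the $\tfrac{1}{2T}$ bracket of \eqref{smooth_inequality} while its $\sqrt{T}/\bar{\epsilon}$-part contributes the $U_2^2/(2E)$ summand of $R^{\text{S}}$. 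Choosing $u=z^*$ and invoking the standard device with $\gamma\ge2\|\lambda^*\|$ from Assumption \ref{assump:convergence}(ii) promotes the Lagrangian inner product $\langle\lambda,\,Aw^{(T)}-z^{(T)}\rangle$ into the penalized consensus gap $\gamma\|Aw^{(T)}-z^{(T)}\|$ on the left.

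Finally I would take expectation over the Laplace noises. Since $\tilde{\xi}_p^{t,e}$ is sampled fresh and is zero-mean and independent of $(u,z_p^{t,e})$, the surviving linear term vanishes, $\mathbb{E}\langle\tilde{\xi}_p^{t,e},\,z_p^{t,e}-u\rangle=0$, while $\mathbb{E}\|\tilde{\xi}_p^{t,e}\|^2\le 2JKU_3^2/\bar{\epsilon}^2$ (per-coordinate variance $2(\bar{\Delta}_p^{t,e}/\bar{\epsilon})^2$ summed over the $JK$ coordinates, with $\bar{\Delta}_p^{t,e}\le U_3$). Substituting $\eta^t=1/(L+\sqrt{t}/\bar{\epsilon})\le\bar{\epsilon}/\sqrt{t}$ and $\sum_{t=1}^T 1/\sqrt{t}\le2\sqrt{T}$ into the accumulated residual $\tfrac{1}{TE}\sum_{t=1}^T\sum_{e=1}^E\sum_{p=1}^P\tfrac{\eta^t}{2}\mathbb{E}\|\tilde{\xi}_p^{t,e}\|^2$ produces exactly the $2PJKU_3^2/(\bar{\epsilon}\sqrt{T})$ leading piece of $R^{\text{S}}$ in \eqref{smooth_rhs_term}, completing \eqref{smooth_inequality}. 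I expect the main obstacle to be the bookkeeping of the inner loop: reconciling the multiple local updates (which share a single $w^{t+1}$ and $\lambda_p^t$ but move the proximal center $z_p^{t,e}$) with both the telescoping and the noise accumulation, so that the factors $1/E$ and $L/E$ emerge correctly and the per-step variance is not lost when passing through the average $\tfrac1E\sum_e z_p^{t,e+1}$; a secondary delicate point is promoting the Lagrangian term to $\gamma\|Aw^{(T)}-z^{(T)}\|$ without disturbing the $O(1/(\bar{\epsilon}\sqrt{T}))$ rate.
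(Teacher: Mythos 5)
Your overall architecture matches the paper's proof almost step for step (variational inequality of \eqref{DPADMM-2-Prox}, convexity plus the $L$-smoothness descent lemma, the three-point identity on the proximal and augmented terms, telescoping in $e$ and $t$, zero-mean cancellation of the linear noise term, and the Laplace variance bound $2JK U_3^2/\bar{\epsilon}^2$), and your final constants are the right ones. However, there are two concrete defects in the plan as written.

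First, your cancellation of the quadratic terms does not close. Writing $\Delta := z_p^{t,e+1}-z_p^{t,e}$, you collect $-\tfrac{1}{2\eta^t}\|\Delta\|^2$ from the three-point identity, $+\tfrac{L}{2}\|\Delta\|^2$ from the descent lemma, and $+\tfrac{1}{2\eta^t}\|\Delta\|^2$ from Young's inequality applied to $\langle\tilde{\xi}_p^{t,e},\Delta\rangle$ with modulus $1/\eta^t$; the sum is $+\tfrac{L}{2}\|\Delta\|^2$, not $\le 0$, and the condition $1/\eta^t\ge L$ does not help. Averaged over $t,e,p$ this leftover only admits the bound $O(PLU_2^2)$, a constant that destroys the rate. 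The paper avoids this by taking the Young modulus to be $1/\eta^t-L=\sqrt{t}/\bar{\epsilon}$, so that $\tfrac{1/\eta^t-L}{2}+\tfrac{L}{2}=\tfrac{1}{2\eta^t}$ exactly cancels the negative term and the noise coefficient becomes $\tfrac{1}{2(1/\eta^t-L)}=\tfrac{\bar{\epsilon}}{2\sqrt{t}}$ --- precisely the value you were using as an upper bound for $\eta^t/2$, so after this one-line repair your accounting of $R^{\text{S}}$ in \eqref{smooth_rhs_term} goes through unchanged.

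Second, you flag but do not resolve the mismatch between the per-inner-step iterates and the dual update: \eqref{ADMM-3} uses the averaged $z_p^{t+1}=\tfrac1E\sum_e z_p^{t,e+1}$, while the optimality condition of step $(t,e)$ produces the multiplier $\lambda_p^t+\rho^t(w^{t+1}-z_p^{t,e+1})$. Reconciling the two leaves the residual $\rho^t\cdot\tfrac1E\sum_{e}\langle z_p^{t+1}-z_p^{t,e+1},\,z_p^{t,e+1}-z_p\rangle$, and the claim that ``the $\rho^t$ three-point terms combine with the dual recursion to telescope'' silently assumes this residual is harmless. It is in fact non-positive, but that requires a separate argument: expanding $z_p^{t+1}$ as the average and symmetrizing over pairs $(e,e')$ reduces the sum to $-\tfrac{1}{E^2}\sum_{e<e'}\|z_p^{t,e'+1}-z_p^{t,e+1}\|^2\le 0$, which is the paper's \eqref{non_positive_trick}. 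Without this lemma the inner loop does not close, so you should state and prove it rather than list it as an anticipated obstacle.
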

\begin{proof}
See Appendix \ref{apx-thm:smooth}.
\end{proof}
According to Theorem 3.60 in \cite{beck2017first}, the inequality \eqref{smooth_inequality} derived under Assumption \ref{assump:convergence} implies that the rate of convergence in expectation is 
$\mathcal{O}(1/ (\bar{\epsilon} \sqrt{T}))$ for $\bar{\epsilon} \in (0, \infty)$, 
while in a nonprivate setting it is $\mathcal{O}(1/T)$ because $R^{\text{S}} (\sqrt{T}, \bar{\epsilon})$ in \eqref{smooth_rhs_term} is zero when $\bar{\epsilon}=\infty$.
 
\begin{theorem} \label{thm:nonsmooth}
Suppose that Assumption \ref{assump:convergence} holds, $f_p$ (defined in \eqref{def_fn_f}) is a nonsmooth convex function, and
\begin{align}
\eta^{t} = 1/\sqrt{t}, \ \forall t.  \label{nonsmooth_proximity}
\end{align}
Then we have
\begin{subequations}
\begin{align}
& \mathbb{E} \Big[ F(z^{(T)}) - F(z^*) + \gamma \| Aw^{(T)} - z^{(T)} \| \Big] \leq R^{\text{NS}}(\sqrt{T}, \bar{\epsilon}) \nonumber \\
& + \frac{U_2^2 \rho^{\text{max}} +  (\gamma + \|\lambda^1\|)^2 / \rho^1 + 2\gamma U_2}{2T},  \label{nonsmooth_inequality}
\end{align}  
where 
\begin{align}
  R^{\text{NS}} (\sqrt{T},\bar{\epsilon}) := \frac{ 2PJKU_3^2 /\bar{\epsilon}^2 + PU_1^2 + U_2^2 / (2E) }{ \sqrt{T} }, \label{nonsmooth_rhs_term}
\end{align}
\end{subequations}
$U_1$, $U_2$, $U_3$ are from \eqref{def_upper}, $z^*$ is an optimal solution of \eqref{ERM_1}, $F(z), A, w^{(T)}$ are from \eqref{Common_Notation}, and
\begin{align}
z^{(T)} := \textstyle \frac{1}{TE} \sum_{t=1}^T \sum_{e=1}^E z^{t,e}. \label{nonsmooth_z}
\end{align}
\end{theorem}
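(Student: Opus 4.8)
The plan is to derive a one-step inequality from the optimality conditions of the perturbed local subproblem \eqref{DPADMM-2-Prox} and then telescope it over the inner index $e \in [E]$ and the outer index $t \in [T]$. Since $z^{t,e+1}_p$ minimizes the strongly convex function $G^{t,e}_p$ over $\mathcal{W}$, I would first write the variational inequality $\langle f'_p(z^{t,e}_p) + \tfrac{1}{\eta^t}(z^{t,e+1}_p - z^{t,e}_p) - \rho^t(w^{t+1} - z^{t,e+1}_p) - \lambda^t_p + \tilde{\xi}^{t,e}_p,\ z_p - z^{t,e+1}_p\rangle \ge 0$ for every $z_p \in \mathcal{W}$, and specialize it to the test point $z_p = z^*_p$. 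Combining this with the convexity bound $f_p(z^{t,e}_p) - f_p(z^*_p) \le \langle f'_p(z^{t,e}_p),\, z^{t,e}_p - z^*_p\rangle$ (legitimate in the nonsmooth case since $f'_p$ is a subgradient) isolates the per-step suboptimality gap $f_p(z^{t,e}_p) - f_p(z^*_p)$.

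The next step is to convert the quadratic proximal and augmented-Lagrangian terms into telescoping differences via the law of cosines $\langle a-b,\, c-a\rangle = \tfrac12(\|c-b\|^2 - \|c-a\|^2 - \|a-b\|^2)$, producing differences in $\|z^*_p - z^{t,e}_p\|^2$ and in the dual-coupling terms. The crucial deviation from the smooth case (Theorem \ref{thm:smooth}) occurs here: lacking $L$-smoothness, I would not invoke a descent lemma but instead control the cross term $\langle f'_p(z^{t,e}_p),\, z^{t,e}_p - z^{t,e+1}_p\rangle$ by Young's inequality, trading it against the proximal term $\tfrac{1}{2\eta^t}\|z^{t,e+1}_p - z^{t,e}_p\|^2$ and leaving a residual $\tfrac{\eta^t}{2}\|f'_p(z^{t,e}_p)\|^2 \le \tfrac{\eta^t}{2}U_1^2$. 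With $\eta^t = 1/\sqrt{t}$ from \eqref{nonsmooth_proximity} and $\sum_{t=1}^T \tfrac{1}{\sqrt t} \le 2\sqrt T$, summing these residuals over $e$ and $t$ and dividing by $TE$ yields the $P U_1^2/\sqrt T$ contribution in $R^{\text{NS}}$; the telescoped distance terms, bounded using $\|z^*_p - z^{t,e}_p\| \le U_2$ from \eqref{def_upper}, produce the $U_2^2/(2E\sqrt T)$ contribution.

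I would then take expectation over the Laplace noise $\tilde{\xi}^{t,e}_p$. The noise enters linearly through $\langle \tilde{\xi}^{t,e}_p,\, z^*_p - z^{t,e+1}_p\rangle$; writing $z^{t,e+1}_p$ as a perturbation of its noise-free counterpart (which is independent of $\tilde{\xi}^{t,e}_p$), the zero-mean property of \eqref{Laplace} annihilates the term tested against the deterministic part, while $(\tfrac{1}{\eta^t}+\rho^t)$-strong convexity makes $z^{t,e+1}_p$ an $\eta^t$-Lipschitz function of $\tilde{\xi}^{t,e}_p$, so the self-term is bounded by $\eta^t\,\mathbb{E}\|\tilde{\xi}^{t,e}_p\|^2$. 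Using that each component of $\tilde{\xi}^{t,e}_p$ is Laplace with scale $\bar{\Delta}^{t,e}_p/\bar{\epsilon} \le U_3/\bar{\epsilon}$ gives $\mathbb{E}\|\tilde{\xi}^{t,e}_p\|^2 \le 2JK U_3^2/\bar{\epsilon}^2$, and summing over the $P$ agents and over $e,t$ delivers the $2PJK U_3^2/(\bar{\epsilon}^2 \sqrt T)$ term, which explains the $1/\bar{\epsilon}^2$ dependence that separates this bound from the smooth case.

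Finally, I would assemble the constraint-violation penalty. Following the augmented-Lagrangian argument of \cite{beck2017first}, I would reinstate the dual-coupling terms, use the dual recursion $\lambda^{t+1}_p = \lambda^t_p + \rho^t(w^{t+1} - z^{t+1}_p)$ to telescope $\|\lambda^t_p\|^2/\rho^t$, and invoke Assumption \ref{assump:convergence}(ii) ($\gamma \ge 2\|\lambda^*\|$) to bound $\max_{\|\lambda\|\le\gamma}\langle \lambda,\, Aw^{(T)} - z^{(T)}\rangle$ by $\gamma\|Aw^{(T)} - z^{(T)}\|$; this produces the $(\gamma + \|\lambda^1\|)^2/(2\rho^1 T)$ term, while the mismatch between the dual-update average $z^{t+1}_p = \tfrac1E\sum_e z^{t,e+1}_p$ and the primal average $z^{(T)}$ over the shifted index $z^{t,e}$ in \eqref{nonsmooth_z} contributes the $\gamma U_2/T$ term. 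Applying Jensen's inequality to pull the averaging inside $F$ and collecting all bounds gives \eqref{nonsmooth_inequality}. The main obstacle will be the expectation step: correctly quantifying the dependence of the minimizer on its own injected noise and summing the resulting $\eta^t$-weighted variances, together with reconciling the two different iterate averages in the dual telescoping so that the constants $(\gamma+\|\lambda^1\|)^2/\rho^1$ and $2\gamma U_2$ emerge exactly.
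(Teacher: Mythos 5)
Your proposal is correct and follows essentially the same route as the paper's proof: the variational inequality for \eqref{DPADMM-2-Prox}, the subgradient inequality, the identity \eqref{rule} to telescope the proximal and dual terms, Young's inequality in place of the descent lemma (yielding the $PU_1^2/\sqrt{T}$ residual), the non-positivity of the inner-iterate cross terms, maximization of $\lambda$ over the $\gamma$-ball, and the extra $2\gamma U_2$ term arising from the index shift $z^{t,e}$ versus $z^{t,e+1}$ in \eqref{nonsmooth_z}. The one genuine divergence is the expectation step. The paper adds and subtracts $\tilde{\xi}^{t,e}_p$ so that Young's inequality is applied to $\langle f'_p(z^{t,e}_p)+\tilde{\xi}^{t,e}_p,\,z^{t,e}_p-z^{t,e+1}_p\rangle$ as a whole --- folding the noise's second moment into the $\tfrac{\eta^t}{2}\|f'_p(z^{t,e}_p)+\tilde{\xi}^{t,e}_p\|^2$ term --- and leaves the remaining linear noise term tested against the \emph{pre-noise} iterate $z^{t,e}_p$, whose conditional expectation vanishes exactly. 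You instead keep the noise tested against the post-noise minimizer $z^{t,e+1}_p$ and control the self-interaction via the $(1/\eta^t+\rho^t)$-strong-convexity perturbation bound; this is also valid, but your self-term carries weight $\eta^t$ rather than $\eta^t/2$, so as written you would obtain $4PJKU_3^2/(\bar{\epsilon}^2\sqrt{T})$ in place of the stated $2PJKU_3^2/(\bar{\epsilon}^2\sqrt{T})$ unless you exploit the full modulus $1/\eta^t+\rho^t$ or rebalance Young's inequality. The rate and all other constants in \eqref{nonsmooth_inequality} come out as claimed.
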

\begin{proof}
  See Appendix \ref{apx-thm:nonsmooth}.
\end{proof}
The inequality \eqref{nonsmooth_inequality} derived under Assumption \ref{assump:convergence} implies that the rate of convergence in expectation is 
$\mathcal{O}(1/ (\bar{\epsilon}^2 \sqrt{T}))$ for $\bar{\epsilon} \in (0, \infty)$, 
while in a nonprivate setting it is $\mathcal{O}(1/\sqrt{T})$ because $R^{\text{NS}} (\sqrt{T}, \bar{\epsilon})$ in \eqref{nonsmooth_rhs_term} is $(PU_1^2 + U_2^2/(2E))/\sqrt{T}$ when $\bar{\epsilon}=\infty$. 

\begin{theorem} \label{thm:strong}
Suppose that Assumptions \ref{assump:convergence} and \ref{assump:convergence_stronglyconvex} hold, $f_p$ (defined in \eqref{def_fn_f}) is $\alpha$-strongly convex, and
\begin{align}
\eta^{t} = 2/(\alpha(t+2)), \ \forall t. 
\end{align}
Then we have
\begin{align}
  & \mathbb{E} \Big[ F(z^{(T)}) - F(z^*) + \gamma \| Aw^{(T)} - z^{(T)} \| \Big] \leq  \nonumber \\
  &\frac{1}{T+1} \Big\{2 U_2 \gamma + U_2^2 \rho^{\text{max}} + 4 \gamma^2 / \rho^1 + \alpha U_2^2/(2E)  \nonumber \\
  & + 2P(U_1^2 + 2JKU_3^2/\bar{\epsilon}^2)/\alpha \Big\}, \label{strong_inequality}
\end{align}
where $U_1$, $U_2$, $U_3$ are from \eqref{def_upper}, $z^*$ is an optimal solution, $F(z)$, $A$ are from \eqref{Common_Notation}, and 
\begin{align*}
  & w^{(T)} := \textstyle \frac{2}{T(T+1)} \sum_{t=1}^T t w^{t+1}, \\ 
  & z^{(T)} := \textstyle \frac{2}{T(T+1)} \sum_{t=1}^T t  (\frac{1}{E} \sum_{e=1}^E z^{t,e}).    
\end{align*}  
\end{theorem}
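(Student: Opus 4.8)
The plan is to follow the standard inexact-ADMM convergence template (as in Chapter 15 of \cite{beck2017first}), but to account for (i) the objective perturbation, (ii) the $E$ inner updates, and (iii) the $\alpha$-strong convexity that upgrades the rate from $O(1/\sqrt T)$ to $O(1/T)$. The starting point is the first-order optimality condition of the strongly convex subproblem \eqref{DPADMM-2-Prox}: for every $z_p\in\mathcal W$,
\[
\big\langle f'_p(z^{t,e}_p)+\tilde\xi^{t,e}_p+\tfrac{1}{\eta^t}(z^{t,e+1}_p-z^{t,e}_p)+\rho^t(z^{t,e+1}_p-w^{t+1})-\lambda^t_p,\ z_p-z^{t,e+1}_p\big\rangle\ge 0 .
\]
The key observation is that the Laplace noise enters the objective \emph{linearly and additively with the subgradient}, so I would treat $\tilde g^{t,e}_p:=f'_p(z^{t,e}_p)+\tilde\xi^{t,e}_p$ as a single effective subgradient.

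First I would derive a per-$(p,t,e)$ inequality for $f_p(z^{t,e}_p)-f_p(z^*)$. Writing $\langle \tilde g^{t,e}_p, z^{t,e}_p-z^*\rangle=\langle\tilde g^{t,e}_p,z^{t,e+1}_p-z^*\rangle+\langle\tilde g^{t,e}_p,z^{t,e}_p-z^{t,e+1}_p\rangle$, I bound the first term with the optimality condition above (via the three-point identity $\langle a-b,a-c\rangle=\tfrac12(\|a-b\|^2+\|a-c\|^2-\|b-c\|^2)$) and the second with Young's inequality. The crucial cancellation is that the negative proximal term $-\tfrac{1}{2\eta^t}\|z^{t,e+1}_p-z^{t,e}_p\|^2$ produced by the identity exactly cancels the positive $+\tfrac{1}{2\eta^t}\|z^{t,e+1}_p-z^{t,e}_p\|^2$ from Young, leaving only the error term $\tfrac{\eta^t}{2}\|\tilde g^{t,e}_p\|^2$. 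Invoking strong convexity, $f_p(z^{t,e}_p)-f_p(z^*)\le \langle f'_p(z^{t,e}_p),z^{t,e}_p-z^*\rangle-\tfrac{\alpha}{2}\|z^{t,e}_p-z^*\|^2$, and then taking expectation, the zero-mean terms $\langle\tilde\xi^{t,e}_p,z^{t,e}_p-z^*\rangle$ vanish (the noise is independent of $z^{t,e}_p$ and of $z^*$), while $\mathbb E\|\tilde g^{t,e}_p\|^2\le U_1^2+2JKU_3^2/\bar\epsilon^2$ using $\mathbb E[\tilde\xi^{t,e}_p]=0$, the Laplace variance, and $\bar\Delta^{t,e}_p\le U_3$.

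Next I would assemble the recursion. Summing over $e\in[E]$ and dividing by $E$, the combined coefficient $\tfrac{1}{2\eta^t}-\tfrac{\alpha}{2}=\tfrac{\alpha t}{4}$ on $\|z^{t,e}_p-z^*\|^2$ versus $\tfrac{1}{2\eta^t}=\tfrac{\alpha(t+2)}{4}$ on $\|z^{t,e+1}_p-z^*\|^2$ (a consequence of $\eta^t=2/(\alpha(t+2))$) telescopes the inner loop to the boundary iterates $z^{t,1}_p=z^{t-1,E+1}_p$ and $z^{t,E+1}_p$, discarding nonpositive leftovers. The dual-coupling terms $-\rho^t\langle z^{t,e+1}_p-w^{t+1}-\tfrac{1}{\rho^t}\lambda^t_p,\ z^{t,e+1}_p-z^*\rangle$ I would handle in the standard ADMM way: sum over $p$, eliminate the $w$-coupling with the server's first-order condition $\sum_p[\lambda^t_p+\rho^t(w^{t+1}-z^t_p)]=0$, and use the dual update $\rho^t(w^{t+1}-z^{t+1}_p)=\lambda^{t+1}_p-\lambda^t_p$ to set up a dual-distance telescope $\tfrac{1}{2\rho^t}(\|\lambda^t_p-\mu\|^2-\|\lambda^{t+1}_p-\mu\|^2)$ against an arbitrary reference $\mu$ with $\|\mu\|\le\gamma$. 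Because the dual update sees only the averaged iterate $z^{t+1}_p=\tfrac1E\sum_e z^{t,e+1}_p$, the deviation of the individual inner iterates from their average must be absorbed via the diameter bound $U_2$, which is the source of the $2U_2\gamma$ and $U_2^2\rho^{\max}$ constants.

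Finally I would apply the outer weight $t$, sum over $t\in[T]$, and normalize by $\tfrac{2}{T(T+1)}$. Here the two structural assumptions do the work: $\eta^t=2/(\alpha(t+2))$ makes the strong-convexity/proximal telescope leave a residual of exactly $\tfrac{\alpha U_2^2}{2E(T+1)}$, while Assumption \ref{assump:convergence_stronglyconvex}(ii) (equivalently, $\rho^t/t$ nonincreasing) together with $\|\lambda^t\|\le\gamma$ and $\|\mu\|\le\gamma$ from Assumption \ref{assump:convergence_stronglyconvex}(i) makes the dual telescope leave $\tfrac{4\gamma^2}{(T+1)\rho^1}$; the error terms $\tfrac{\eta^t}{2}(U_1^2+2JKU_3^2/\bar\epsilon^2)$ sum, via $\sum_t \tfrac{t}{t+2}\le T$, to $\tfrac{2P(U_1^2+2JKU_3^2/\bar\epsilon^2)}{\alpha(T+1)}$. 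To convert the weighted sums of $f_p(z^{t,e}_p)$ and of the residuals into $F(z^{(T)})-F(z^*)$ and $\gamma\|Aw^{(T)}-z^{(T)}\|$, I would invoke convexity of $F$ and of the norm (Jensen) with the $t$-weighted averages $w^{(T)},z^{(T)}$, and then replace $\langle\mu,\ Aw^{(T)}-z^{(T)}\rangle$ by its maximum $\gamma\|Aw^{(T)}-z^{(T)}\|$ over $\|\mu\|\le\gamma$. Collecting the four residuals yields \eqref{strong_inequality}. I expect the main obstacle to be the dual/consensus coupling step: reconciling the per-inner-iterate coupling terms with a dual update that observes only their average, while simultaneously making the $\rho^t$-weighted telescope compatible with the linear weight $t$; this is precisely where Assumption \ref{assump:convergence_stronglyconvex}(ii) is needed and where the $U_2$-dependent constants arise.
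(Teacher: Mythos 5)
Your proposal is correct and follows essentially the same route as the paper's proof in Appendix F: the per-$(t,e,p)$ inequality from the subproblem's optimality condition combined with Young's inequality (treating $f'_p(z^{t,e}_p)+\tilde{\xi}^{t,e}_p$ as the effective subgradient) and the $\alpha$-strong-convexity bound, the inner telescoping driven by $t(\tfrac{1}{2\eta^t}-\tfrac{\alpha}{2})=\tfrac{\alpha t^2}{4}$ versus $\tfrac{t}{2\eta^t}=\tfrac{\alpha t(t+2)}{4}$, the $\rho^t$-weighted dual telescope controlled by Assumption~\ref{assump:convergence_stronglyconvex}, the expectation step with $\mathbb{E}\|\tilde{\xi}^{t,e}_p\|^2\le 2JKU_3^2/\bar{\epsilon}^2$, and the final $t$-weighted Jensen plus maximization over $\|\lambda\|\le\gamma$. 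The only cosmetic difference is that the paper packages the consensus/dual coupling via a skew-symmetric matrix $G$ and auxiliary vector $\tilde{x}^t$, whereas you describe the same cancellation directly; your attribution of the $2U_2\gamma$ constant to the inner-average deviation is slightly imprecise (it actually arises from $z^{(T)}$ being built from $z^{t,e}$ rather than $z^{t,e+1}$, producing the $\langle\lambda,\sum_t t(z^{t+1}-z^t)\rangle$ term), but this does not affect the validity of the argument.
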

\begin{proof}
  See Appendix \ref{apx-thm:strong}. 
\end{proof}
The inequality \eqref{strong_inequality} derived under Assumptions \ref{assump:convergence} and \ref{assump:convergence_stronglyconvex} implies that the rate of convergence in expectation is 
$\mathcal{O}(1/ (\bar{\epsilon}^2 T))$ for $\bar{\epsilon} \in (0, \infty)$, 
while in a nonprivate setting it is $\mathcal{O}(1/T)$ when $\bar{\epsilon}=\infty$.

\begin{corollary}{\textbf{(Effect of the multiple local update)}} \label{cor:MultipleLocalUpdate}
Increasing the number $E$ of local updates decreases the values on the right-hand side of \eqref{smooth_inequality}, \eqref{nonsmooth_inequality}, and \eqref{strong_inequality}.
This implies that the gap between $F(z^{(T)})$ and $F(z^*)$ can become smaller by increasing $E$ for fixed $T$.
This may result in greater learning performance by introducing the multiple local updates, which will be numerically demonstrated in Section \ref{sec:experiments}.
\end{corollary}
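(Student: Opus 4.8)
The plan is to read this as a pure monotonicity statement about the three right-hand sides, since the actual analytic work has already been done in Theorems \ref{thm:smooth}--\ref{thm:strong}. Concretely, I would show that in each of \eqref{smooth_inequality}, \eqref{nonsmooth_inequality}, and \eqref{strong_inequality}, every term that involves $E$ has the form $c/E$ with $c>0$ a constant independent of $E$, while all remaining terms are independent of $E$; since $E \mapsto c/E$ is strictly decreasing on the positive integers, each right-hand side strictly decreases as $E$ grows. This reduces the corollary to bookkeeping.

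First I would isolate the $E$-dependent terms bound by bound. In the smooth case, combining $R^{\text{S}}(\sqrt{T},\bar\epsilon)$ from \eqref{smooth_rhs_term} with the second fraction in \eqref{smooth_inequality}, the only terms carrying $E$ are $\frac{U_2^2}{2E\bar\epsilon\sqrt{T}}$ and $\frac{U_2^2 L}{2ET}$. In the nonsmooth case, inspecting $R^{\text{NS}}(\sqrt{T},\bar\epsilon)$ in \eqref{nonsmooth_rhs_term}, the sole $E$-dependent term is $\frac{U_2^2}{2E\sqrt{T}}$. In the strongly convex case \eqref{strong_inequality}, the sole $E$-dependent term is $\frac{\alpha U_2^2}{2E(T+1)}$. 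In each instance $U_2>0$ and $L,\alpha,\bar\epsilon,T>0$, so every such term is a strictly positive constant divided by $E$ and hence strictly decreasing in $E$, while the remaining constants ($PJKU_3^2$, $PU_1^2$, $U_2^2\rho^{\text{max}}$, $(\gamma+\|\lambda^1\|)^2/\rho^1$, $2\gamma U_2$, etc.) do not change with $E$. Summing a strictly decreasing function with constants preserves strict monotonicity, which establishes the first and main assertion of the corollary.

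For the interpretive claim I would argue that dropping the nonnegative term $\gamma\|Aw^{(T)}-z^{(T)}\|$ from the left-hand side of each theorem yields $\mathbb{E}[F(z^{(T)})-F(z^*)] \le (\text{RHS})$, so the certified upper bound on the expected optimality gap inherits the same monotonic decrease in $E$. The point I would be careful to state precisely—and what I regard as the only real subtlety rather than an obstacle—is that the averaged iterates $z^{(T)}$ and $w^{(T)}$ themselves depend on $E$ through the inner averages $\frac1E\sum_{e}z^{t,e+1}$, and that $F(z^{(T)})-F(z^*)$ need not be individually nonnegative since $z^{(T)}$ may violate the consensus constraint \eqref{ERM_1-1}. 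Consequently the rigorous content is that the \emph{guaranteed} upper bound shrinks, not that the realized gap is pointwise monotone; this is exactly why the statement reads ``can become smaller.'' No delicate estimate is required beyond correctly tracking which constants carry the factor $1/E$ across the three bounds.
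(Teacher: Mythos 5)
Your proposal is correct and matches the paper's (implicit) justification: the paper offers no separate proof of Corollary \ref{cor:MultipleLocalUpdate}, treating it as a direct inspection of the bounds, and your bookkeeping correctly isolates the only $E$-dependent terms as $\frac{U_2^2}{2E\bar{\epsilon}\sqrt{T}}$ and $\frac{U_2^2 L}{2ET}$ in \eqref{smooth_inequality}, $\frac{U_2^2}{2E\sqrt{T}}$ in \eqref{nonsmooth_inequality}, and $\frac{\alpha U_2^2}{2E(T+1)}$ in \eqref{strong_inequality}, each of the form $c/E$ with $c\ge 0$ independent of $E$. Your added caveat that the iterates $z^{(T)}$, $w^{(T)}$ themselves depend on $E$, so that only the certified upper bound (and not the realized gap) is guaranteed to shrink, is a correct and slightly more careful reading of why the corollary says ``can become smaller.''
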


\section{Numerical Experiments} \label{sec:experiments}
In this section we compare Algorithm \ref{algo:DP-IADMM-Prox} with the state of the art in \cite{huang2019dp} as a baseline algorithm. The algorithm in \cite{huang2019dp} has demonstrated more accurate solutions than have the other existing DP algorithms, such as DP-SGD \cite{abadi2016deep}, DP-ADMM with the output perturbation method (Algorithm 2 in \cite{huang2019dp}), and DP-ADMM with the objective perturbation method \cite{zhang2016dynamic} (see Figure 6 in \cite{huang2019dp}).
Note that as a DP technique, the output perturbation method is used in the baseline algorithm in \cite{huang2019dp}, whereas the objective perturbation method is used in our algorithm.

We implemented the algorithms in Python and ran the experiments on Swing, a 6-node GPU computing cluster at Argonne National Laboratory. Each node of Swing has 8 NVIDIA A100 40 GB GPUs, as well as 128 CPU cores.
The implementation is available at \url{https://github.com/APPFL/DPFL-IADMM-Classification.git}.

\textbf{Algorithms.}
We denote 
\begin{itemize}
  \item the baseline algorithm in \cite{huang2019dp} by \texttt{OutP},
  \item Algorithm \ref{algo:DP-IADMM-Prox} with $E=1$ by \texttt{ObjP}, and
  \item Algorithm \ref{algo:DP-IADMM-Prox} with $E=10$ by \texttt{ObjPM}.
\end{itemize}
Note that \texttt{OutP} and \texttt{ObjP} are equivalent in a nonprivate setting.

\textbf{FL Model.}
We consider a multiclass logistic regression model (see Appendix \ref{apx:model} for details).

\textbf{Datasets.}
We consider two publicly available datasets for image classification: MNIST \cite{lecun1998mnist} and FEMNIST \cite{caldas2018leaf}.
For the MNIST dataset, we split the 60,000 training data points over $P=10$ agents, each of which is assigned to have the same number of independent and identically distributed (IID) dataset.
For the FEMNIST dataset, we follow the preprocess procedure\footnote{https://github.com/TalwalkarLab/leaf/tree/master/data/femnist} to sample 5\% of the entire 805,263 data points in a non-IID manner, resulting in 36,708 training samples distributed over $P=195$ agents.

% In Table \ref{Table:instance}, we summarize some input parameters of the two instances.
% \begin{table}[!ht]
%   \caption{Input parameters of MNIST and FEMNIST.}
%   \centering
%   \begin{tabular}{l|cccccc}
%   \toprule
%   & \# of data  & \# of features &  \# of classes & \# of agents & \multicolumn{2}{c}{\# of data per agent} \\
%    &  ($I$)  & ($J$) &  ($K$) & ($P$) & mean & stdev \\ \midrule
%   MNIST & 60000 & 784 & 10 & 10 & 6000 & 0 \\
%   FEMNIST$^{\texttt{A}}$ & 36708 & 784 & 62 & 195 & 188.25 & 87.99 \\
%   \bottomrule
%   \end{tabular}
%   \label{Table:instance}
%   \begin{tablenotes}\scriptsize
%     \item[] \texttt{A}. We extract 5\% of the FEMNIST training data.
%   \end{tablenotes}
% \end{table}

\textbf{Parameters.}
Under the multiclass logistic regression model, we can compute $\bar{\Delta}_p^{t,e}$ in \eqref{Delta} as
\begin{align*}
\bar{\Delta}_p^{t,e} = \max_{i^* \in [I_p]} \sum_{j=1}^J \sum_{k=1}^K \Big| \frac{1}{I} \big\{ x_{pi^*j} \big(h_k(z^{t,e}_p;x_{pi^*}) - y_{pi^* k} \big)  \big\} \Big|.
\end{align*}
Note that $\bar{\Delta}_p^{t,e}/\bar{\epsilon}$ is proportional to the standard deviation of the Laplace distribution in \eqref{Laplace-pdf}, thus controlling the noise level.
In the experiments, we consider various $\bar{\epsilon} \in \{ 0.05, 0.1, 1, 5 \}$, where stronger data privacy is achieved with smaller $\bar{\epsilon}$.

We emphasize that the baseline algorithm \texttt{OutP} guarantees $(\bar{\epsilon},\bar{\delta})$-DP, which provides stronger privacy as $\bar{\delta} > 0$ decreases for fixed $\bar{\epsilon}$, but still weaker than $\bar{\epsilon}$-DP.
In the experiments, we set $\bar{\delta}=10^{-6}$ for \texttt{OutP}.
In addition, we set the regularization parameter $\beta$ in \eqref{def_fn_f} by $\beta \leftarrow 10^{-6}$, as in \cite{huang2019dp}.

The parameter $\rho^t$ in Assumption \ref{assump:convergence} affects the learning performance because it controls the proximity of the local model parameters from the global model parameters.
For all algorithms, we set $\rho^t \leftarrow \hat{\rho}^t$ given by
\begin{align}
  & \hat{\rho}^t := \min \{ 10^9, \ c_1 (1.2)^{\lfloor t/T_c \rfloor} + c_2/\bar{\epsilon}\}, \ \forall t \in [T], \label{dynamic_rho}
\end{align}
where (i) $c_1=2$, $c_2=5$, and $T_c=10000$ for MNIST and (ii) $c_1=0.005$, $c_2=0.05$, and $T_c=2000$ for FEMNIST, which are chosen based on the justifications described in Appendix \ref{apx-hyperparameter-rho}. 
Note that the chosen parameter $\hat{\rho}^t$ is nondecreasing and bounded above, thus satisfying Assumption \ref{assump:convergence} (i).

\subsection{Comparison of Testing Errors} \label{sec:testing_errors}
Using the MNIST and FEMNIST datasets, we compare testing errors produced by \texttt{OutP}, \texttt{ObjP}, and \texttt{ObjPM} under various $\bar{\epsilon}$. 
We note that the testing errors produced by a nonprivate IADMM (i.e., Algorithm \ref{algo:DP-IADMM-Prox} with $\bar{\epsilon}=\infty$) with the multiclass logistic regression model on MNIST and FEMNIST are $9.1\%$ and $37.27\%$, respectively.

For each dataset and a given $\bar{\epsilon}$, we collect the testing errors for $10$ runs, each of which has different realizations of the random noises, but all of which guarantee the $\bar{\epsilon}$-DP on data.
In Figure \ref{fig:Testing_errors} we report the testing errors on average (solid line) with the $20$- and $80$-percentile confidence bounds (shaded) for every iteration $t \in [20000]$. 
The subfigures on the top and bottom rows are the testing error results for MNIST and FEMNIST, respectively.

\begin{figure*}[!ht]  
  \centering
  \begin{subfigure}[b]{0.24\textwidth}
      \centering      
      \includegraphics[width=\textwidth]{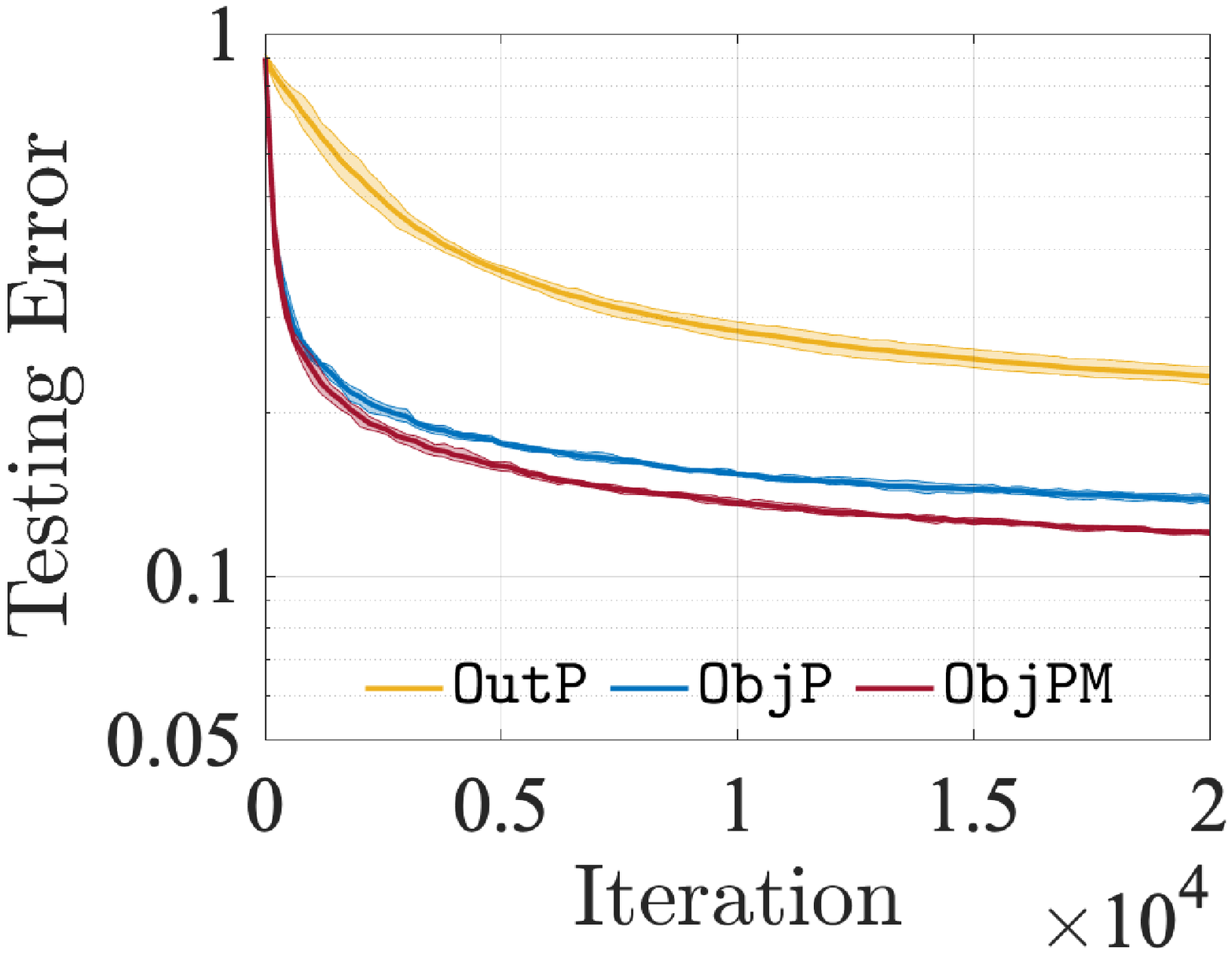}
      \includegraphics[width=\textwidth]{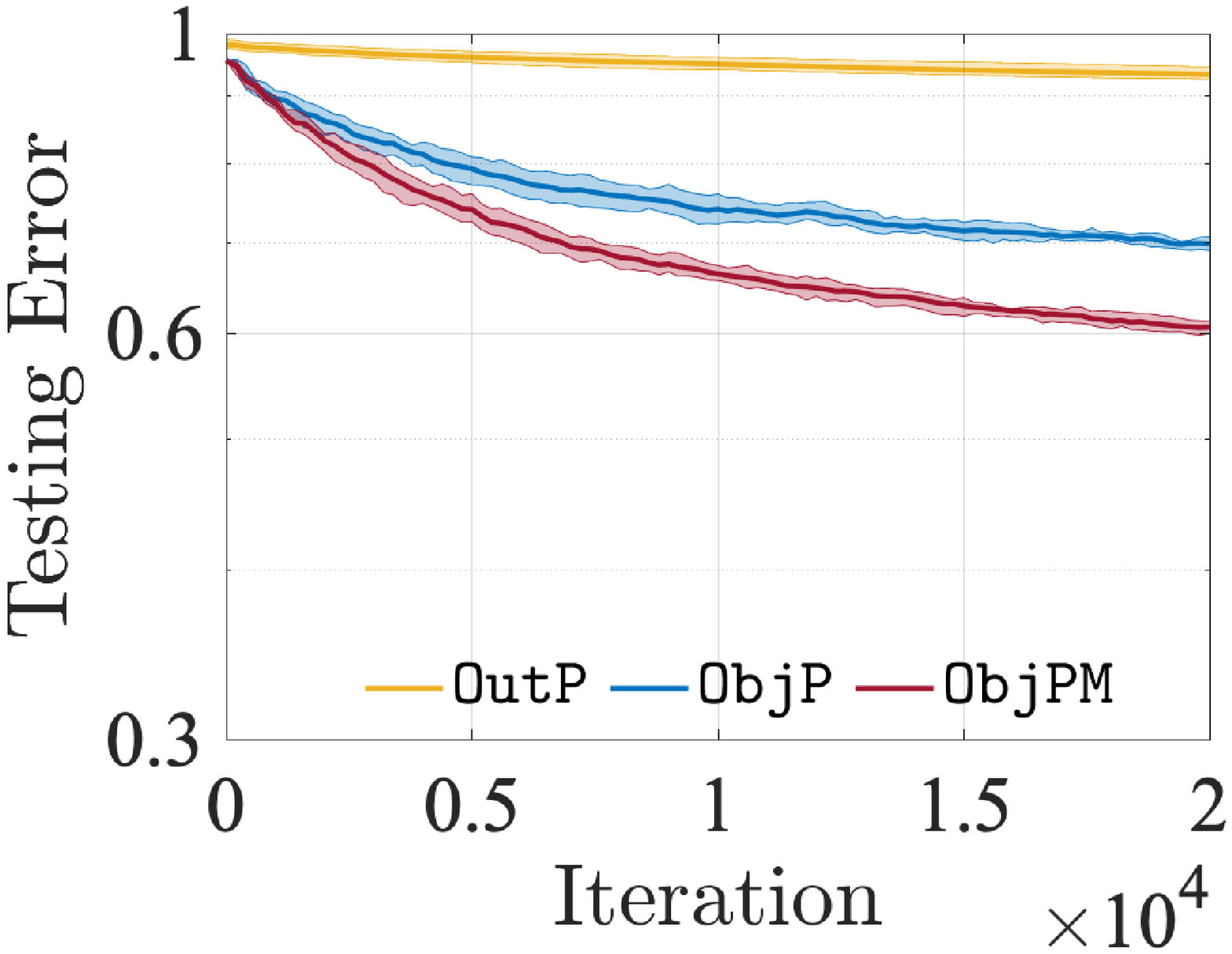}
      \caption{$\bar{\epsilon}=0.05$}
  \end{subfigure}
  \begin{subfigure}[b]{0.24\textwidth}
      \centering      
      \includegraphics[width=\textwidth]{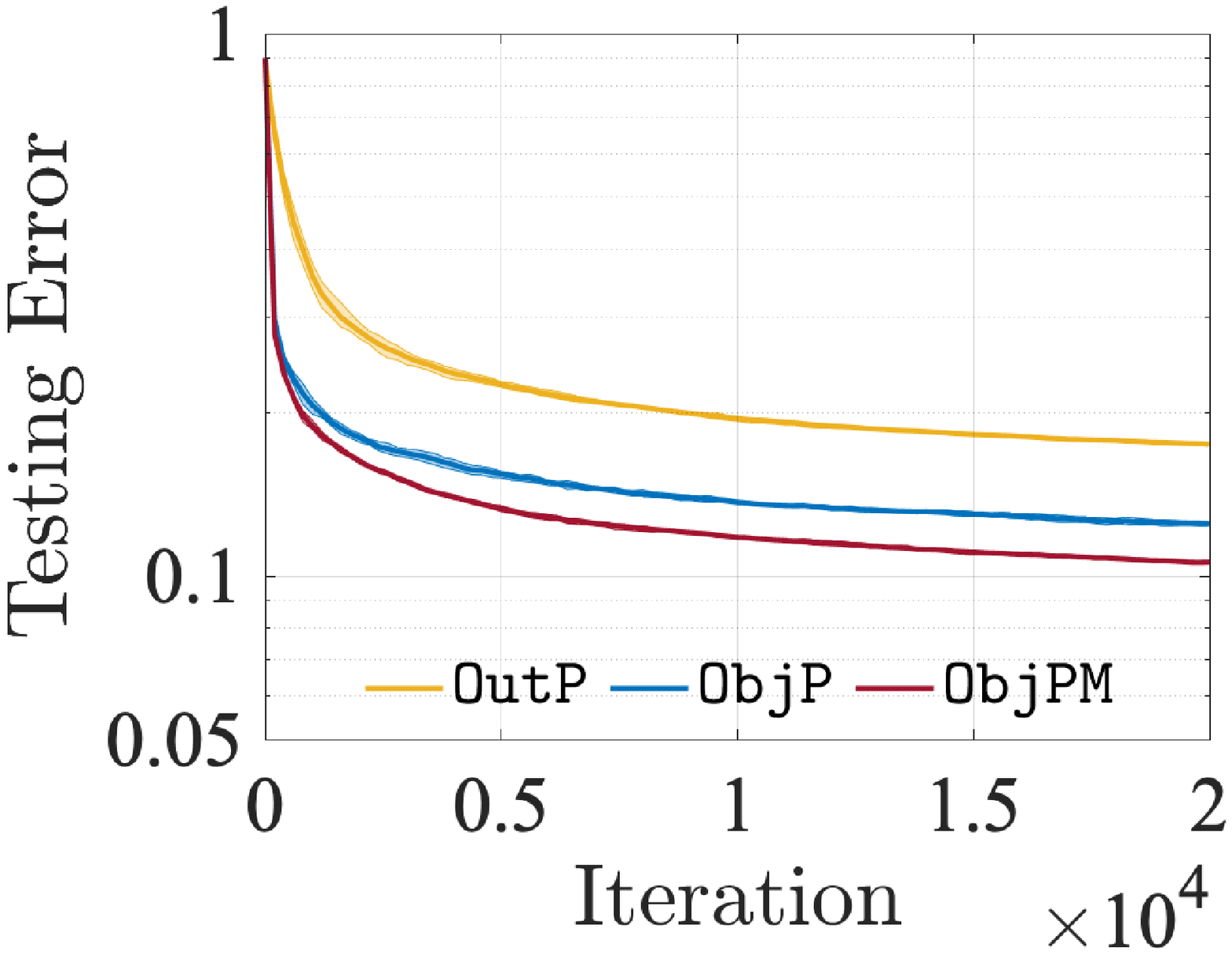}
      \includegraphics[width=\textwidth]{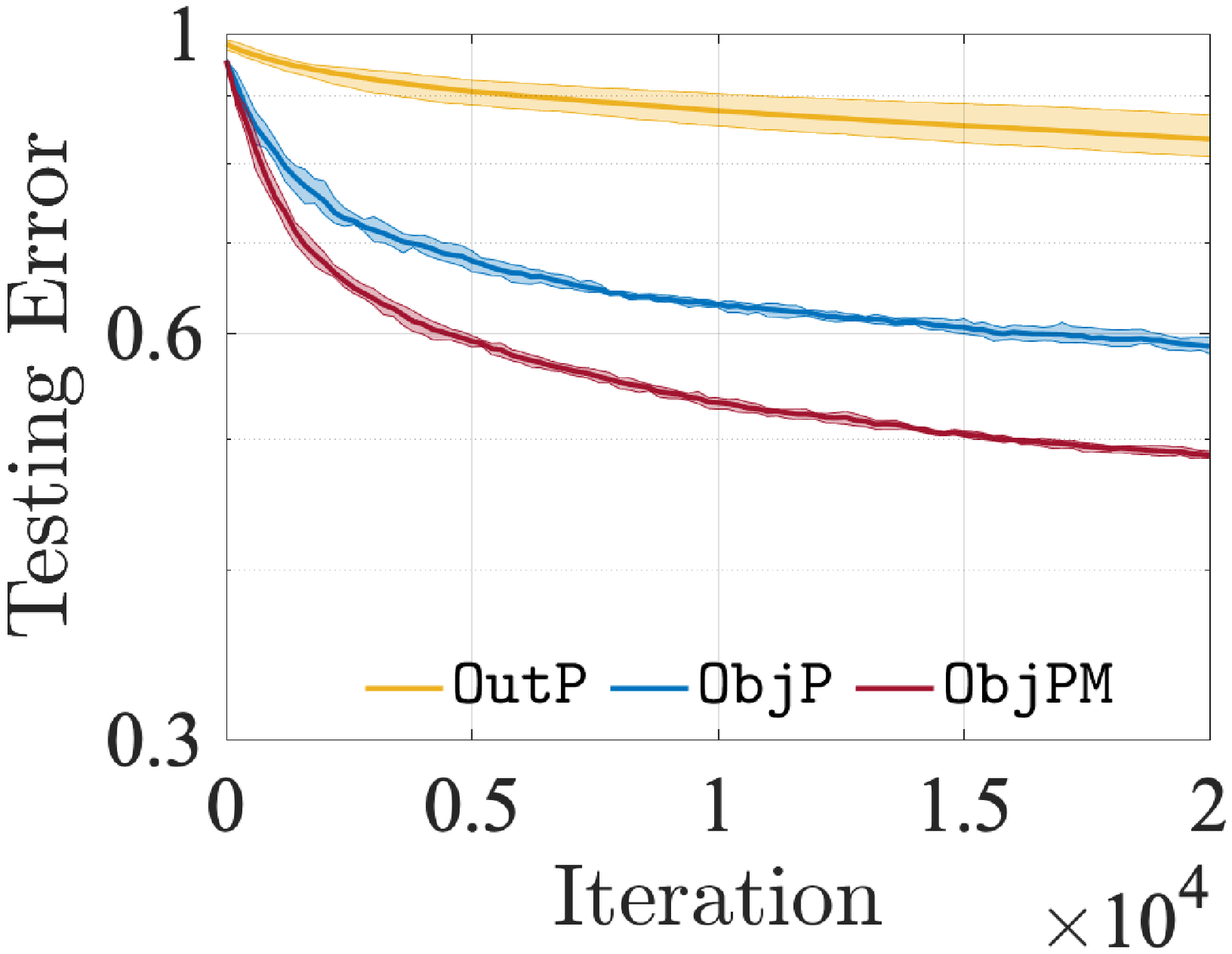}
      \caption{$\bar{\epsilon}=0.1$}
  \end{subfigure}
  \begin{subfigure}[b]{0.24\textwidth}
      \centering      
      \includegraphics[width=\textwidth]{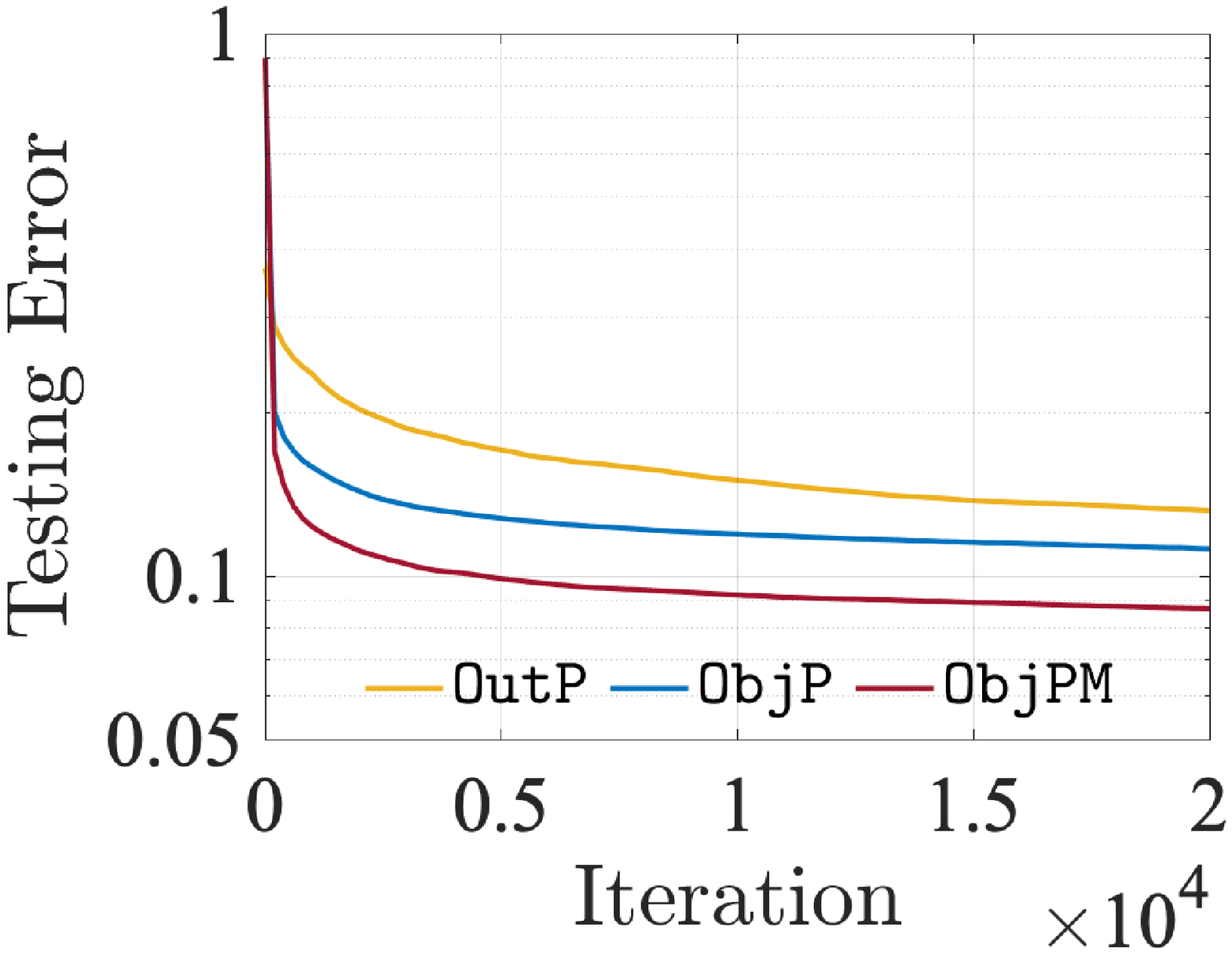}
      \includegraphics[width=\textwidth]{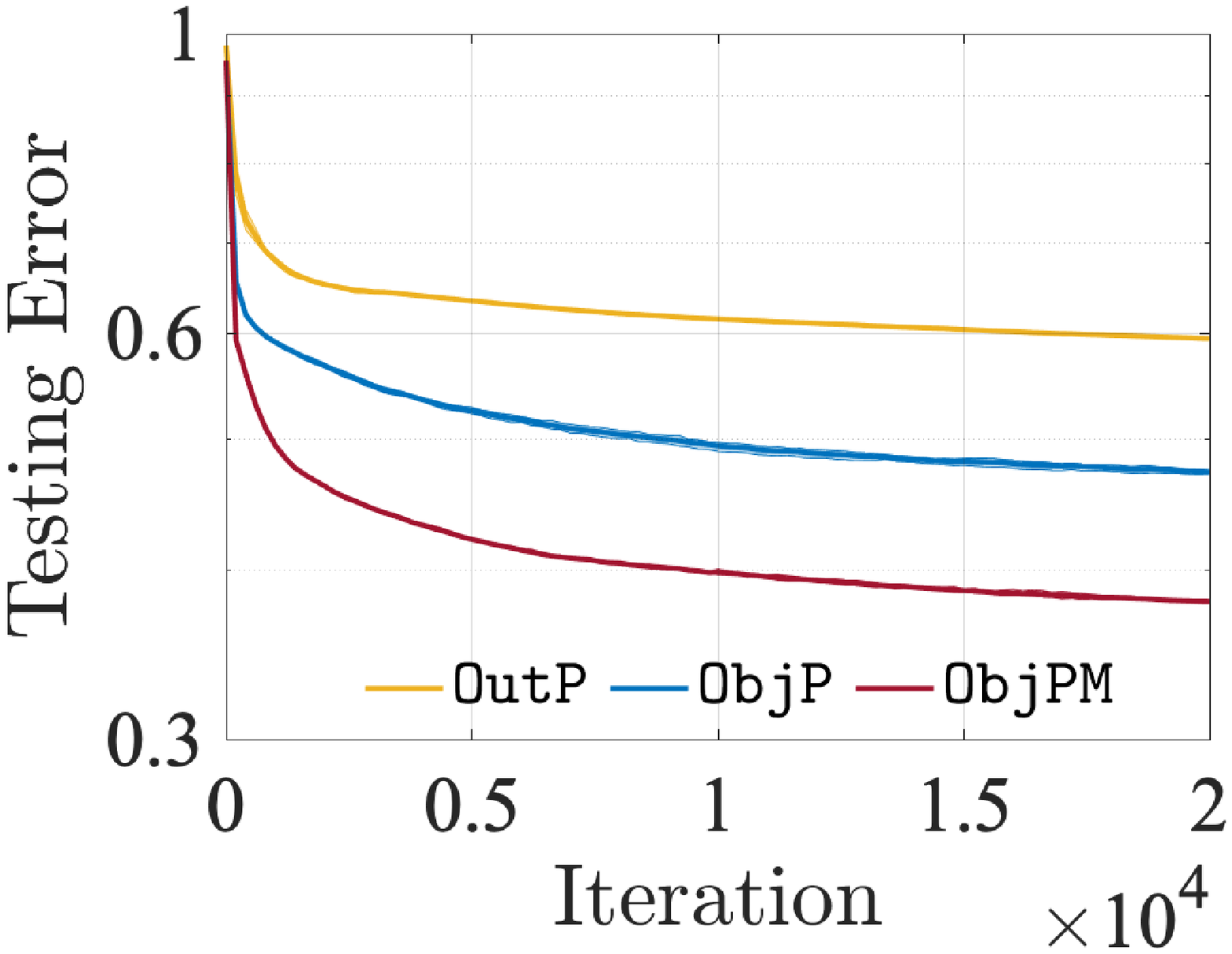}
      \caption{$\bar{\epsilon}=1$}
  \end{subfigure}  
  \begin{subfigure}[b]{0.24\textwidth}
    \centering      
    \includegraphics[width=\textwidth]{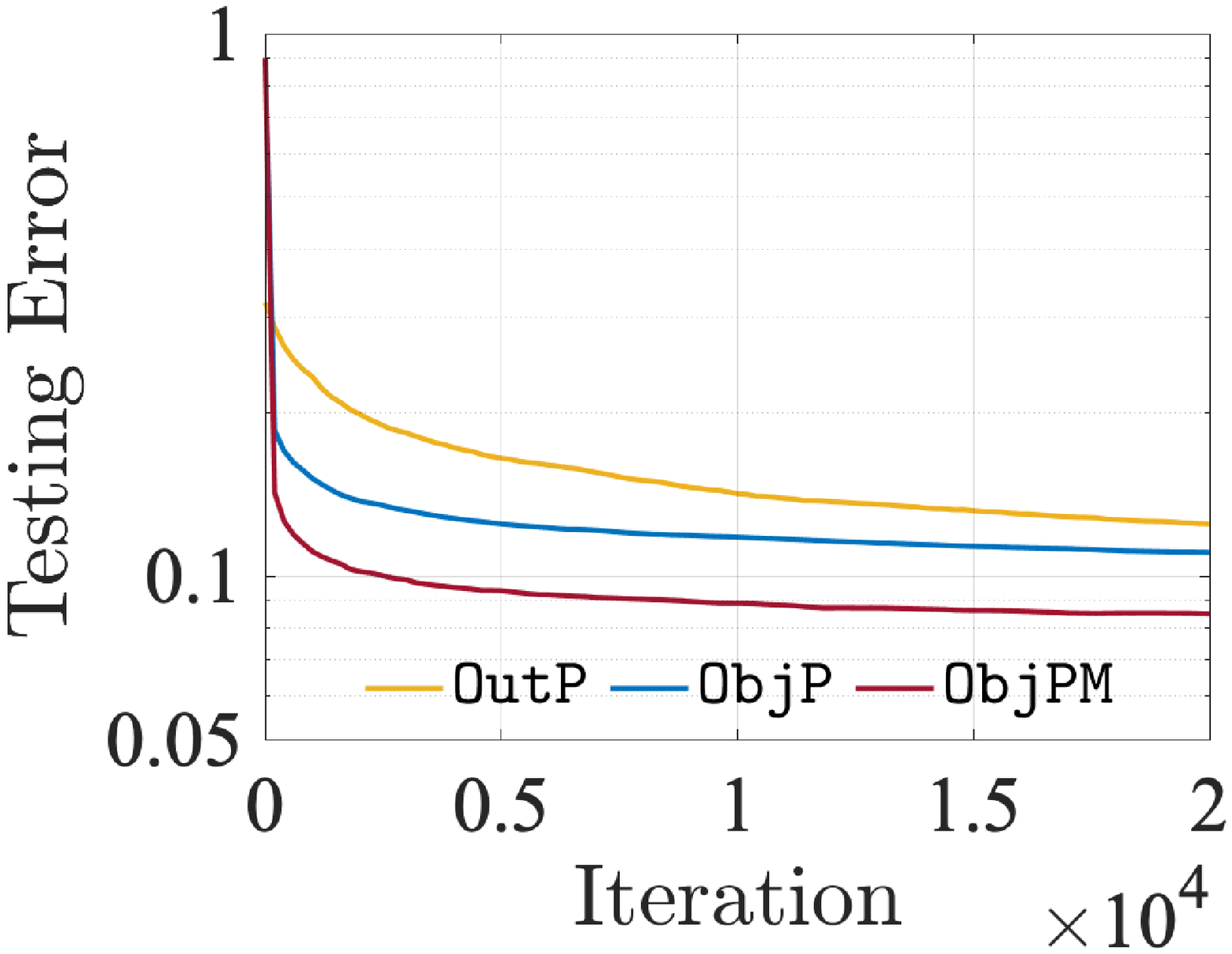}
    \includegraphics[width=\textwidth]{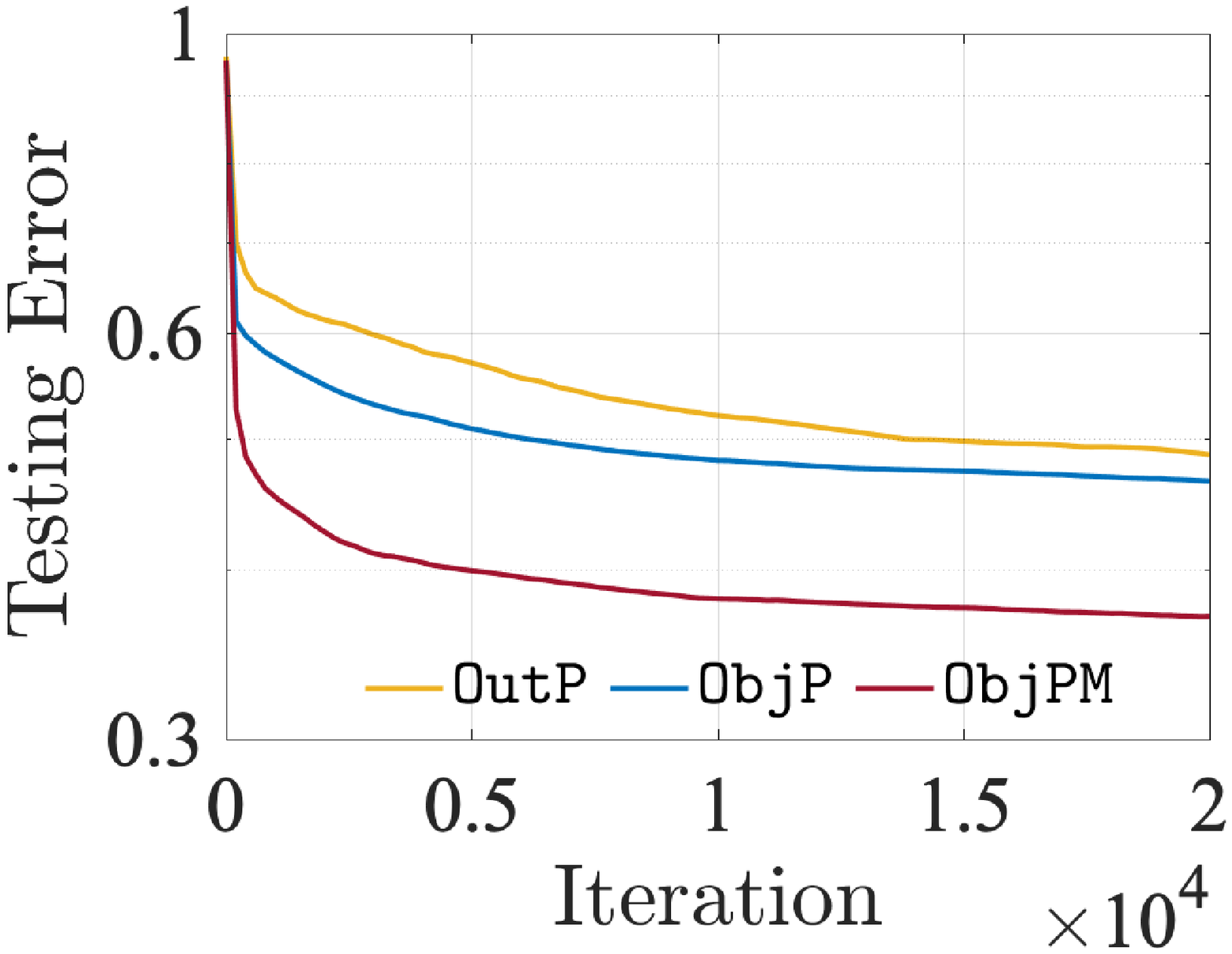}
    \caption{$\bar{\epsilon}=5$}
\end{subfigure}  
  \caption{Testing errors for every iteration under various $\bar{\epsilon}$ (top: MNIST; bottom: FEMNIST).}
  \label{fig:Testing_errors}
\end{figure*}

In what follows, we present some observations from the figures and their implications.

\begin{itemize}
  \item The testing errors of all algorithms increase as $\bar{\epsilon}$ decreases (i.e., stronger data privacy). This indicates the trade-off between data privacy and learning performance, well known in the literature on DP algorithms \cite{dwork2014algorithmic}.
  
  \item The testing errors of \texttt{ObjP} are lower than those of \texttt{OutP}. This result is consistent with the findings in \cite{chaudhuri2011differentially,zhang2016dynamic} that the better performance of the objective perturbation than the output perturbation is guaranteed with higher probability.

  \item The testing errors of \texttt{ObjPM} are lowest,  demonstrating the effectiveness of the multiple local updates presented in Corollary \ref{cor:MultipleLocalUpdate}. When $\bar{\epsilon}=1$, \texttt{ObjPM} produces  testing errors close to those of the nonprivate IADMM while the other algorithms do not. This result implies that \texttt{ObjPM} can mitigate the trade-off between data privacy and learning performance. 
  
  \item When $\bar{\epsilon}=0.05$, among the $10$ runs from the MNIST dataset, the best testing error of \texttt{ObjPM} is $11.74\%$ while that of \texttt{OutP} is $21.79\%$, a $10.05\%$ improvement.

  \item When $\bar{\epsilon}=0.05$, among the $10$ runs from the FEMNIST dataset, the best testing error of \texttt{ObjPM} is $59.42\%$ while that of \texttt{OutP} is $91.05\%$, a $31.63\%$ improvement.
\end{itemize}

In Figure \ref{fig:Summary}, for every algorithm and $\bar{\epsilon}$, we report the best testing error among the $10$ instances, which showcases the outperformance of \texttt{ObjPM}.

\begin{figure}[!ht]
  \centering
  \begin{subfigure}[b]{0.23\textwidth}
    \centering
    \includegraphics[width=\textwidth]{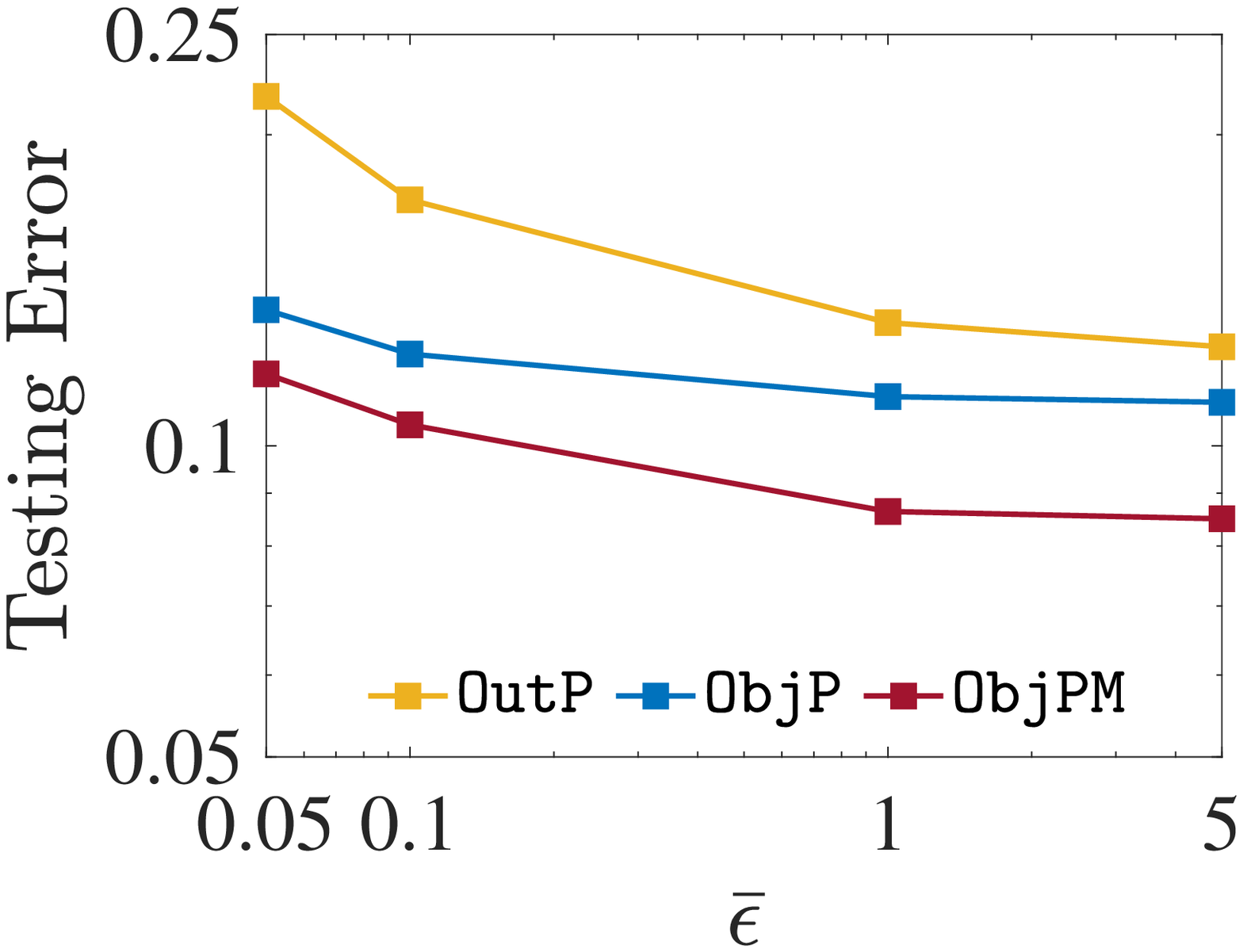}
    \caption{MNIST}
    \label{fig:MNIST_Summary}
  \end{subfigure}
  \begin{subfigure}[b]{0.23\textwidth}
    \centering
    \includegraphics[width=\textwidth]{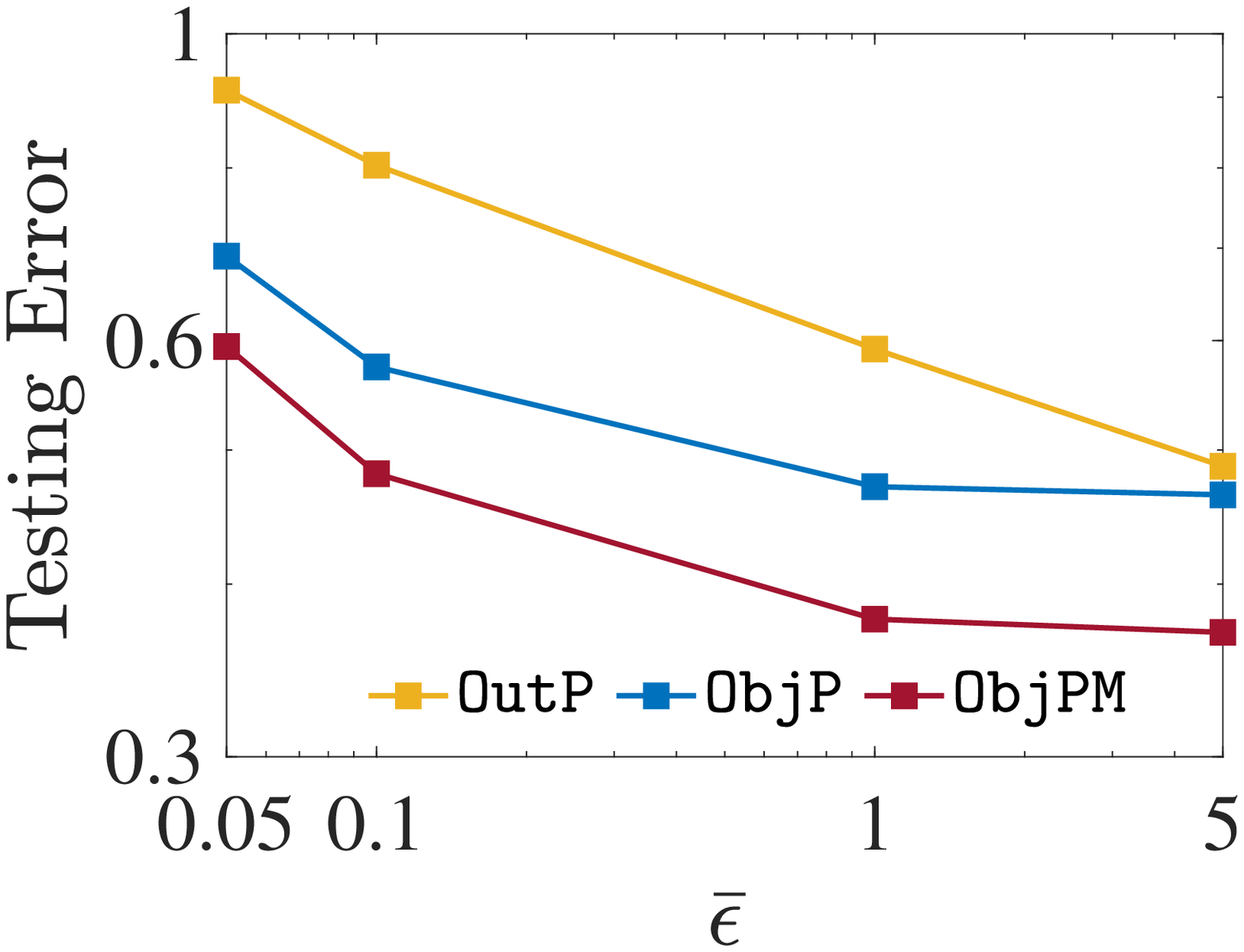}
    \caption{FEMNIST}
    \label{fig:FEMNIST_Summary}
  \end{subfigure}
  \caption{Best testing errors of the three algorithms under various $\bar{\epsilon}$.}
  \label{fig:Summary}
\end{figure} 

\subsection{Comparison of Random Noises}
The random noises to \texttt{OutP} are generated by the Gaussian mechanism with \textit{decreasing} variance as in~\cite{huang2019dp} and injected into the output of the subproblem, whereas the noises to our algorithms are generated by the Laplacian mechanism and injected into the objective function of the subproblem. 
To compare the two different mechanisms in terms of the magnitude of noises generated, we compute the following average noise magnitude:
\begin{align}
  \textstyle\frac{1}{PJK} \sum_{p=1}^P \sum_{j=1}^J \sum_{k=1}^K |\hat{\xi}^t_{pjk}|, \ \forall t \in [T],  \nonumber
\end{align}
where $\hat{\xi}^t_{pjk}$ is a realization of random noise $\tilde{\xi}^t_{pjk}$.

\begin{figure}[!ht]  
  \centering
  \begin{subfigure}[b]{0.23\textwidth}
      \centering      
      \includegraphics[width=\textwidth]{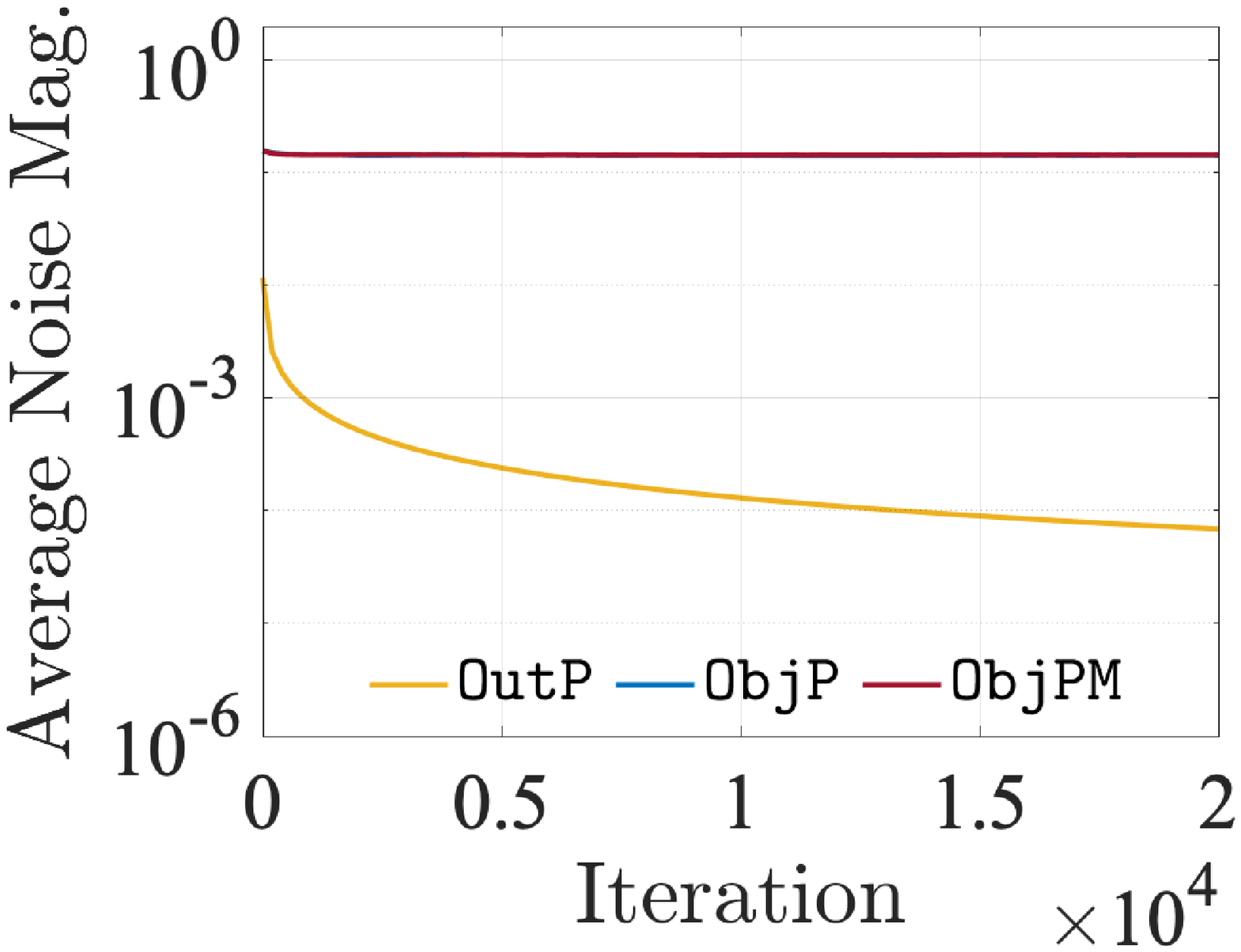}
      \includegraphics[width=\textwidth]{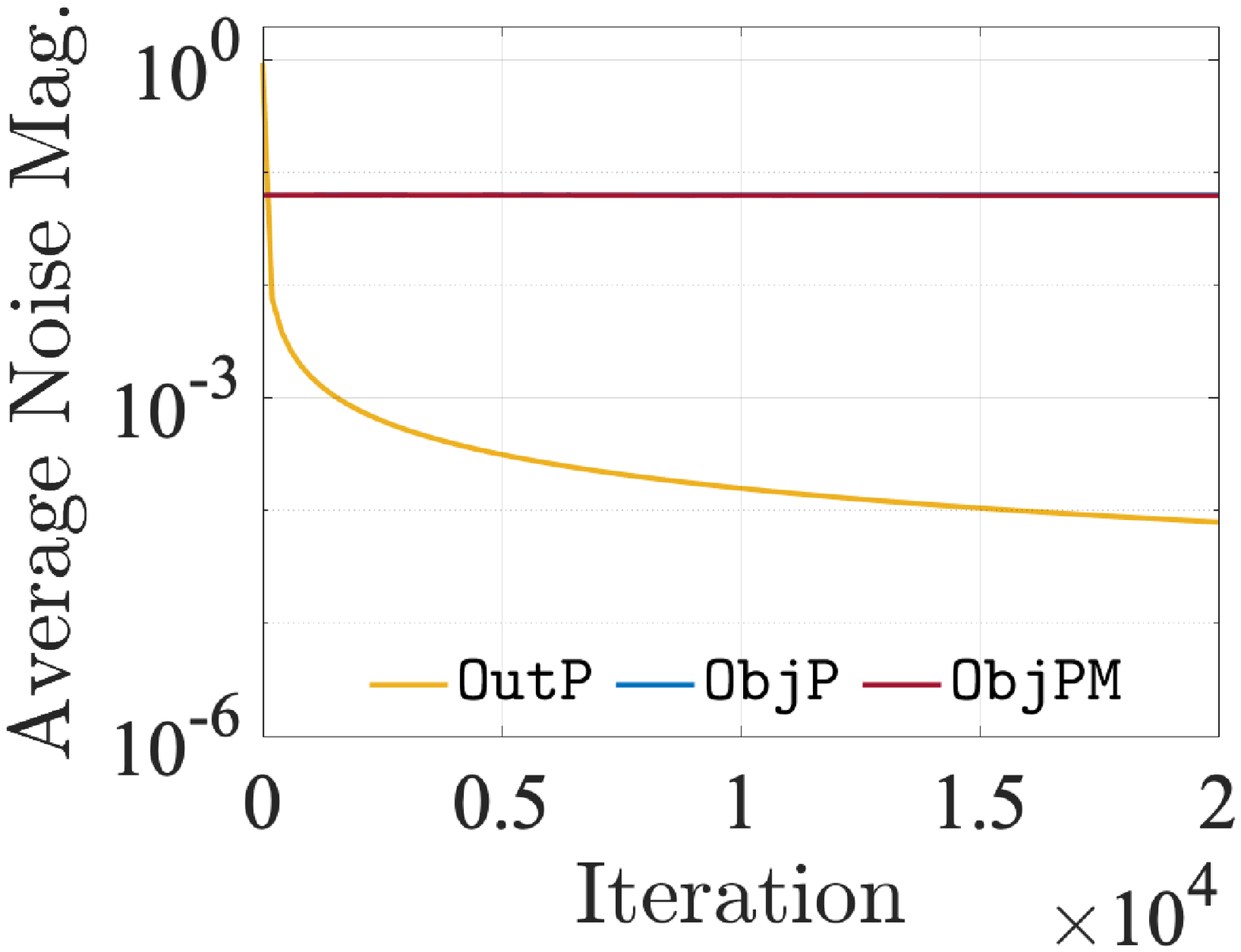}      
      \caption{$\bar{\epsilon}=0.05$}
  \end{subfigure}
  \begin{subfigure}[b]{0.23\textwidth}
      \centering      
      \includegraphics[width=\textwidth]{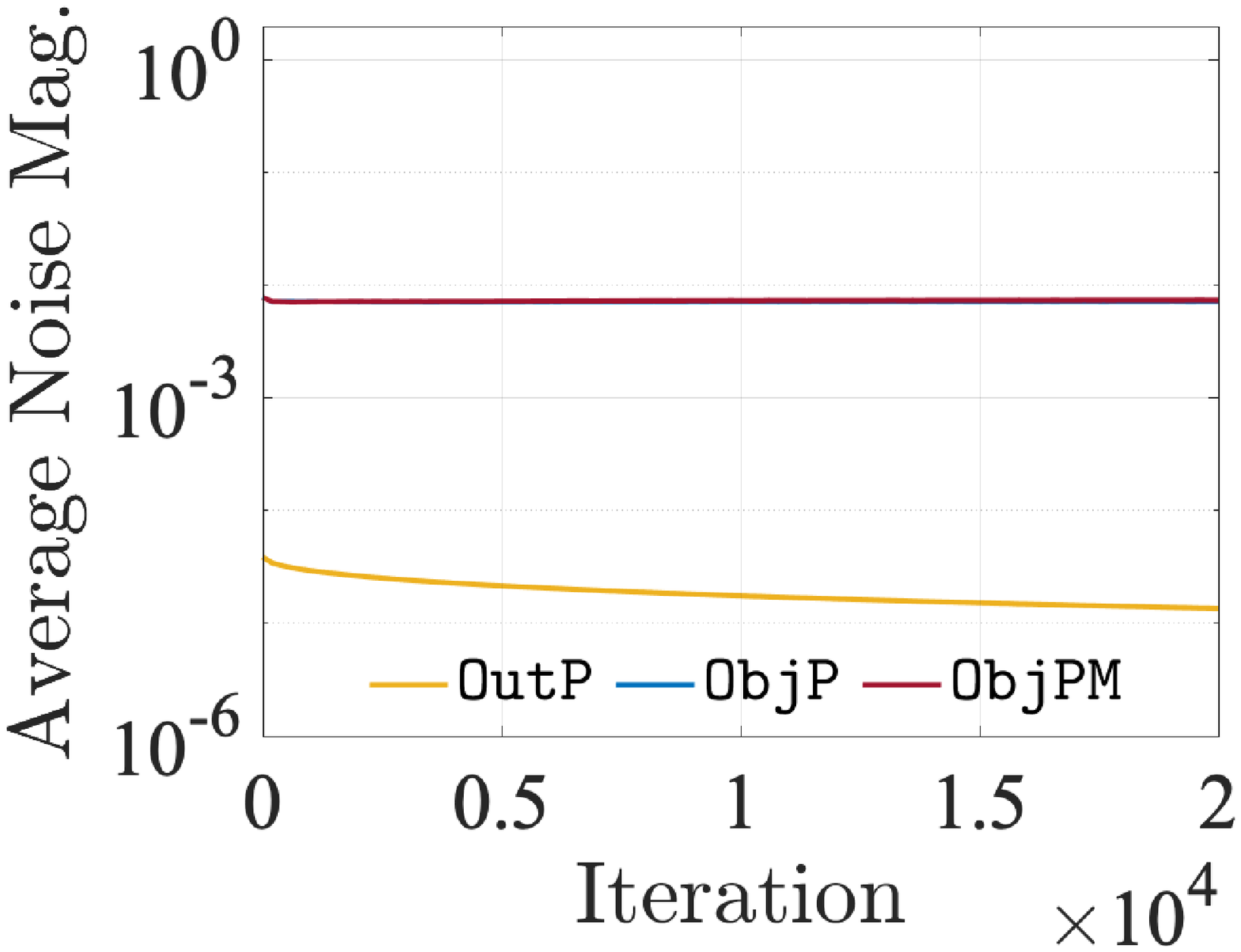}
      \includegraphics[width=\textwidth]{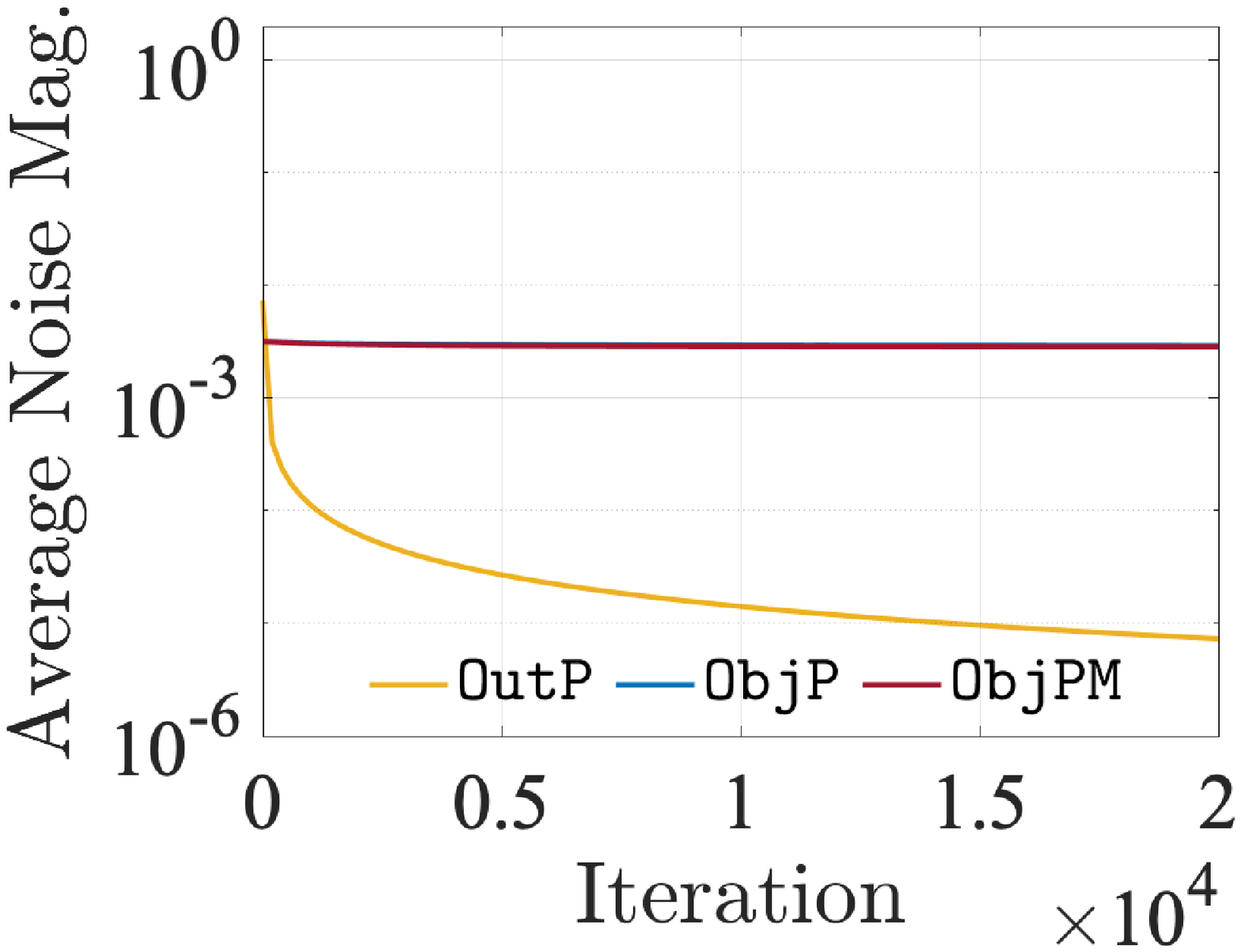}
      \caption{$\bar{\epsilon}=1.0$}
  \end{subfigure}  
  \caption{Average noise magnitudes for every iteration (top: MNIST; bottom: FEMNIST).}
  \label{fig:noise}
\end{figure}
In Figure \ref{fig:noise}, using the same instances as in Section \ref{sec:testing_errors}, we 
%report the average noise magnitudes.
show that the average noise magnitudes of all the algorithms increase as $\bar{\epsilon}$ decreases, achieving stronger data privacy.
For fixed $\bar{\epsilon}$, the average noise magnitudes of our algorithms \texttt{ObjPM} and \texttt{ObjP} are greater than those of \texttt{OutP} while the testing errors of our algorithms are less than those of \texttt{OutP}.
These results imply that the performance of our algorithms is less sensitive to random perturbation than that of \texttt{OutP}, even with a larger magnitude of noises for stronger $\bar{\epsilon}$-DP.

\section{Conclusion} \label{sec:conclusion}
We incorporated the objective perturbation and multiple local updates into an IADMM algorithm for training the FL model while ensuring data privacy during the training process.
The proposed DP-IADMM algorithm iteratively solves a sequence of subproblems whose objective functions are randomly perturbed by noises sampled from a calibrated Laplace distribution to ensure $\bar{\epsilon}$-DP.
We showed that the rate of convergence in expectation for the proposed Algorithm \ref{algo:DP-IADMM-Prox} is $\mathcal{O}(1/\sqrt{T})$ for both a smooth and a nonsmooth convex setting and $\mathcal{O}(1/T)$ for a strongly convex setting.
The outperformance of the proposed algorithm was numerically demonstrated with the MNIST and FEMNIST datasets.

We note that the performance of the proposed DP algorithm can be further improved by lowering the magnitude of noises required for ensuring the same level of data privacy (see Figure \ref{fig:noise} showing that our algorithm requires larger noises). By improving the performance further, we expect that the proposed DP algorithm can be utilized for learning from larger decentralized datasets with more features and classes.

% use section* for acknowledgment
%\section*{Acknowledgment}
% \noindent
% \textbf{Acknowledgments} This  work  was supported by the U.S. Department of Energy LDRD program at Los Alamos National Laboratory under \emph{``Impacts of Extreme Space Weather Events on Power Grid Infrastructure: Physics-Based Modelling of Geomagnetically-Induced Currents (GICs) During Carrington-Class Geomagnetic Storms"}.
%  Los Alamos National Laboratory is operated by Triad National Security, LLC, for the National Nuclear Security Administration of U.S. Department of Energy (Contract No. 89233218CNA000001).
% under
% the auspices of the NNSA of the U.S. DOE at LANL under Contract No.  DE-
% AC52-06NA25396.

\bibliographystyle{IEEEtran}
\bibliography{References.bib}

\vspace{0.5in}
\noindent\fbox{\parbox{0.47\textwidth}{
The submitted manuscript has been created by UChicago Argonne, LLC, Operator of Argonne National Laboratory (``Argonne''). Argonne, a U.S. Department of Energy Office of Science laboratory, is operated under Contract No. DE-AC02-06CH11357. The U.S. Government retains for itself, and others acting on its behalf, a paid-up nonexclusive, irrevocable worldwide license in said article to reproduce, prepare derivative works, distribute copies to the public, and perform publicly and display publicly, by or on behalf of the Government. The Department of Energy will provide public access to these results of federally sponsored research in accordance with the DOE Public Access Plan (http://energy.gov/downloads/doe-public-access-plan).}
}
% that's all folks

\newpage
\appendices
\onecolumn

\section{Proof of Proposition \ref{prop:pointwise_convergence} } \label{apx-prop:pointwise_convergence}
\noindent
We aim to show that, as $\ell$ increases, $z^{t,e+1}_{p\ell}$ converges to $z^{t,e+1}_p$, where $z^{t,e+1}_p$ (resp., $z^{t,e+1}_{p \ell}$) is the optimal solution of an optimization problem in \eqref{DPADMM-2-Prox} (resp., \eqref{ADMM-2-Prox-log}).

Suppose that $z^{t,e+1}_{p\ell}$ converges to $\widehat{z} \neq z^{t,e+1}_p$ as $\ell$ increases.
Consider $\zeta := \| \widehat{z} - z^{t,e+1}_p \| / 2$.
Since $z^{t,e+1}_{p\ell}$  converges to $\widehat{z}$, there exists $\ell' > 0$ such that $\| \widehat{z} - z^{t,e+1}_{p\ell} \| < \zeta$ for all $\ell \geq \ell'$.
By the triangle inequality, we have
\begin{subequations}
\begin{align}
\| z^{t,e+1}_{p\ell} - z^{t,e+1}_p \| \geq \| \widehat{z} - z^{t,e+1}_p \| - \| \widehat{z} - z^{t,e+1}_{p\ell} \| > 2\zeta - \zeta = \zeta, \ \forall \ell \geq \ell'. \label{triangle}
\end{align}
Since $G^{t,e}_p$ is strongly convex with a constant $\mu > 0$, we have
\begin{align}
G^{t,e}_p(z^{t,e+1}_{p\ell}) - G^{t,e}_p(z^{t,e+1}_p) \geq \textstyle \frac{\mu}{2} \| z^{t,e+1}_{p\ell} - z^{t,e+1}_p \|^2 >  \frac{\mu \zeta^2}{2}, \ \forall \ell \geq \ell', \label{strong_triangle}
\end{align}
where the last inequality holds by \eqref{triangle}.
By adding $g_{\ell} (z^{t,e+1}_{p\ell}) \geq 0$ to the left-hand side of \eqref{strong_triangle}, we derive the following inequality:
\begin{align}
\big\{ G^{t,e}_p(z^{t,e+1}_{p\ell}) + g_{\ell} (z^{t,e+1}_{p\ell})  \big\} - G^{t,e}_p(z^{t,e+1}_p)  > \textstyle \frac{\mu \zeta^2}{2}, \ \forall \ell \geq \ell'. \label{diff_0}
\end{align}
To see the contradiction, consider $\epsilon \in (0, \frac{\mu \zeta^2}{4})$.
The continuity of $G^{t,e}_p: \mathcal{W} \mapsto \mathbb{R}$ at $z^{t,e+1}_p$ implies that for every $\epsilon > 0$, there exists a $\delta > 0$ such that for all $z \in \mathcal{W}$:
\begin{align}
z \in \mathcal{B}_{\delta}(z^{t,e+1}_p) := \{ z \in \mathbb{R}^{J \times K} : \| z - z^{t,e+1}_p \| < \delta \} \ \Rightarrow \  G^{t,e}_p(z) - G^{t,e}_p(z^{t,e+1}_p)  < \epsilon. \label{continuity}
\end{align}
Consider $\tilde{z} \in \mathcal{B}_{\delta}(z^{t,e+1}_p) \cap \textbf{relint}(\mathcal{W})$, where \textbf{relint} indicates the relative interior.
Since $h_m(\tilde{z}) < 0$ for all $m \in [M]$, $g_{\ell}(\tilde{z})$ goes to zero as $\ell$ increases.
Hence, there exists $\ell'' > 0$ such that
\begin{align}
  g_{\ell}(\tilde{z}) = \textstyle \sum_{m=1}^M \ln (1 + e^{\ell h_m(\tilde{z})}) < \epsilon, \ \forall \ell \geq \ell''. \label{ineq_z_tilde}
\end{align}
For all $\ell \geq \ell''$, we derive the following inequalities:
\begin{align}
  G^{t,e}_p(z^{t,e+1}_{p\ell}) + g_{\ell}(z^{t,e+1}_{p\ell}) \leq G^{t,e}_p(\tilde{z}) + g_{\ell}(\tilde{z}) < G^{t,e}_p(\tilde{z}) + \epsilon < G^{t,e}_p(z^{t,e+1}_p) + 2\epsilon, \label{sandwich_ineq}
\end{align}
where
the first inequality holds because $z^{t,e+1}_{p\ell}$ is the optimal solution of \eqref{ADMM-2-Prox-log},
the second inequality holds by \eqref{ineq_z_tilde}, and
the last inequality holds by \eqref{continuity}.
Therefore, we have
\begin{align}
\big\{ G^{t,e}_p(z^{t,e+1}_{p\ell}) + g_{\ell} (z^{t,e+1}_{p\ell})  \big\} - G^{t,e}_p(z^{t,e+1}_p) \underbrace{<}_{\text{from } \eqref{sandwich_ineq}} 2 \epsilon < \frac{\mu \zeta^2}{2}, \ \forall \ell \geq \ell''. \label{diff_1}
\end{align}
Therefore, for all $\ell \geq \max \{ \ell', \ell'' \}$, \eqref{diff_0} and \eqref{diff_1} contradict.
This completes the proof.
\end{subequations}

\section{Proof of Proposition \ref{prop:dpinapproximation}} \label{apx-prop:dpinapproximation}
\noindent
It suffices to show that the following is true:
\begin{subequations}
\begin{align}
  e^{-\bar{\epsilon}} \ \textbf{pdf} \big( z^{t,e+1}_{p\ell} (\mathcal{D}'_p) = \psi \big)
  \leq \textbf{pdf} \big( z^{t,e+1}_{p\ell} (\mathcal{D}_p) = \psi \big)
  \leq e^{\bar{\epsilon}} \ \textbf{pdf} \big( z^{t,e+1}_{p\ell} (\mathcal{D}'_p) = \psi \big), \ \forall \psi \in \mathbb{R}^{J \times K}, \label{DP_PDF}
\end{align}
where $\textbf{pdf}$ represents a probability density function.

Consider $\psi \in \mathbb{R}^{J \times K}$.
If we have $z^{t,e+1}_{p \ell} (\mathcal{D}_p)=\psi$, then $\psi$ is the unique minimizer of \eqref{ADMM-2-Prox-log} because the objective function in \eqref{ADMM-2-Prox-log} is strongly convex.
From the optimality condition of \eqref{ADMM-2-Prox-log}, we derive
\begin{align}
  \tilde{\xi}^{t,e}_p = & - f'_p(z^{t,e}_p;\mathcal{D}_p) + \rho^t(w^{t+1}-\psi) + \lambda^t_p - \nabla g_{\ell}(\psi)  - \textstyle \frac{1}{\eta^{t}} \big( \psi - z_p^{t,e} \big), \label{correspondence}
\end{align}
where $\nabla g_{\ell}(\psi) = \sum_{m=1}^M \frac{\ell e ^{\ell h_m(\psi)}}{1+e^{\ell h_m(\psi)}} \nabla h_m(\psi)$.
Note that the mapping from $\psi$ to $\tilde{\xi}^{t,e}_p$ via \eqref{correspondence} is injective. 
Also the mapping is surjective because for all $\tilde{\xi}^{t,e}_p$, there exists $\psi$ (i.e., the unique minimizer of \eqref{ADMM-2-Prox-log}) such that \eqref{correspondence} holds.
Therefore, the relation between $\psi$ and $\tilde{\xi}^{t,e}_p$ is bijective, which enables us to utilize the inverse function theorem (Theorem 17.2 in \cite{billingsley}), namely,
  \begin{align}
  \textbf{pdf} \big( z^{t,e+1}_{p\ell} (\mathcal{D}_p) = \psi \big) \cdot \big|\textbf{det}[\nabla \tilde{\xi}^{t,e}_p (\psi;\mathcal{D}_p) ] \big| = \text{Lap} \big( \tilde{\xi}^{t,e}_p (\psi;\mathcal{D}_p); 0, \bar{\Delta}_p^{t,e} /\bar{\epsilon} \big), \label{inverse}
  \end{align}
  where \textbf{det} represents a determinant of a matrix, $\text{Lap} $ is from \eqref{Laplace-pdf}, and
  $\nabla \tilde{\xi}^{t,e}_p (\psi;\mathcal{D}_p)$ represents a Jacobian matrix of the mapping from $\psi$ to $\tilde{\xi}^{t,e}_p$ in \eqref{correspondence}, namely,
  \begin{align}
  \nabla \tilde{\xi}^{t,e}_p (\psi;\mathcal{D}_p) =  (-\rho^t - 1/\eta^{t}) \mathbb{I}_{JK} - \nabla \Big( \sum_{m=1}^M \frac{\ell e ^{\ell h_m(\psi)}}{1+e^{\ell h_m(\psi)}} \nabla h_m(\psi) \Big) , \label{Jacobian}
  \end{align}
  where $\mathbb{I}_{JK}$ is an identity matrix of $JK \times JK$ dimensions.
  Since the Jacobian matrix is not affected by the dataset, we have
  \begin{align}
  \nabla \tilde{\xi}^{t,e}_p (\psi;\mathcal{D}_p) = \nabla \tilde{\xi}^{t,e}_p (\psi;\mathcal{D}'_p).  \label{jacobian}
  \end{align}
  Based on \eqref{inverse} and \eqref{jacobian}, we derive the following inequalities:
  \begin{align}
    & \frac{\textbf{pdf} \big( z^{t,e+1}_{p \ell} (\mathcal{D}_p) = \psi \big)}{\textbf{pdf} \big( z^{t,e+1}_{p \ell} (\mathcal{D}'_p ) = \psi \big)}  
    \underbrace{=}_{\text{by \eqref{inverse}}} \frac{\text{Lap} \big( \tilde{\xi}^{t,e}_p (\psi; \mathcal{D}_p); 0, \bar{\Delta}_p^{t,e}/\bar{\epsilon} \big) }{\text{Lap} \big( \tilde{\xi}^{t,e}_p (\psi; \mathcal{D}'_p); 0, \bar{\Delta}_p^{t,e} / \bar{\epsilon} \big) } \cdot \frac{\big|\textbf{det}[\nabla \tilde{\xi}^{t,e}_p (\psi;\mathcal{D}'_p) ] \big|}{\big|\textbf{det}[\nabla \tilde{\xi}^{t,e}_p (\psi;\mathcal{D}_p) ] \big|} 
    \underbrace{=}_{\text{by \eqref{jacobian}}} \frac{\text{Lap} \big( \tilde{\xi}^{t,e}_p (\psi; \mathcal{D}_p); 0, \bar{\Delta}_p^{t,e}/\bar{\epsilon} \big) }{\text{Lap} \big( \tilde{\xi}^{t,e}_p (\psi; \mathcal{D}'_p); 0, \bar{\Delta}_p^{t,e} /\bar{\epsilon} \big) } \nonumber \\
    & = \textbf{exp} \Big( (\bar{\epsilon}/\bar{\Delta}_p^{t,e})(\| \tilde{\xi}^{t,e}_p (\psi;\mathcal{D}'_p) \|_1 - \| \tilde{\xi}^{t,e}_p (\psi;\mathcal{D}_p) \|_1) \Big) 
    \underbrace{\leq}_{\text{triangle inequality}} \textbf{exp} \Big( (\bar{\epsilon}/\bar{\Delta}_p^{t,e})(\| \tilde{\xi}^{t,e}_p (\psi;\mathcal{D}'_p) - \tilde{\xi}^{t,e}_p (\psi;\mathcal{D}_p) \|_1) \Big) \nonumber \\
    & \underbrace{=}_{\text{by \eqref{correspondence}}} \textbf{exp} \Big( (\bar{\epsilon}/\bar{\Delta}_p^{t,e})(\| f'_p(z^{t,e}_p;\mathcal{D}_p) - f'_p(z^{t,e}_p;\mathcal{D}'_p)\|_1) \Big) \underbrace{\leq}_{\text{by \eqref{Delta}}} \textbf{exp} (\bar{\epsilon}), \nonumber
  \end{align}
  \end{subequations}
  where \textbf{exp} represents the exponential function.
  Similarly, one can derive a lower bound in \eqref{DP_PDF}.
  Integrating $\psi$ in \eqref{DP_PDF} over $\mathcal{S}$ yields \eqref{DP_1}.
  This completes the proof.

\section{Existence of $U_1$, $U_2$, and $U_3$ in \eqref{def_upper}} \label{apx:existence_of_UBs}
\noindent
(Existence of $U_2$) 
$U_2$ is well defined because the objective function $\|u-v\|$ is continuous and the feasible region $\mathcal{W}$ is compact. \\
(Existence of $U_1$)
The necessary and sufficient condition of Assumption \ref{assump:convergence} (iii) is that, for all $u \in \mathcal{W}$  and $v \in \partial f_p(u)$, $\| v \|_{\star} \leq H$, where $\|\cdot\|_{\star}$ is the dual norm.
As the dual norm of the Euclidean norm is the Euclidean norm, we have $\| f'_p(u) \| \leq H$.
Since the objective function, which is a maximum of finite continuous functions, is continuous and $\mathcal{W}$ is compact, $U_1$ is well defined. \\
(Existence of $U_3$)
From the norm inequality, we have
\begin{align*}
& \| f'_p(u;\mathcal{D}_p) - f'_p(u;\mathcal{D}'_p)\|_1 \leq \sqrt{JK} \| f'_p(u;\mathcal{D}_p) - f'_p(u;\mathcal{D}'_p)\|_2 \\
& \leq \sqrt{JK} \{ \| f'_p(u;\mathcal{D}_p) \|_2  + \| f'_p(u;\mathcal{D}'_p)\|_2 \} \leq 2 H \sqrt{JK}, \ \forall u \in \mathcal{W},
\end{align*}
where the last inequality holds by Assumption \ref{assump:convergence} (iii). Therefore, $U_3$ is well defined.

\section{Proof of Theorem \ref{thm:smooth} } \label{apx-thm:smooth} 

\subsection{Preliminaries}
\noindent
First, we note that for any symmetric matrix $A$, 
\begin{align}
(a-b)^{\top} A (c-d) = \frac{1}{2} \{ \| a-d \|^2_A  - \| a-c \|^2_A + \| c-b \|^2_A - \| d - b \|^2_A \},  \label{rule}
\end{align}
where $a$, $b$, $c$, and $d$ are vectors of the same size.

Second, we define $\tilde{\lambda}^t_p := \lambda^t_p + \rho^t (w^{t+1} - z^t_p)$ for fixed $t \in [T]$ and $p \in [P]$.
From the optimality condition of \eqref{ADMM-1}, namely, 
$\sum_{p=1}^P \lambda^t_p + \rho^t(w^{t+1}-z^t_p)= \sum_{p=1}^P \tilde{\lambda}^t_p =0$, we have
\begin{align}
& \textstyle \sum_{p=1}^P \langle  \tilde{\lambda}^t_p, w^{t+1}-w \rangle = 0, \ \forall w. \label{Block1:optimality_condition}
\end{align}

\subsection{Inequality derivation for a fixed iteration $t$ and $e$.}
\noindent
First, for a given $p \in [P]$, the optimality condition of \eqref{DPADMM-2-Prox} is given by 
\begin{align*}
\langle \nabla f_p(z^{t,e}_p) - \{ \underbrace{ \lambda^t_p  + \rho^t(w^{t+1}-z^{t,e+1}_p) }_{ \text{``\texttt{A}''} } \} + \tilde{\xi}^{t,e}_p, z^{t,e+1}_p - z_p \rangle \leq \frac{1}{\eta^{t}} \underbrace{\langle z_p - z^{t,e+1}_p, z^{t,e+1}_p - z_p^{t,e} \rangle}_{\text{``\texttt{B}''}}, \ \forall z_p \in \mathcal{W}.
\end{align*}
By defining $\lambda^{t,e+1}_p  := \lambda^t_p + \rho^t (w^{t+1} - z^{t,e+1}_p)$ for the ``\texttt{A}'' term and applying \eqref{rule} on the ``\texttt{B}'' term from the above inequalities, we have
\begin{align}
\langle \nabla f_p(z^{t,e}_p) - \lambda^{t,e+1}_p + \tilde{\xi}^{t,e}_p, z^{t,e+1}_p - z_p \rangle \leq \frac{1}{2\eta^{t}} \Big( \| z_p - z_p^{t,e}\|^2 - \| z_p - z_p^{t,e+1}\|^2 - \| z_p^{t,e} - z_p^{t,e+1} \|^2 \Big). \label{Block2:smooth_inequality_0}
\end{align}
Second, by adding a term $\langle \lambda^{t,e+1}_p, z^{t,e+1}_p-z_p \rangle$ to the subgradient inequality $f_p(z^{t,e}_p) - f_p(z_p)  \leq \langle \nabla f_p(z^{t,e}_p), z^{t,e}_p - z_p \rangle$ for all $z_p$, we derive
\begin{align}
& f_p(z^{t,e}_p) - f_p(z_p) - \underbrace{\langle \lambda^{t,e+1}_p, z^{t,e+1}_p-z_p \rangle}_{\text{``\texttt{C}''}}  \leq \langle \nabla f_p(z^{t,e}_p), z^{t,e}_p-z^{t,e+1}_p \rangle + \langle \nabla f_p(z^{t,e}_p) - \lambda^{t,e+1}_p, z^{t,e+1}_p - z_p \rangle \nonumber \\
& = \langle  \nabla f_p(z^{t,e}_p) + \tilde{\xi}^{t,e}_p, z^{t,e}_p-z^{t,e+1}_p \rangle + \underbrace{\langle \nabla f_p(z^{t,e}_p) - \lambda^{t,e+1}_p + \tilde{\xi}^{t,e}_p, z^{t,e+1}_p-z_p \rangle}_{\text{applying } \eqref{Block2:smooth_inequality_0}} + \langle  \tilde{\xi}^{t,e}_p, z_p - z^{t,e}_p \rangle \nonumber \\
& \leq \langle  \nabla f_p(z^{t,e}_p) + \tilde{\xi}^{t,e}_p, z^{t,e}_p-z^{t,e+1}_p \rangle + \frac{1}{2\eta^{t}} \Big( \| z_p - z_p^{t,e}\|^2 - \| z_p - z_p^{t,e+1}\|^2 - \| z_p^{t,e} - z_p^{t,e+1} \|^2 \Big) + \langle  \tilde{\xi}^{t,e}_p, z_p - z^{t,e}_p \rangle .\nonumber 
\end{align}
Since the ``\texttt{C}'' term from the above inequalities can be written as
\begin{align}
& \langle \lambda^{t,e+1}_p, z^{t,e+1}_p-z_p \rangle = \langle \lambda^{t+1}_p, z^{t,e+1}_p-z_p \rangle + \langle \underbrace{\lambda^{t,e+1}_p - \lambda^{t+1}_p}_{=\rho^t(z^{t+1}_p - z^{t,e+1}_p ) }, z^{t,e+1}_p-z_p \rangle, \nonumber
\end{align}
we obtain
\begin{align}
& f_p(z^{t,e}_p) - f_p(z_p) - \langle \lambda^{t+1}_p, z^{t,e+1}_p-z_p \rangle 
\leq 
\rho^t \langle z^{t+1}_p - z^{t,e+1}_p, z^{t,e+1}_p-z_p \rangle + \underbrace{\langle \nabla f_p(z^{t,e}_p) + \tilde{\xi}^{t,e}_p, z^{t,e}_p-z^{t,e+1}_p \rangle}_{\text{``\texttt{D}''}} + \nonumber \\
& \frac{1}{2\eta^{t}} \Big( \| z_p - z_p^{t,e}\|^2 - \| z_p - z_p^{t,e+1}\|^2 - \| z_p^{t,e} - z_p^{t,e+1} \|^2 \Big) + \langle  \tilde{\xi}^{t,e}_p, z_p - z^{t,e}_p \rangle, \ \forall z_p \in \mathcal{W}.   \label{Block2:smooth_inequality_1}
\end{align}
Third, we derive from the ``\texttt{D}'' term in \eqref{Block2:smooth_inequality_1} that
\begin{align}
& \langle \nabla f_p(z^{t,e}_p) + \tilde{\xi}^{t,e}_p, z^{t,e}_p-z^{t,e+1}_p \rangle = \underbrace{\langle \tilde{\xi}^{t,e}_p, z^{t,e}_p-z^{t,e+1}_p \rangle}_{ \text{applying Young's inequality} }  + \underbrace{\langle \nabla f_p(z^{t,e}_p), z^{t,e}_p-z^{t,e+1}_p \rangle}_{\text{applying the L-smoothness of $f_p$}}  \nonumber \\
\leq & \Big\{
\frac{1}{2(1/\eta^{t} - L)} \| \tilde{\xi}^{t,e}_p \|^2 +  \frac{1/\eta^{t} - L} {2} \| z_p^{t,e+1} - z^{t,e}_p\|^2  \Big\}
+ \Big\{ f_p(z_p^{t,e}) - f_p(z_p^{t,e+1}) + \frac{L}{2}\| z_p^{t,e+1} - z_p^{t,e} \|^2 \Big\}, \nonumber 
\end{align}
where $1/\eta^t - L > 0$ by the construction of $\eta^t$ in \eqref{smooth_proximity}.
Therefore, we derive from \eqref{Block2:smooth_inequality_1} the following inequalities:
\begin{align}
  & f_p(z^{t,e+1}_p) - f_p(z_p) - \langle  \lambda^{t+1}_p, z^{t,e+1}_p-z_p \rangle \leq \rho^t \langle z^{t+1}_p - z^{t,e+1}_p, z^{t,e+1}_p-z_p \rangle +  \nonumber \\
  &  \frac{1}{2(1/\eta^{t} - L)} \| \tilde{\xi}^{t,e}_p \|^2 + \frac{1}{2\eta^{t}} \Big( \|z_p-z^{t,e}_p\|^2- \|z_p-z^{t,e+1}_p\|^2   \Big) +  \langle \tilde{\xi}^{t,e}_p, z_p - z^{t,e}_p \rangle, \ \forall z_p \in \mathcal{W}. \label{Block2:smooth_inequality_2}
\end{align}

\subsection{Inequality derivation for a fixed iteration $t$.}
\noindent
Summing \eqref{Block2:smooth_inequality_2} over all $e \in [E]$ and dividing the resulting inequalities by $E$, we obtain 
\begin{align}
  & \frac{1}{E} \sum_{e=1}^E f_p(z^{t,e+1}_p) - f_p(z_p) -  \langle \lambda^{t+1}_p, \underbrace{\frac{1}{E} \sum_{e=1}^E z^{t,e+1}_p}_{\substack{=z^{t+1}_p \text{ from line 21} \\ \text{in Algorithm \ref{algo:DP-IADMM-Prox}} }}-z_p \rangle \leq \rho^t  \underbrace{ \frac{1}{E} \sum_{e=1}^E \langle z^{t+1}_p - z_p^{t,e+1}, z_p^{t,e+1} - z_p \rangle}_{\text{``\texttt{E}''}} + \nonumber \\
  & \frac{1}{E} \sum_{e=1}^E \Big\{ \frac{\| \tilde{\xi}^{t,e}_p \|^2 }{2(1/\eta^{t} - L)} + \frac{1}{2\eta^{t}} \Big( \|z_p-z^{t,e}_p\|^2- \|z_p-z^{t,e+1}_p\|^2 \Big) + \langle  \tilde{\xi}^{t,e}_p, z_p - z^{t,e}_p \rangle \Big\}, \ \forall z_p \in \mathcal{W}. \label{Block2:smooth_inequality_3}
\end{align}
The ``\texttt{E}'' term from \eqref{Block2:smooth_inequality_3} is non-positive because
\begin{align}
  & \frac{1}{E}\sum_{e=1}^E \langle z_p^{t+1} - z_p^{t,e+1}, z_p^{t,e+1} - z_p \rangle = \frac{1}{E^2}\sum_{e=1}^E \sum_{e'=1}^E \langle z_p^{t,e'+1} - z_p^{t,e+1}, z_p^{t,e+1} - z_p \rangle \nonumber \\
  = & \frac{1}{E^2} \sum_{e=1}^E \sum_{e'=1:e'>e}^E \left( \langle z_p^{t,e'+1} - z_p^{t,e+1}, z_p^{t,e+1} - z_p \rangle + \langle z_p^{t,e+1} - z_p^{t,e'+1}, z_p^{t,e'+1} - z_p \rangle \right) \nonumber \\
  = & \frac{1}{E^2} \sum_{e=1}^E \sum_{e'=1:e'>e}^E \langle z_p^{t,e'+1} - z_p^{t,e+1}, - z_p^{t,e'+1} + z_p^{t,e+1} \rangle  \nonumber \\
  \leq & \frac{1}{E^2} \sum_{e=1}^E \sum_{e'=1:e'>e}^E - \| z_p^{t,e'+1} - z_p^{t,e+1}\|^2 \leq 0. \label{non_positive_trick}
\end{align}
Summing the inequalities resulting from \eqref{Block2:smooth_inequality_3} and \eqref{non_positive_trick} over $p \in [P]$, we have
\begin{align}
  & \sum_{p=1}^P \Big[ \frac{1}{E} \sum_{e=1}^E f_p(z^{t,e+1}_p) - f_p(z_p) -  \langle \lambda^{t+1}_p, z^{t+1}_p - z_p \rangle \Big]  \nonumber \\
  \leq  & \sum_{p=1}^P \Big[ \frac{1}{E} \sum_{e=1}^E \Big\{ \frac{\| \tilde{\xi}^{t,e}_p \|^2 }{2(1/\eta^{t} - L)} + \frac{1}{2\eta^{t}} \Big( \|z_p-z^{t,e}_p\|^2- \|z_p-z^{t,e+1}_p\|^2 \Big) + \langle  \tilde{\xi}^{t,e}_p, z_p - z^{t,e}_p \rangle \Big\} \Big]. \label{Block2:smooth_inequality_4}
\end{align}
For ease of exposition, we introduce the following notation:
\begin{align}
& 
z := [z_1^{\top}, \ldots, z_P^{\top}]^{\top},  
\ \ \lambda := [\lambda_1^{\top}, \ldots, \lambda_P^{\top}]^{\top}, 
\ \ \tilde{\lambda} := [\tilde{\lambda}_1^{\top}, \ldots, \tilde{\lambda}_P^{\top}]^{\top},  
\label{Notation_smooth}   \\  
& x := \begin{bmatrix}
  w \\ z \\ \lambda
  \end{bmatrix}, \  
  \tilde{x}^t := \begin{bmatrix}
  w^{t+1} \\ z^{t+1} \\ \tilde{\lambda}^t
  \end{bmatrix}, \
  x^* := \begin{bmatrix}
    w^* \\ z^* \\ \lambda
  \end{bmatrix}, \
  A := \begin{bmatrix}
    \mathbb{I}_J \\ \vdots \\ \mathbb{I}_J
  \end{bmatrix}_{PJ \times J}, \
  G := \begin{bmatrix}
      0 & 0 & A^{\top}     \\
      0 & 0 & -\mathbb{I}_{PJ}     \\
      -A & \mathbb{I}_{PJ} & 0     \\
  \end{bmatrix}, \nonumber  \\    
& x^{(T)} := \textstyle \frac{1}{T} \sum_{t=1}^T \tilde{x}^{t}, \ \
w^{(T)} := \textstyle \frac{1}{T} \sum_{t=1}^T w^{t+1}, \ \
z^{(T)} := \textstyle \frac{1}{TE} \sum_{t=1}^T \sum_{e=1}^{E} z^{t,e+1}, \ \
\lambda^{(T)} := \textstyle \frac{1}{T} \sum_{t=1}^T \tilde{\lambda}^{t}, \nonumber \\
& \textstyle A^{\top} \tilde{\lambda}^t = \sum_{p=1}^P \tilde{\lambda}^t_p , 
\ \  F(z) := \textstyle \sum_{p=1}^P f_p(z_p),
\ \ \tilde{\xi}^{t,e} := [(\tilde{\xi}^{t,e}_1)^{\top}, \ldots, (\tilde{\xi}^{t,e}_P)^{\top}]^{\top}. \nonumber
\end{align}
Based on the above notation as well as \eqref{Block1:optimality_condition} and \eqref{Block2:smooth_inequality_4}, we derive $\text{LHS}^t(w^*,z^*) \leq \text{RHS}^t(z^*)$ at optimal $w^*$ and $\{z^*_p\}_{p=1}^P \in \mathcal{W}$, where
\begin{subequations}
\begin{align}
  & \text{LHS}^t(w^*,z^*) := \frac{1}{E}\sum_{e=1}^E F(z^{t,e+1}) - F(z^*) - \langle \lambda^{t+1}, z^{t+1}- z^* \rangle + \langle A^{\top} \tilde{\lambda}^t, w^{t+1}-w^* \rangle, \label{Smooth_LHS_t}  \\
  & \text{RHS}^t(z^*) := \frac{1}{E}\sum_{e=1}^E \Big\{ \frac{  \| \tilde{\xi}^{t,e} \|^2}{2(1/\eta^{t}-L)}    + \frac{1}{2\eta^{t}} \big( \|z^* -z^{t,e}\|^2  - \| z^* - z^{t,e+1} \|^2 \big)  + \langle \tilde{\xi}^{t,e}, z^* - z^{t,e} \rangle \Big\}. \label{Smooth_RHS_t}
\end{align}
\end{subequations}

\subsection{Lower bound on $\text{LHS}^t(w^*,z^*)$.} \label{apx-sec:smooth_LB_LHS}
\noindent
Recall that
\begin{align}
  \lambda^{t+1} = \lambda^t + \rho^t (A w^{t+1} - z^{t+1}), \ \  \tilde{\lambda}^{t} = \lambda^t + \rho^t (A w^{t+1} - z^t).  \label{lambdas}
\end{align}
By utilizing \eqref{lambdas}, we rewrite $\text{LHS}^t(w^*,z^*)$ in \eqref{Smooth_LHS_t} as follows:
\begin{subequations} 
\begin{align}
\text{LHS}^t(w^*,z^*) = \frac{1}{E}\sum_{e=1}^E F(z^{t,e+1}) - F(z^*) + \Biggr\langle
\begin{bmatrix}
w^{t+1} - w^* \\
z^{t+1} - z^* \\
\tilde{\lambda}^t - \lambda
\end{bmatrix}
, \
\begin{bmatrix}
A^{\top} \tilde{\lambda}^t             \\
-\tilde{\lambda}^t \\
- Aw^{t+1} + z^{t+1}
\end{bmatrix}
-
\begin{bmatrix}
0 \\
\rho^t (z^t - z^{t+1})        \\
(\lambda^t- \lambda^{t+1} ) / \rho^t
\end{bmatrix}
\Biggr \rangle.  \label{Smooth_LHS_t_LB_1}
\end{align}
The third term in \eqref{Smooth_LHS_t_LB_1} can be written as
  \begin{align}
      &\Biggr \langle
      \begin{bmatrix}
          w^{t+1} - w^* \\
          z^{t+1} - z^*  \\
          \tilde{\lambda}^t - \lambda
      \end{bmatrix}
      ,
      \begin{bmatrix}
      A^{\top} \tilde{\lambda}^t \\
      - \tilde{\lambda}^t \\
      -Aw^{t+1} + z^{t+1}
      \end{bmatrix}
      \Biggr \rangle
      & = \langle \tilde{x}^{t} - x^*, G\tilde{x}^{t} \rangle   
      = \underbrace{\langle \tilde{x}^{t} - x^*, G(\tilde{x}^{t} - x^*) \rangle}_{= 0 \text{ as $G$ is skew-symmetric}} + \langle \tilde{x}^{t} - x^* , G x^* \rangle \nonumber \\
      & = \langle \tilde{x}^{t} - x^*, G x^*  \rangle, \label{Smooth_LHS_t_LB_2}
  \end{align}
Based on \eqref{rule}, the last term in \eqref{Smooth_LHS_t_LB_1} can be written as
    \begin{align}
        & \Biggr\langle
        \begin{bmatrix}
        w^* - w^{t+1} \\
        z^* - z^{t+1}   \\
        \lambda - \tilde{\lambda}^t
        \end{bmatrix}
        , \
        \begin{bmatrix}
        0 \\
        \rho^t (z^t-z^{t+1})  \\
        (\lambda^t- \lambda^{t+1} ) / \rho^t
        \end{bmatrix}
        \Biggr \rangle 
         \nonumber \\
        =  & \textstyle \frac{\rho^t}{2} \big( \| z^* - z^{t+1}\|^2 - \|z^* - z^t\|^2 + \|z^{t+1} - z^t\|^2 \big)+ \frac{1}{2\rho^t} \big( \| \lambda - \lambda^{t+1} \|^2 - \| \lambda - \lambda^t \|^2 + \underbrace{\|\tilde{\lambda}^{t} - \lambda^t \|^2}_{\geq 0}  - \underbrace{\| \tilde{\lambda}^t - \lambda^{t+1} \|^2}_{=\|\rho^t(z^{t+1}-z^t)\|^2} \big) \nonumber \\
        \geq & \textstyle  \frac{\rho^t}{2} \big( \|z^* - z^{t+1}\|^2  - \|z^* - z^t\|^2 \big) +  \frac{1}{2\rho^t}  \big( \| \lambda - \lambda^{t+1} \|^2  - \| \lambda - \lambda^t \|^2 \big). 
    \end{align}  
Therefore, we have
\begin{align}
  & \text{LHS}^t(w^*,z^*) \geq \frac{1}{E}\sum_{e=1}^E F(z^{t,e+1}) - F(z^*) + \langle \tilde{x}^{t} - x^*, G x^*  \rangle  \nonumber \\
  & \hspace{12mm} + \frac{\rho^t}{2} \big( \|z^* - z^{t+1}\|^2  - \|z^*-z^t\|^2 \big) + \frac{1}{2\rho^t} \big( \| \lambda - \lambda^{t+1} \|^2  - \| \lambda - \lambda^t \|^2 \big).  \label{Smooth_LHS_t_LB_4}
\end{align}
\end{subequations}

\subsection{Lower bound on $\text{LHS}(w^*,z^*) :=\frac{1}{T} \sum_{t=1}^T \ \text{LHS}^t(w^*,z^*)$.}
\noindent
Summing \eqref{Smooth_LHS_t_LB_4} over $t \in [T]$ and dividing the resulting inequality by $T$, we have 
\begin{align}
& \text{LHS}(w^*,z^*) \geq \underbrace{\frac{1}{TE} \sum_{t=1}^T \sum_{e=1}^E F(z^{t,e+1})}_{\geq F(z^{(T)}) \text{ as F is convex}} -  F(z^*) + \underbrace{\langle x^{(T)} - x^* , Gx^* \rangle}_{\text{``\texttt{F}''}} \nonumber \\
& \hspace{20mm} + \frac{1}{T} \Big\{ \underbrace{\sum_{t=1}^T  \frac{ \rho^t}{2}  \big( \|z^* - z^{t+1}\|^2  - \|z^*-z^t\|^2 \big)}_{\text{``\texttt{G}''}} +  \underbrace{\sum_{t=1}^T  \frac{1}{2\rho^t} \big( \| \lambda - \lambda^{t+1} \|^2  - \| \lambda - \lambda^t \|^2 \big)}_{\text{``\texttt{H}''}} \Big\}. \label{LB_LHS_1}
\end{align}
The ``\texttt{F}'' term in \eqref{LB_LHS_1} can be written as
\begin{align*}    
\big\langle x^{(T)} - x^*, Gx \big\rangle & = \langle A w^{(T)} - z^{(T)} \underbrace{- Aw^* + z^*}_{=0}, \lambda \rangle - \langle \lambda^{(T)} - \lambda, \underbrace{Aw^*-z^*}_{=0} \rangle = \langle \lambda, Aw^{(T)} - z^{(T)} \rangle.
\end{align*}
The ``\texttt{G}'' term in \eqref{LB_LHS_1} can be written as
\begin{align*}    
& \sum_{t=1}^T \frac{\rho^t}{2}  \big( \|z^* - z^{t+1}\|^2  - \|z^*-z^t\|^2 \big) = - \frac{\rho^1}{2} \|z^*-z^1\|^2 + \sum_{t=2}^T \underbrace{\Big( \frac{\rho^{t-1} - \rho^t}{2} \Big)}_{\leq 0}\|z^* - z^t\|^2 + \underbrace{\frac{\rho^T}{2}\|z^* - z^{T+1}\|^2}_{\geq 0} \nonumber  \\
& \geq  - \frac{\rho^1}{2} U_2^2 + \sum_{t=2}^T \Big( \frac{\rho^{t-1} - \rho^t}{2} \Big) U_2^2 = \frac{ - U_2^2 \rho^{T}}{2} \geq \frac{ - U_2^2 \rho^{\text{max}}}{2}. 
\end{align*}
The ``\texttt{H}'' term in \eqref{LB_LHS_1} can be written as
\begin{align*}              
& \sum_{t=1}^T \frac{1}{2\rho^t} \big( \| \lambda - \lambda^{t+1} \|^2  - \| \lambda - \lambda^t \|^2 \big) = - \frac{1}{2\rho^1} \|\lambda-\lambda^1\|^2 + \underbrace{\sum_{t=2}^T \Big( \frac{1}{2 \rho^{t-1}} - \frac{1}{2 \rho^{t}} \Big)\|\lambda - \lambda^t\|^2 + \frac{1}{2\rho^T}\|\lambda - \lambda^{T+1}\|^2}_{\geq 0} \nonumber  \\
& \geq - \frac{1}{2\rho^1} \|\lambda-\lambda^1\|^2.
\end{align*}
Therefore, we derive
\begin{align}
  & \text{LHS}(w^*,z^*) \geq F(z^{(T)}) - F(z^*) +  \langle \lambda, A w^{(T)} - z^{(T)} \rangle - \frac{1}{T} \Big( \frac{U_2^2  \rho^{\text{max}}}{2} + \frac{1}{2\rho^1} \|\lambda-\lambda^1\|^2 \Big). \label{Smooth_LHS_LB}
\end{align}
Since this inequality holds for any $\lambda$, we select $\lambda$ that maximizes the right-hand side of \eqref{Smooth_LHS_LB} subject to a ball centered at zero with the radius $\gamma$:
\begin{subequations}
\label{max_lambdas}  
\begin{align}
\bullet \ & \max_{\lambda: \| \lambda \| \leq \gamma } \ \langle \lambda, Aw^{(T)}-z^{(T)} \rangle = \gamma \| Aw^{(T)}-z^{(T)} \|,  \\  
\bullet \ & \max_{\lambda: \| \lambda \| \leq \gamma } \ \| \lambda - \lambda^1 \|^2 = \| \lambda^1 \|^2 + \max_{\lambda: \| \lambda \| \leq \gamma } \ \{ \| \lambda\|^2 - 2 \langle \lambda, \lambda^1\rangle \} \leq  (\gamma + \|\lambda^1\|)^2.
\end{align}
\end{subequations}
Based on \eqref{Smooth_LHS_LB} and \eqref{max_lambdas}, we derive
\begin{align}
  \text{LHS}(w^*,z^*) & \geq F(z^{(T)}) - F(z^*) + \gamma \| Aw^{(T)}-z^{(T)} \|  -   \frac{U_2^2  \rho^{\text{max}} +  (\gamma + \|\lambda^1\|)^2 / \rho^1 }{2T}. \label{Smooth_LHS_t_LB_3}
\end{align}

\subsection{Upper bound on $\text{RHS}(z^*) := \frac{1}{T} \sum_{t=1}^T \ \text{RHS}^t(z^*)$.} \label{apx-sec:RHS_UB}
\noindent 
It follows from \eqref{Smooth_RHS_t} that
\begin{align*}
\text{RHS}(z^*) = \frac{1}{TE} \Big[ \sum_{t=1}^T \sum_{e=1}^E  \Big\{  \frac{ \| \tilde{\xi}^{t,e} \|^2}{2(1/\eta^{t}-L)} + \langle \tilde{\xi}^{t,e}, z^* - z^{t,e} \rangle \Big\} + \underbrace{ \sum_{t=1}^T \sum_{e=1}^E \frac{1}{2\eta^{t}} ( \|z^* -z^{t,e} \|^2  - \|z^*-z^{t,e+1}\|^2) }_{\text{``\texttt{I}''}} \Big]. 
\end{align*}
The ``\texttt{I}'' term from the above can be written as
\begin{align}
  & \sum_{t=1}^T \sum_{e=1}^E \frac{1}{2\eta^{t}}\big(\|z^*-z^{t,e}\|^2-\|z^*-z^{t,e+1}\|^2 \big) = \sum_{t=1}^T  \frac{1}{2\eta^{t}}\big(\|z^*-z^{t,1}\|^2-\|z^*- \underbrace{z^{t,E+1}}_{= z^{t+1,1}}\|^2 
  \big) \nonumber \\
  & = \frac{1}{2\eta^1} \| z^* - z^{1,1} \|^2 + \sum_{t=2}^T \underbrace{\Big( \frac{1}{2\eta^{t}} - \frac{1}{2\eta^{t-1}} \Big)}_{\geq 0} \|z^*-z^{t,1}\|^2 \underbrace{ - \frac{1}{2\eta^{T}} \| z^*- z^{T,E+1}  \|^2}_{\leq 0}  \leq \frac{U_2^2}{2\eta^{T}}. \nonumber
\end{align}
Therefore, we have
\begin{align}
\text{RHS}(z^*) \leq \frac{U_2^2}{2 T E \eta^{T}} + \frac{1}{TE} \sum_{t=1}^T \sum_{e=1}^E \Big\{ \frac{ \| \tilde{\xi}^{t,e} \|^2 }{2(1/\eta^{t} - L)} +  \langle \tilde{\xi}^{t,e}, z^* - z^{t,e} \rangle \Big\}. \label{Smooth_RHS_t_LB_1}
\end{align}  

\subsection{Taking expectation.}
\noindent
By taking expectation on the inequality derived from \eqref{Smooth_LHS_t_LB_3} and \eqref{Smooth_RHS_t_LB_1}, we have
\begin{align*}
  & \mathbb{E} \Big[ F(z^{(T)}) - F(z^*) + \gamma \| Aw^{(T)} - z^{(T)} \| \Big]  \leq \frac{U_2^2  \rho^{\text{max}} +  (\gamma + \|\lambda^1\|)^2 / \rho^1 }{2T} \nonumber \\
  & + \frac{U_2^2}{2 T E \eta^{T}} + \frac{1}{TE} \sum_{t=1}^T \sum_{e=1}^E \Big\{ \frac{ 1 }{2(1/\eta^{t} - L)} \sum_{p=1}^P \underbrace{\mathbb{E}[\| \tilde{\xi}^{t,e}_p \|^2]}_{\leq \overline{U}(\bar{\epsilon})} +  \langle \underbrace{\mathbb{E}[\tilde{\xi}^{t,e}]}_{=0}, z^* - z^{t,e} \rangle \Big\},
\end{align*}
where
\begin{align}
  & \overline{U}(\bar{\epsilon}) :=  2 JK U_3^2/\bar{\epsilon}^2  \geq   \sum_{j=1}^J\sum_{k=1}^K 2 (\bar{\Delta}^{t,e}_p)^2/\bar{\epsilon}^2 = \sum_{j=1}^J \sum_{k=1}^K \mathbb{E}[ (\tilde{\xi}^{t,e}_{pjk})^2 ] = \mathbb{E} \big[ \|\tilde{\xi}^{t,e}_p \|^2 \big]. \label{U_eps} 
\end{align}
By noting that
\begin{align*}  
  & \frac{U _2^2}{2 T E \eta^{T}} = \frac{U_2^2 L}{2TE} + \frac{U_2^2 }{2 \bar{\epsilon} E \sqrt{T}},  \\    
  & \sum_{t=1}^T \sum_{e=1}^E \frac{1}{2(1/\eta^{t} - L)} = E \bar{\epsilon} \sum_{t=1}^T \frac{1}{2 \sqrt{t}} \leq E \bar{\epsilon} \sum_{t=1}^T \frac{1}{\sqrt{t} + \sqrt{t-1}} =  E \bar{\epsilon} \sum_{t=1}^T (\sqrt{t} - \sqrt{t-1}) = E \bar{\epsilon} \sqrt{T},  
\end{align*}
we have
\begin{align*}
  & \mathbb{E} \Big[ F(z^{(T)}) - F(z^*) + \gamma \| Aw^{(T)} - z^{(T)} \| \Big]  \leq \frac{U_2^2  (\rho^{\text{max}} +L/E) +  (\gamma + \|\lambda^1\|)^2 / \rho^1 }{2T}   + \frac{ 2PJKU_3^2 + U_2^2/(2E) }{ \bar{\epsilon} \sqrt{T} }.
\end{align*}
This completes the proof.

%%%%%%%%%%%%%%%%%%%%%%%%%%%%%%%%%%%%%%%%%%%%%%%%%%%%%%%%
%%%%%%%%%%%%%%%%%%%%%%%%%%%%%%%%%%%%%%%%%%%%%%%%%%%%%%%%
%%%%%%%%%%%%%%%%%%%%%%%%%%%%%%%%%%%%%%%%%%%%%%%%%%%%%%%%
 
\section{Proof of Theorem \ref{thm:nonsmooth} } \label{apx-thm:nonsmooth}
\noindent
The proof in this section is similar to that in Appendix \ref{apx-thm:smooth} except that
\begin{enumerate}
\item the $L$-smoothness of $f_p$ can no longer be applied to the ``\texttt{D}'' term in \eqref{Block2:smooth_inequality_1} when deriving an upper bound of the term in a nonsmooth setting and
\item the definition of $(\tilde{x}^t, z^{(T)})$ is different from that in \eqref{Notation_smooth} of Appendix \ref{apx-thm:smooth}.
\end{enumerate}

\subsection{Inequality derivation for a fixed iteration $t$ and $e$.}
\noindent  
Applying Young's inequality on the ``\texttt{D}'' term in \eqref{Block2:smooth_inequality_1} yields
\begin{align}
& f_p(z^{t,e}_p) - f_p(z_p) -  \langle \lambda^{t+1}_p, z^{t,e+1}_p-z_p \rangle \leq \rho^t \langle z^{t+1}_p - z_p^{t,e+1}, z_p^{t,e+1} - z_p \rangle  + \nonumber \\
& \frac{\eta^{t}}{2} \| f'_p(z^{t,e}_p) + \tilde{\xi}^{t,e}_p\|^2  + \frac{1}{2\eta^{t}} \Big\{ \|z_p-z^{t,e}_p\|^2- \|z_p-z^{t,e+1}_p\|^2 \Big\}  + \langle  \tilde{\xi}^{t,e}_p, z_p - z^{t,e}_p \rangle, \ \forall z_p \in \mathcal{W}. \label{Block2:nonsmooth_inequality_1}
\end{align}

\subsection{Inequality derivation for a fixed iteration $t$.} \label{apx-sec:nonsmooth_ineq}
\noindent
Summing \eqref{Block2:nonsmooth_inequality_1} over all $e \in [E]$ and dividing the resulting inequalities by $E$, we get 
\begin{align}
  & \frac{1}{E} \sum_{e=1}^E f_p(z^{t,e}_p) - f_p(z_p) -  \langle \lambda^{t+1}_p, \underbrace{\frac{1}{E} \sum_{e=1}^E z^{t,e+1}_p}_{=z^{t+1}_p}-z_p \rangle \leq  \rho^t  \underbrace{ \frac{1}{E} \sum_{e=1}^E \langle z_p^{t+1} - z_p^{t,e+1}, z_p^{t,e+1} - z_p \rangle}_{\leq 0 \text{ from \eqref{non_positive_trick}} } + \nonumber \\
  & \frac{1}{E} \sum_{e=1}^E \Big\{ \frac{\eta^{t}}{2} \| f'_p(z^{t,e}_p) + \tilde{\xi}^{t,e}_p\|^2  + \frac{1}{2\eta^{t}} \Big( \|z_p-z^{t,e}_p\|^2- \|z_p-z^{t,e+1}_p\|^2 \Big) + \langle  \tilde{\xi}^{t,e}_p, z_p - z^{t,e}_p \rangle \Big\}. \label{Block2:nonsmooth_inequality_2}
\end{align}
Summing the inequalities \eqref{Block2:nonsmooth_inequality_2} over $p \in [P]$, we have 
\begin{align}
  &\sum_{p=1}^P \Big[ \frac{1}{E} \sum_{e=1}^E f_p(z^{t,e}_p) - f_p(z_p) -  \langle \lambda^{t+1}_p,  z^{t+1}_p -z_p \rangle \Big] \leq   \nonumber \\
  &\sum_{p=1}^P \Big[ \frac{1}{E} \sum_{e=1}^E \Big\{ \frac{\eta^{t}}{2} \| f'_p(z^{t,e}_p) + \tilde{\xi}^{t,e}_p\|^2  + \frac{1}{2\eta^{t}} \Big( \|z_p-z^{t,e}_p\|^2- \|z_p-z^{t,e+1}_p\|^2 \Big) + \langle  \tilde{\xi}^{t,e}_p, z_p - z^{t,e}_p \rangle \Big\} \Big].  \label{Block2:nonsmooth_inequality_3}
\end{align}

\noindent
For ease of exposition, we introduce $z, \lambda, \tilde{\lambda}, x, x^*, A, G, x^{(T)}, w^{(T)}, \lambda^{(T)}, A^{\top} \tilde{\lambda}^t, F(z), \tilde{\xi}^{t,e}$ defined in \eqref{Notation_smooth} with modifications of the following notation:
\begin{subequations}
\label{nonsmooth_notations}
\begin{align}
& \tilde{x}^t := \begin{bmatrix}
  w^{t+1} \\ z^{t} \\ \tilde{\lambda}^t
  \end{bmatrix}, \ \
z^{(T)} := \textstyle \frac{1}{TE} \sum_{t=1}^T \sum_{e=1}^{E} z^{t,e}.
\end{align}
We also define
\begin{align}
f'(z):=[ f'_1(z_1)^{\top}, \ldots, f'_P(z_P)^{\top} ]^{\top}.
\end{align}
\end{subequations}
Based on this notation as well as \eqref{Block1:optimality_condition} and \eqref{Block2:nonsmooth_inequality_3}, we derive $\text{LHS}^t(w^*,z^*) \leq \text{RHS}^t(z^*)$ at optimal $w^*$ and $\{z^*_p\}_{p=1}^P \in \mathcal{W}$, where
\begin{subequations}
\begin{align}
  & \text{LHS}^t(w^*,z^*) :=  \frac{1}{E}\sum_{e=1}^E F(z^{t,e}) - F(z^*) - \langle \lambda^{t+1}, z^{t+1}- z^* \rangle + \langle A^{\top} \tilde{\lambda}^t, w^{t+1}-w^* \rangle, \label{Nonsmooth_LHS_t}  \\
  & \text{RHS}^t(z^*) :=  \frac{1}{E}\sum_{e=1}^E \Big\{ \frac{\eta^{t}}{2} \| f'(z^{t,e}) + \tilde{\xi}^{t,e} \|^2   + \frac{1}{2\eta^{t}} \big( \|z^* -z^{t,e}\|^2  - \| z^* - z^{t,e+1} \|^2 \big)  + \langle \tilde{\xi}^{t,e}, z^* - z^{t,e} \rangle \Big\}. \label{Nonsmooth_RHS_t}
\end{align}
\end{subequations}

\subsection{Lower bound on $\text{LHS}^t(w^*,z^*)$.}
\label{apx:nonsmooth-B}
\noindent
By following the steps in Appendix \ref{apx-sec:smooth_LB_LHS}, one can derive inequalities similar to \eqref{Smooth_LHS_t_LB_4}, as follows:
\begin{align}
  & \text{LHS}^t(w^*,z^*) \geq \frac{1}{E}\sum_{e=1}^E F(z^{t,e}) - F(z^*) + \langle \tilde{x}^{t} - x^*, G x^*  \rangle \underbrace{- \langle \lambda, z^{t+1} - z^t \rangle}_{\text{``\texttt{J}''}} \nonumber \\
  & \hspace{12mm} + \frac{\rho^t}{2} \big( \|z^* - z^{t+1}\|^2  - \|z^*-z^t\|^2 \big) + \frac{1}{2\rho^t} \big( \| \lambda - \lambda^{t+1} \|^2  - \| \lambda - \lambda^t \|^2 \big).  \label{Nonsmooth_LHS_t_LB}
\end{align}
Note that the ``\texttt{J}'' term in \eqref{Nonsmooth_LHS_t_LB} does not exist in \eqref{Smooth_LHS_t_LB_4} because the definition of $\tilde{x}^t$ in \eqref{nonsmooth_notations} is different from that in \eqref{Notation_smooth}.

\subsection{Lower bound on $\text{LHS}(w^*,z^*) :=\frac{1}{T} \sum_{t=1}^T \ \text{LHS}^t(w^*,z^*)$.}
\noindent 
Summing \eqref{Nonsmooth_LHS_t_LB} over $t \in [T]$ and dividing the resulting inequality by $T$, we have
\begin{align}
& \text{LHS}(w^*,z^*) \geq \underbrace{\frac{1}{TE}  \sum_{t=1}^T \sum_{e=1}^E F(z^{t,e}) }_{\geq F(z^{(T)}) \text{ as F is convex} }-  F(z^*) + \underbrace{\langle x^{(T)} - x^* , Gx^* \rangle}_{= \text{``\texttt{F}'' term in \eqref{LB_LHS_1}} } - \underbrace{\frac{1}{T} \langle \lambda, z^{T+1} - z^1 \rangle}_{\text{``\texttt{K}''}} \nonumber \\
& \hspace{20mm} + \frac{1}{T} \Big\{ \underbrace{\sum_{t=1}^T  \frac{ \rho^t}{2}  \big( \|z^* - z^{t+1}\|^2  - \|z^*-z^t\|^2 \big)}_{= \text{``\texttt{G}'' term in \eqref{LB_LHS_1}}} +  \underbrace{\sum_{t=1}^T  \frac{1}{2\rho^t} \big( \| \lambda - \lambda^{t+1} \|^2  - \| \lambda - \lambda^t \|^2 \big)}_{= \text{``\texttt{H}'' term in \eqref{LB_LHS_1}}} \Big\}. \label{nonsmooth_LB_LHS_1}
\end{align}
The ``\texttt{K}'' term in \eqref{nonsmooth_LB_LHS_1} can be written as
\begin{align*}  
- \frac{1}{T} \langle \lambda, z^{T+1} - z^1 \rangle \geq - \frac{1}{T}  \| \lambda \| \| z^{T+1} - z^1 \| \geq - \| \lambda \| U_2.
\end{align*}
Therefore, we have
\begin{align}
  & \text{LHS}(w^*,z^*) \geq F(z^{(T)}) - F(z^*) +  \langle \lambda, A w^{(T)} - z^{(T)} \rangle - \frac{1}{T} \Big(\| \lambda \| U_2 + \frac{U_2^2  \rho^{\text{max}}}{2} + \frac{1}{2\rho^1} \|\lambda-\lambda^1\|^2 \Big). \nonumber
\end{align}
Based on \eqref{max_lambdas} and $\max_{\lambda: \| \lambda \| \leq \gamma } \ \| \lambda \| U_2 \leq \gamma U_2$, we have
\begin{align}
  \text{LHS}(w^*,z^*) & \geq F(z^{(T)}) - F(z^*) + \gamma \| Aw^{(T)}-z^{(T)} \|  -   \frac{U_2^2 \rho^{\text{max}} + (\gamma + \|\lambda^1\|)^2 / \rho^1 + 2 \gamma U_2 }{2T}. \label{Nonsmooth_LHS_LB_7}
\end{align}

\subsection{Upper bound on $\text{RHS}(z^*) := \frac{1}{T} \sum_{t=1}^T \ \text{RHS}^t(z^*)$.}
\noindent 
By following the steps in Appendix \ref{apx-sec:RHS_UB}, we obtain
\begin{align}
\text{RHS}(z^*) \leq \frac{U_2^2}{2 TE \eta^{T}} + \frac{1}{TE} \sum_{t=1}^T \sum_{e=1}^E \Big\{ \frac{\eta^{t}}{2} \| f'(z^{t,e}) + \tilde{\xi}^{t,e} \|^2 +  \langle \tilde{\xi}^{t,e}, z^* - z^{t,e} \rangle \Big\}. \label{Nonsmooth_RHS_UB_2}
\end{align}  

\subsection{Taking expectation.}
\noindent
By taking expectation on the inequality derived from \eqref{Nonsmooth_LHS_LB_7} and \eqref{Nonsmooth_RHS_UB_2}, we have
\begin{align*}
  & \mathbb{E} \Big[ F(z^{(T)}) - F(z^*) + \gamma \| Aw^{(T)} - z^{(T)} \| \Big]  \leq \frac{U_2^2 \rho^{\text{max}} + (\gamma + \|\lambda^1\|)^2 / \rho^1 + 2\gamma U_2}{2T} \nonumber \\
  & + \frac{U_2^2}{2 TE \eta^{T}} + \frac{1}{TE} \sum_{t=1}^T \sum_{e=1}^E \Big\{ \frac{\eta^{t}}{2} \sum_{p=1}^P \underbrace{\mathbb{E}[ \| f'_p(z_p^{t,e}) + \tilde{\xi}^{t,e}_p \|^2 ]}_{\leq U_1^2+\overline{U}(\bar{\epsilon})} +  \langle \underbrace{\mathbb{E}[\tilde{\xi}^{t,e}]}_{=0}, z^* - z^{t,e} \rangle \Big\},
\end{align*}
where $\overline{U}(\bar{\epsilon})$ is from \eqref{U_eps}.
We note that
\begin{align*}
  & \frac{U_2^2}{2 TE \eta^{T}} = \frac{U_2^2 /(2E)}{ \sqrt{T}},  \\
  & \sum_{t=1}^T \sum_{e=1}^E \frac{\eta^{t}}{2}   = E \sum_{t=1}^T \frac{1}{2 \sqrt{t}} \leq E \sum_{t=1}^T \frac{1}{\sqrt{t} + \sqrt{t-1}} = E \sum_{t=1}^T   (\sqrt{t} - \sqrt{t-1}) =  E  \sqrt{T}.
\end{align*}
Therefore, we have
\begin{align*}
  & \mathbb{E} \Big[ F(z^{(T)}) - F(z^*) + \gamma \| Aw^{(T)} - z^{(T)} \| \Big]  \leq \frac{U_2^2 \rho^{\text{max}} + (\gamma + \|\lambda^1\|)^2 / \rho^1 +  2\gamma U_2}{2T}   + \frac{ 2PJKU_3^2 /\bar{\epsilon} + PU_1^2 + U_2^2 / (2E) }{ \sqrt{T} }.
\end{align*}
This completes the proof. 

%%%%%%%%%%%%%%%%%%%%%%%%%%%%%%%%%%%%%%%%%%%%%%%%%%%%%%%%
%%%%%%%%%%%%%%%%%%%%%%%%%%%%%%%%%%%%%%%%%%%%%%%%%%%%%%%%
%%%%%%%%%%%%%%%%%%%%%%%%%%%%%%%%%%%%%%%%%%%%%%%%%%%%%%%%

\section{Proof of Theorem \ref{thm:strong} } \label{apx-thm:strong}
\noindent
The proof in this section is similar to that in Appendix \ref{apx-thm:nonsmooth} except that
\begin{enumerate}
  \item the $\alpha$-strong convexity of $f_p$ is utilized to tighten the right-hand side of inequality \eqref{Block2:nonsmooth_inequality_1} and
  \item the definition of $x^{(T)}, w^{(T)}, z^{(T)}, \lambda^{(T)}$ is modified to the following:
    \begin{align}  
    & x^{(T)} := \textstyle \frac{2}{T(T+1)} \sum_{t=1}^T t \tilde{x}^{t}, \ \
    w^{(T)} := \textstyle \frac{2}{T(T+1)} \sum_{t=1}^T t w^{t+1},  \label{Notation_strong}  \\
    & z^{(T)} := \textstyle \frac{2}{T(T+1)} \sum_{t=1}^T t  (\frac{1}{E} \sum_{e=1}^E z^{t,e}), \ \
    \lambda^{(T)} := \textstyle \frac{2}{T(T+1)} \sum_{t=1}^T t \tilde{\lambda}^{t}.  \nonumber
    \end{align}
\end{enumerate}

\subsection{Inequality derivation for a fixed iteration $t$ and $e$.}
\noindent
For a given $p \in [P]$, it follows from the $\alpha$-strong convexity of the function $f_p$ that
\begin{align}
f_p(z^{t,e}_p) - f_p(z_p)  \leq \langle f'_p(z^{t,e}_p), z^{t,e}_p - z_p \rangle - \frac{\alpha}{2}\| z_p - z^{t,e}_p\|^2. \label{Block2:strong_inequality_1}
\end{align} 
By utilizing \eqref{Block2:strong_inequality_1} and \eqref{Block2:nonsmooth_inequality_1}, we obtain
\begin{align}
& f_p(z^{t,e}_p) - f_p(z_p) - \langle  \lambda^{t+1}_p, z^{t,e+1}_p-z_p \rangle \leq \rho^t \langle z_p^{t+1} - z_p^{t,e+1}, z_p^{t,e+1} - z_p \rangle  - \frac{\alpha}{2}\| z_p - z^{t,e}_p\|^2 + \nonumber \\
& \frac{\eta^{t}}{2} \| f'_p(z^{t,e}_p) + \tilde{\xi}^{t,e}_p\|^2  + \frac{1}{2\eta^{t}} \Big( \|z_p-z^{t,e}_p\|^2- \|z_p-z^{t,e+1}_p\|^2 \Big ) + \langle  \tilde{\xi}^{t,e}_p, z_p - z^{t,e}_p \rangle, \ \forall z_p \in \mathcal{W}.\label{Block2:strong_inequality_2}
\end{align}
Note that compared with \eqref{Block2:nonsmooth_inequality_1}, the inequalities \eqref{Block2:strong_inequality_2} have an additional term $-(\alpha/2)\|z_p-z_p^{t,e}\|^2$. 

\subsection{Inequality derivation for a fixed iteration $t$.}
\noindent
Following the steps to derive \eqref{Block2:nonsmooth_inequality_3} in Appendix \ref{apx-sec:nonsmooth_ineq}, we derive the following from \eqref{Block2:strong_inequality_2}:
\begin{align}
  & \sum_{p=1}^P \Big[ \frac{1}{E} \sum_{e=1}^E f_p(z^{t,e}_p) - f_p(z_p) -  \langle \lambda^{t+1}_p,  z^{t+1}_p -z_p \rangle \Big] \leq   \nonumber \\
  & \sum_{p=1}^P \Big[ \frac{1}{E} \sum_{e=1}^E \Big\{ \frac{\eta^{t}}{2} \| f'_p(z^{t,e}_p) + \tilde{\xi}^{t,e}_p\|^2  + \big(\frac{1}{2\eta^{t}} - \frac{\alpha}{2} \big) \|z_p-z^{t,e}_p\|^2 - \frac{1}{2\eta^{t}} \|z_p-z^{t,e+1}_p\|^2  + \langle  \tilde{\xi}^{t,e}_p, z_p - z^{t,e}_p \rangle \Big\} \Big]. \label{Block2:strong_inequality_3}
\end{align}
For ease of exposition, we introduce $z, \lambda, \tilde{\lambda}, x, x^*, A, G, A^{\top} \tilde{\lambda}^t, F(z), \tilde{\xi}^{t,e}$ as defined in \eqref{Notation_smooth}, $\tilde{x}^t, f'(z)$ as defined in \eqref{nonsmooth_notations}, and the definition \eqref{Notation_strong}.
Based on this notation as well as \eqref{Block1:optimality_condition} and \eqref{Block2:strong_inequality_3}, we derive $\text{LHS}^t(w^*,z^*) \leq \text{RHS}^t(z^*)$ at optimal $w^*$ and $\{z^*_p\}_{p=1}^P \in \mathcal{W}$, where
\begin{subequations}
  \begin{align}
    & \text{LHS}^t(w^*,z^*) :=  \frac{1}{E} \sum_{e=1}^E F(z^{t,e}) - F(z^*) - \langle \lambda^{t+1}, z^{t+1}- z^* \rangle + \langle A^{\top} \tilde{\lambda}^t, w^{t+1}-w^* \rangle, \label{Strong_LHS_t}  \\
    & \text{RHS}^t(z^*) := \frac{1}{E} \sum_{e=1}^E \Big\{  \frac{\eta^{t}}{2} \| f'(z^{t,e}) + \tilde{\xi}^{t,e} \|^2   + \big(\frac{1}{2\eta^{t}} - \frac{\alpha}{2})  \|z^* -z^{t,e}\|^2  - \frac{1}{2\eta^{t}} \| z^* - z^{t,e+1} \|^2  + \langle \tilde{\xi}^{t,e}, z^* - z^{t,e} \rangle \Big\}. \label{Strong_RHS_t}
  \end{align}
\end{subequations}

\subsection{Lower bound on $\text{LHS}(w^*,z^*) :=\frac{2}{T(T+1)} \sum_{t=1}^T t \ \text{LHS}^t(w^*,z^*)$.}
\noindent
By utilizing \eqref{Nonsmooth_LHS_t_LB}, namely, a lower bound on \eqref{Strong_LHS_t},  we have
\begin{align}
& \text{LHS}(w^*,z^*) \geq \underbrace{\frac{2}{T(T+1)} \sum_{t=1}^T t \big( \frac{1}{E}\sum_{e=1}^E F(z^{t,e}) \big)}_{\geq F(z^{(T)}) \text{ as F is convex}} - F(z^*) +  \underbrace{\langle x^{(T)} - x^*, Gx^* \rangle}_{= \text{``\texttt{F}'' term in \eqref{LB_LHS_1}} } - \frac{2}{T(T+1)} \underbrace{\langle \lambda, \sum_{t=1}^T t (z^{t+1} - z^t ) \rangle}_{\text{``\texttt{L}''}}  \nonumber \\
& \hspace{20mm} + \frac{2}{T(T+1)}  \Big\{  \underbrace{\sum_{t=1}^T \frac{t \rho^t}{2}  \big( \|z^* - z^{t+1}\|^2  - \|z^* - z^t\|^2 \big)}_{\text{``\texttt{M}''}} + \underbrace{\sum_{t=1}^T \frac{t}{2\rho^t} \big( \| \lambda - \lambda^{t+1} \|^2  - \| \lambda - \lambda^t \|^2 \big)}_{\text{``\texttt{N}''}} \Big\}. \label{strong:LHS_LB}
\end{align}
The ``\texttt{L}'' term in \eqref{strong:LHS_LB} can be written as
\begin{align*}
\langle \lambda, \sum_{t=1}^T t (z^{t+1}-z^t) \rangle = \langle \lambda,  \sum_{t=1}^T (z^{T+1}-z^t) \rangle \leq \sum_{t=1}^T \| \lambda \| \| z^{T+1} - z^t \| \leq \sum_{t=1}^T \| \lambda \| U_2 =  T U_2 \| \lambda \|.  
\end{align*}
The ``\texttt{M}'' term in \eqref{strong:LHS_LB} can be written as
\begin{align*}
& \sum_{t=1}^T \frac{t \rho^t}{2} \big( \| z^* - z^{t+1} \|^2 - \|z^* - z^t\|^2 \big)   \\
& = - \frac{\rho^1}{2} \|z^*-z^1\|^2 + \sum_{t=2}^T \Big( \underbrace{\frac{(t-1)\rho^{t-1} - t\rho^t}{2}}_{\leq 0 \text{ as } \rho^t \geq \rho^{t-1}} \Big)\|z^*-z^t \|^2 + \frac{T\rho^T}{2}\|z^*-z^{T+1} \|^2    \\
& \geq - \frac{\rho^1 U_2^2}{2}  + \sum_{t=2}^T \Big( \frac{(t-1)\rho^{t-1} - t\rho^t}{2} \Big) U_2^2 = \frac{ - U_2^2 T \rho^{T}}{2} \geq \frac{ - U_2^2 T \rho^{\text{max}}}{2}.  
\end{align*}
The ``\texttt{N}'' term in \eqref{strong:LHS_LB} can be written as
\begin{align*}      
& \sum_{t=1}^T \frac{t}{2\rho^t} \big( \| \lambda - \lambda^{t+1}\|^2  - \| \lambda - \lambda^t \|^2 \big) \geq - \frac{1}{2\rho^1} \| \lambda - \lambda^1 \|^2 + \sum_{t=2}^T \Big( \frac{t-1}{2 \rho^{t-1}} - \frac{t}{2 \rho^{t}} \Big)\| \lambda - \lambda^t \|^2. 
\end{align*}
Therefore, we have
\begin{align}
  & \text{LHS}(w^*,z^*) \geq F(z^{(T)}) - F(z^*) +  \langle \lambda, A w^{(T)} - z^{(T)} \rangle - \frac{2 U_2 \| \lambda \|}{T+1} - \frac{U_2^2 \rho^{\text{max}}}{T+1} \nonumber  \\
  & \hspace{20mm} + \frac{2}{T(T+1)} \Big( - \frac{1}{2\rho^1} \| \lambda - \lambda^1 \|^2 + \sum_{t=2}^T \underbrace{\Big( \frac{t-1}{2 \rho^{t-1}} - \frac{t}{2 \rho^{t}} \Big)}_{ \leq 0 \text{ by Assumption \ref{assump:convergence_stronglyconvex} (ii)}} \| \lambda - \lambda^t \|^2 \Big). \label{strong:LHS_LB_1}
\end{align}
In addition to \eqref{max_lambdas}, by Assumption \ref{assump:convergence_stronglyconvex} (i), we have
\begin{align*}
\bullet \ & \max_{\lambda: \| \lambda \| \leq \gamma } \ \| \lambda - \lambda^t \|^2 = \| \lambda^t \|^2 + \max_{\lambda: \| \lambda \| \leq \gamma } \ \left\{ \| \lambda\|^2 - 2 \langle \lambda, \lambda^t\rangle \right\} \leq  4 \gamma^2.
\end{align*}
By utilizing this to derive a lower bound of the last term in \eqref{strong:LHS_LB_1}, we have
\begin{align*}
- \frac{1}{2\rho^1} \| \lambda - \lambda^1 \|^2 + \sum_{t=2}^T \Big( \frac{t-1}{2 \rho^{t-1}} - \frac{t}{2 \rho^{t}} \Big) \| \lambda - \lambda^t \|^2 \geq 
- \frac{T}{2\rho^T} 4\gamma^2  \geq - \frac{T}{2\rho^1} 4\gamma^2 .
\end{align*}
Therefore, we have
\begin{align}
  \text{LHS}(w^*,z^*) \geq F(z^{(T)}) - F(z^*) + \gamma \| Aw^{(T)}-z^{(T)} \| - \frac{2 U_2 \gamma + U_2^2 \rho^{\text{max}} + 4 \gamma^2 / \rho^1 }{T+1}. \label{Strong_LHS_t_LB_1}
\end{align}

\subsection{Upper bound on $\text{RHS}(z^*) := \frac{2}{T(T+1)} \sum_{t=1}^T t \  \text{RHS}^t(z^*)$.}
\noindent
It follows from \eqref{Strong_RHS_t} that
\begin{align*}
\text{RHS}(z^*) = & \frac{2}{T(T+1)} \sum_{t=1}^T t \Big[  \frac{1}{E} \sum_{e=1}^E \Big\{  \frac{\eta^{t}}{2} \| f'(z^{t,e}) + \tilde{\xi}^{t,e} \|^2  \\
& + \big(\frac{1}{2\eta^{t}} - \frac{\alpha}{2})  \|z^* -z^{t,e}\|^2  - \frac{1}{2\eta^{t}} \| z^* - z^{t,e+1} \|^2  + \langle \tilde{\xi}^{t,e}, z^* - z^{t,e} \rangle \Big\} \Big]. 
\end{align*}
Note that
\begin{align*}
\bullet \ & \eta^{t} = 2/(\alpha(t+2)), \\
\bullet \ &  \sum_{t=1}^T \sum_{e=1}^E t \frac{\eta^{t}}{2} \| f'(z^{t,e}) + \tilde{\xi}^{t,e} \|^2 = \sum_{t=1}^T \sum_{e=1}^E  \frac{t}{\alpha(t+2)}  \| f'(z^{t,e}) + \tilde{\xi}^{t,e} \|^2, \\
\bullet \ & \sum_{t=1}^T \sum_{e=1}^E t \Big\{ (\frac{1}{2\eta^{t}} - \frac{\alpha}{2} ) \|z^* -z^{t,e} \|^2  - \frac{1}{2\eta^{t}} \|z^*-z^{t,e+1}\|^2 \Big\} = \frac{\alpha}{4} \sum_{t=1}^T \sum_{e=1}^E  \Big\{ t^2 \|z^* -z^{t,e} \|^2  - (t^2+2t) \|z^*-z^{t,e+1}\|^2 \Big\} \nonumber \\
& = \frac{\alpha}{4} \sum_{t=1}^T \Big\{  t^2   \Big( \|z^* -z^{t,1} \|^2  - \|z^*-z^{t,E+1}\|^2 \Big) -2t  \|z^*-z^{t,E+1}\|^2 \underbrace{- 2t\sum_{e=1}^{E-1} \|z^*-z^{t,e+1}\|^2}_{\leq 0}   \Big\} \nonumber \\
& \leq \frac{\alpha}{4} \sum_{t=1}^T \Big\{  t^2  \|z^* -z^{t,1} \|^2  - t(t+2) \|z^*-z^{t,E+1}\|^2 \Big\} \nonumber \\
& = \frac{\alpha}{4} \Big\{  \|z^* -z^{1,1} \|^2  + \sum_{t=2}^T t^2  \|z^* - \underbrace{z^{t,1}}_{ = z^{t-1,E+1}} \|^2  - \underbrace{\sum_{t=1}^{T-1} t(t+2) \|z^*-z^{t,E+1}\|^2}_{=\sum_{t=2}^{T} (t^2-1) \|z^*-z^{t-1,E+1}\|^2} \underbrace{- T(T+2) \|z^*-z^{T,E+1}\|^2}_{\leq 0} \Big\} \nonumber \\
& \leq \frac{\alpha}{4} \Big\{  \|z^* -z^{1,1} \|^2  + \sum_{t=2}^T  \|z^* - z^{t-1,E+1} \|^2  \Big\}   \leq \frac{\alpha}{4} T U_2^2.
\end{align*}  
Therefore, we have
\begin{align}
\text{RHS}(z^*) \leq \frac{\alpha U_2^2/E}{2(T+1)} + \frac{2}{ET(T+1)} \sum_{t=1}^T \sum_{e=1}^E \Big\{ \frac{t}{\alpha(t+2)} \sum_{p=1}^P \| f'_p(z^t_p) + \tilde{\xi}^{t,e}_p \|^2 +   t \langle \tilde{\xi}^{t,e}, z^* - z^{t,e} \rangle \Big\}. \label{Strong_RHS_t_LB_1}
\end{align}  

\subsection{Taking expectation.}
\noindent
By taking expectation on the inequality derived from \eqref{Strong_LHS_t_LB_1} and \eqref{Strong_RHS_t_LB_1}, we have
\begin{align*}
  & \mathbb{E} \Big[ F(z^{(T)}) - F(z^*) + \gamma \| Aw^{(T)} - z^{(T)} \| \Big]  \leq \frac{2 U_2 \gamma + U_2^2 \rho^{\text{max}} + 4 \gamma^2 / \rho^1 + \alpha U_2^2/(2E) }{T+1}
   \nonumber \\
   & + \frac{2}{ET(T+1)} \sum_{t=1}^T \sum_{e=1}^E \Big\{ \frac{t}{\alpha(t+2)} \sum_{p=1}^P \underbrace{\mathbb{E}\big[ \| f'_p(z^{t,e}_p) + \tilde{\xi}^{t,e}_p \|^2}_{\leq U_1^2+\overline{U}(\bar{\epsilon})} \big] + t  \langle \underbrace{\mathbb{E}[ \tilde{\xi}^{t,e}]}_{=0}, z^* - z^t \rangle \big] \\
\leq & \frac{2 U_2 \gamma + U_2^2 \rho^{\text{max}} + 4 \gamma^2 / \rho^1 + \alpha U_2^2/(2E) }{T+1}
+ \frac{2}{ET(T+1)} \frac{EP(U_1^2+\overline{U}(\bar{\epsilon}))}{\alpha} \underbrace{\sum_{t=1}^T \frac{t}{t+2}}_{ =  T - \sum_{t=3}^{T+2} \frac{2}{t} \leq T} \nonumber \\
\leq & \frac{2 U_2 \gamma + U_2^2 \rho^{\text{max}} + 4 \gamma^2 / \rho^1 + \alpha U_2^2/(2E) + 2P(U_1^2+\overline{U}(\bar{\epsilon}) )/\alpha }{T+1},
\end{align*}
where $\overline{U}(\bar{\epsilon})$ is from \eqref{U_eps}.
This completes the proof.

\section{Multiclass Logistic Regression Model} \label{apx:model}
The multiclass logistic regression model considered in this paper is \eqref{ERM_0} with
\begin{align}
  & \ell(w; x_{pi},y_{pi}) := - \textstyle\sum_{k=1}^K  y_{pik} \ln \big(h_k(w;x_{pi})\big), \ \forall p \in [P], \forall i \in [I_p],  \nonumber \\
  & h_k(w;x_{pi}) :=  \frac{\exp(\textstyle\sum_{j=1}^J  x_{pij} w_{jk}) }{\sum_{k'=1}^K \exp(\textstyle\sum_{j=1}^J  x_{pij} w_{jk'}) }, \ \forall p \in [P], \forall i \in [I_p], \forall k \in [K], \nonumber  \\
  & r(w) := \textstyle\sum_{j=1}^J \sum_{k=1}^K w_{jk}^2, \nonumber  \nonumber  \\
  & f_p(w) = \textstyle - \frac{1}{I} \sum_{i=1}^{I_p} \sum_{k=1}^K \big\{ y_{pik} \ln (h_k(w;x_{pi})) \big\} + \frac{\beta}{P} \sum_{j=1}^J \sum_{k=1}^K  w_{jk}^2, \ \forall p \in [P], \nonumber  \\
  & \nabla_{w_{jk}} f_p(w) = \textstyle\frac{1}{I}  \sum_{i=1}^{I_p} x_{pij} (h_k(w;x_{pi})-y_{pik}) + \frac{2\beta}{P} w_{jk} , \ \forall p \in [P], \forall j \in [J], \forall k \in [K]. \label{gradient}
\end{align}

\section{Choice of the Penalty Parameter $\rho^t$} \label{apx-hyperparameter-rho}
We test various $\rho^t$ for our algorithms and set it as $\hat{\rho}^t$ in \eqref{dynamic_rho} with (i) $c_1=2$, $c_2=5$, and $T_c=10000$ for MNIST and (ii) $c_1=0.005$, $c_2=0.05$, and $T_c=2000$ for FEMNIST.

Since these parameter settings may not lead \texttt{OutP} to its best performance, we test various $\rho^t$ for \texttt{OutP} using a set of static parameters, $\rho^t \in \{0.1, 1, 10\}$ for all $t \in [T]$, where $\rho^t=0.1$ is chosen in \cite{huang2019dp}, and dynamic parameters $\rho^t \in \{\hat{\rho}^t, \hat{\rho}^t/100 \}$, where $\hat{\rho}^t$ is from \eqref{dynamic_rho}.
In Figure \ref{fig:hyper_rho} we report the testing errors of \texttt{OutP} using MNIST and FEMNIST under various $\rho^t$ and $\bar{\epsilon}$.
The results imply that the performance of \texttt{OutP} is not greatly affected by the choice of $\rho^t$, but $\bar{\epsilon}$.
Hence, for all algorithms, we use $\hat{\rho}^t$ in \eqref{dynamic_rho}.

\begin{figure}[!htt]
  \centering
  \begin{subfigure}[b]{0.24\textwidth}
      \centering
      \includegraphics[width=\textwidth]{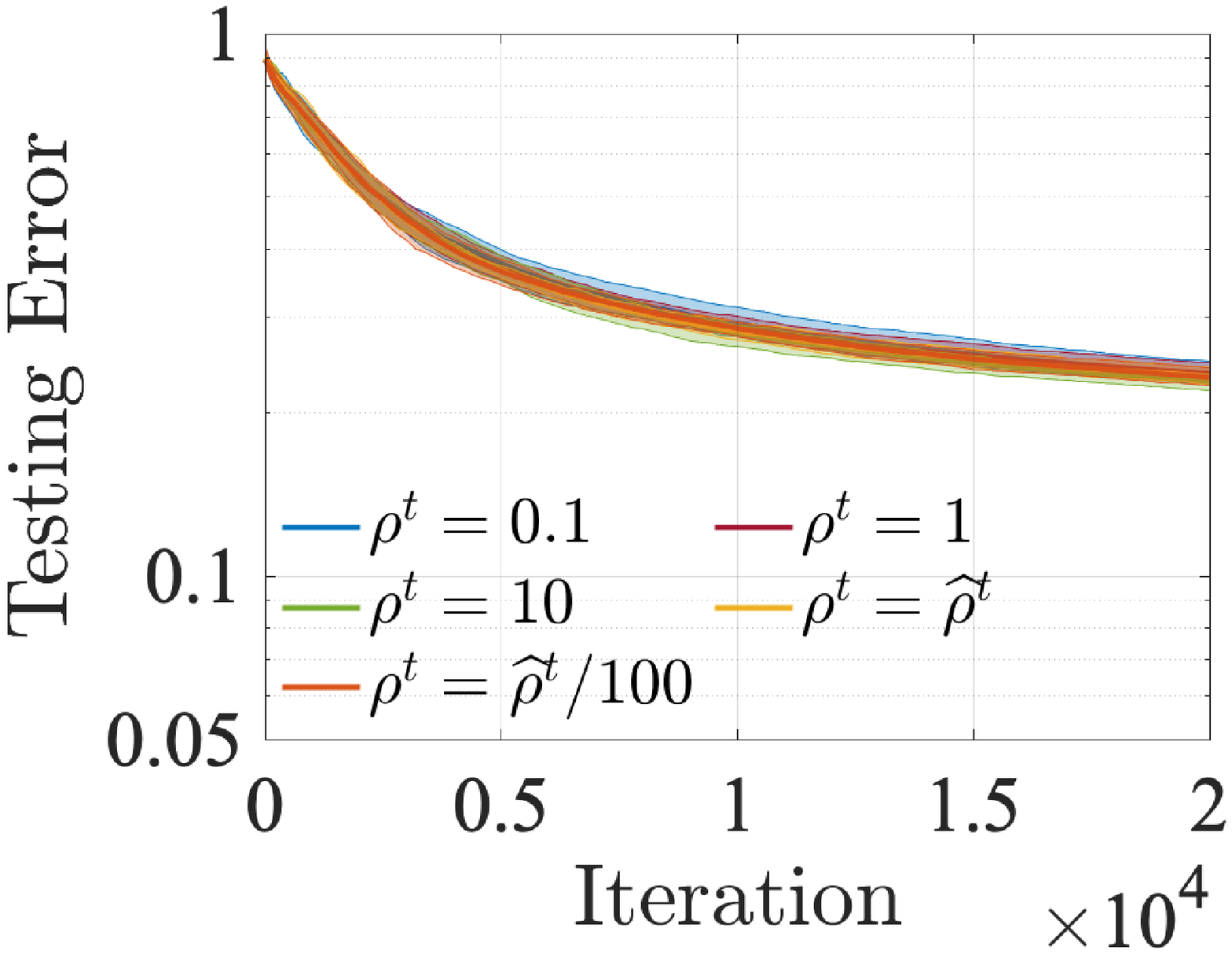}
      \includegraphics[width=\textwidth]{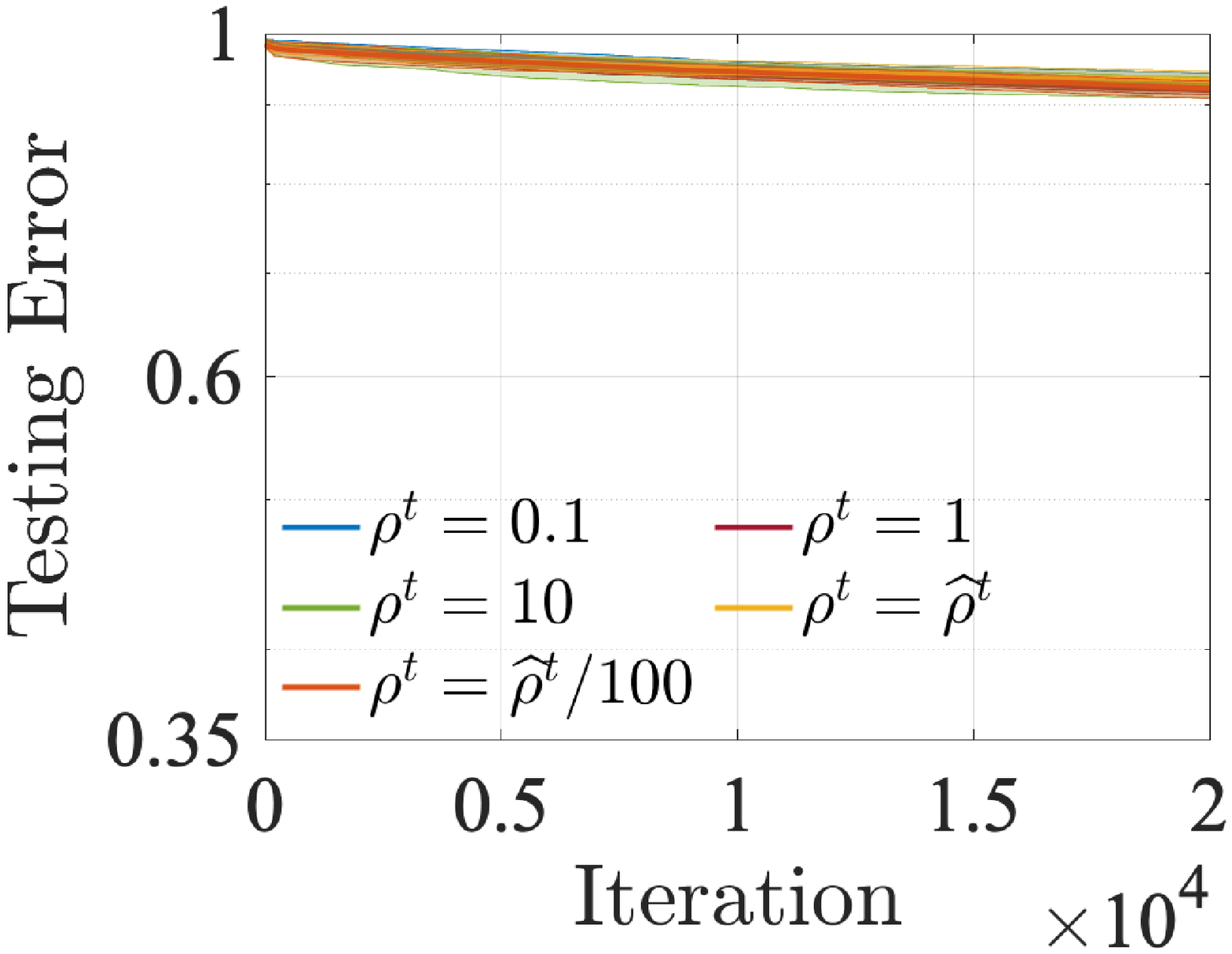}
      \caption{$\bar{\epsilon}=0.05$}
  \end{subfigure}
  \begin{subfigure}[b]{0.24\textwidth}
    \centering
    \includegraphics[width=\textwidth]{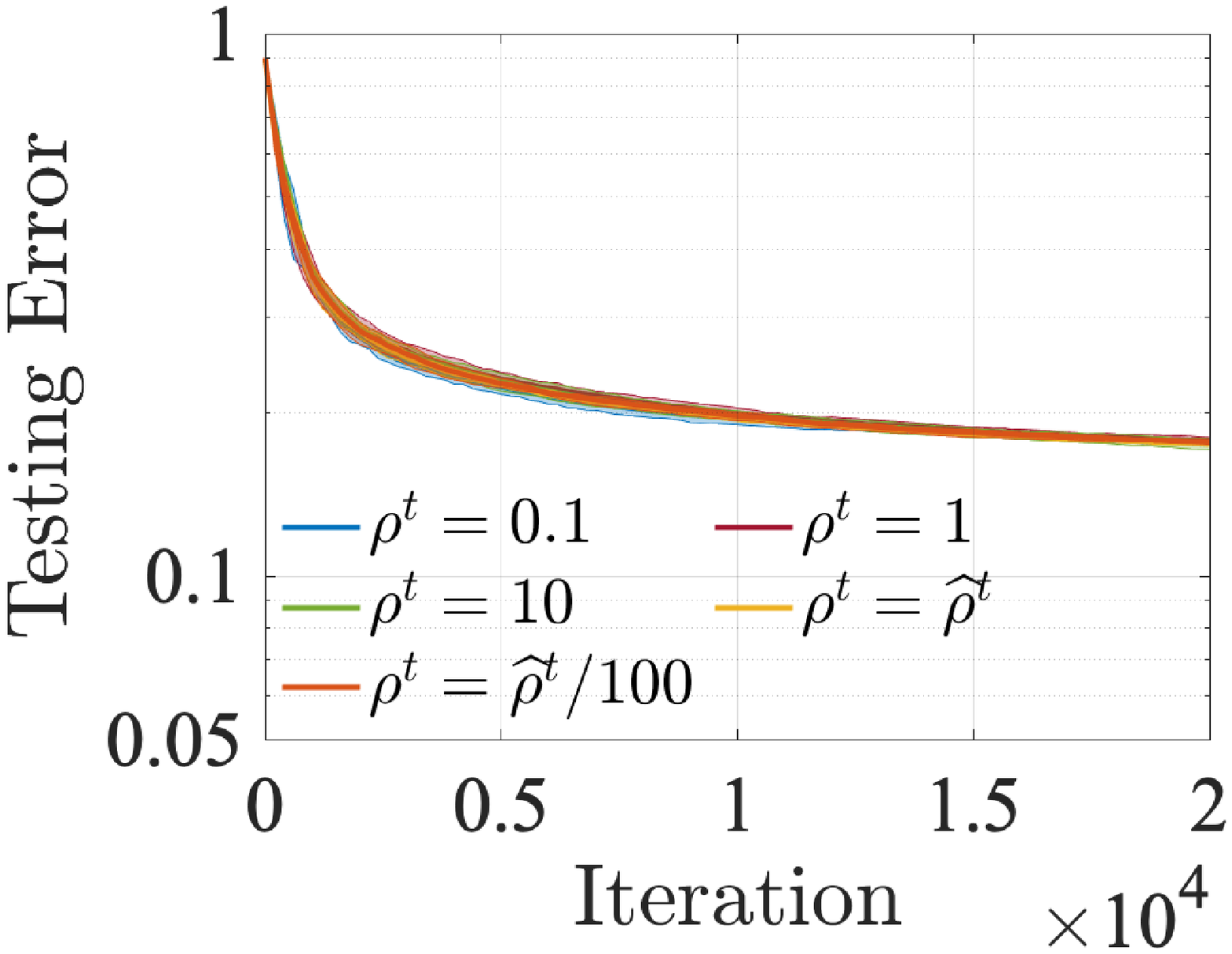}
    \includegraphics[width=\textwidth]{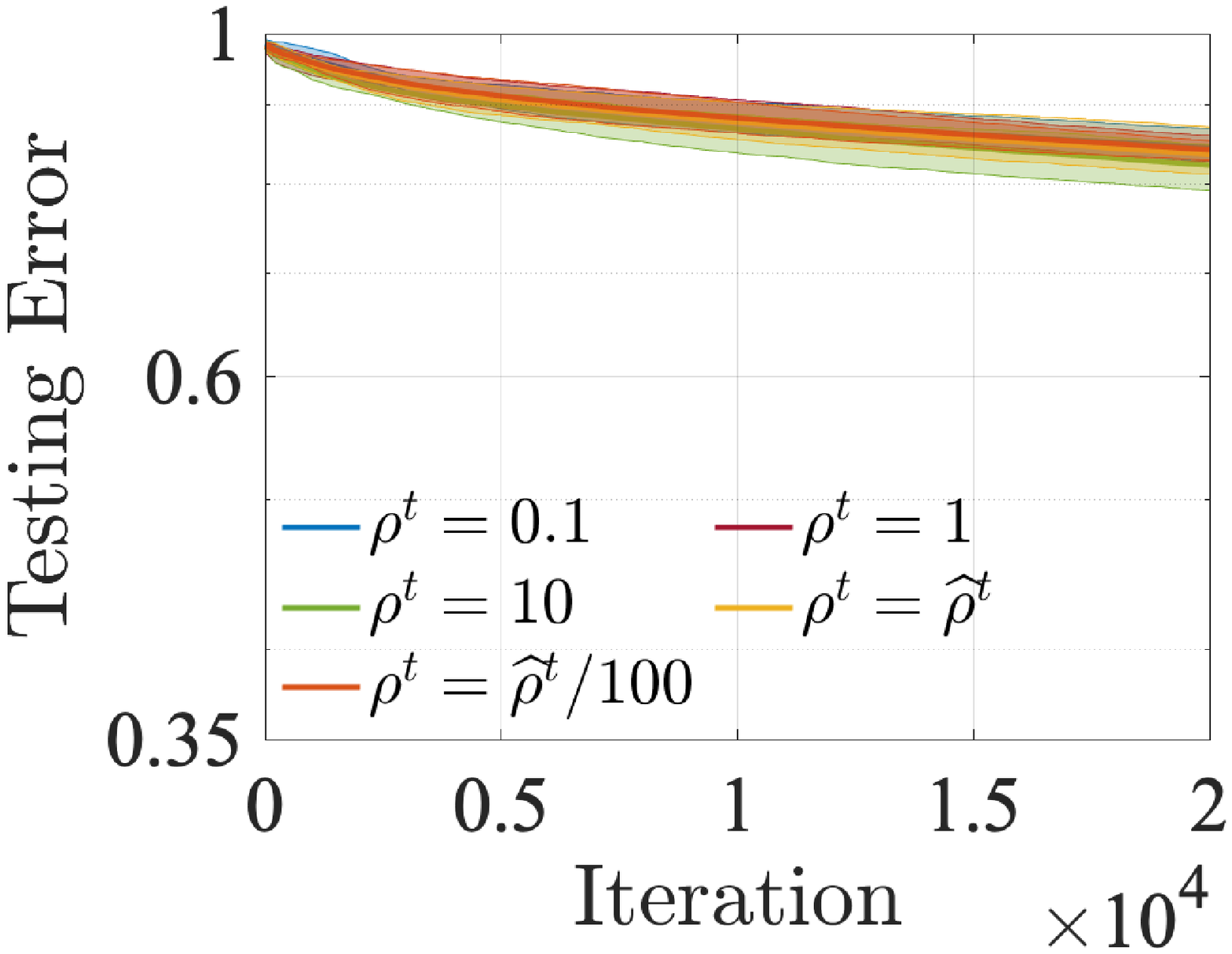}
    \caption{$\bar{\epsilon}=0.1$}
\end{subfigure}
  \begin{subfigure}[b]{0.24\textwidth}
      \centering
      \includegraphics[width=\textwidth]{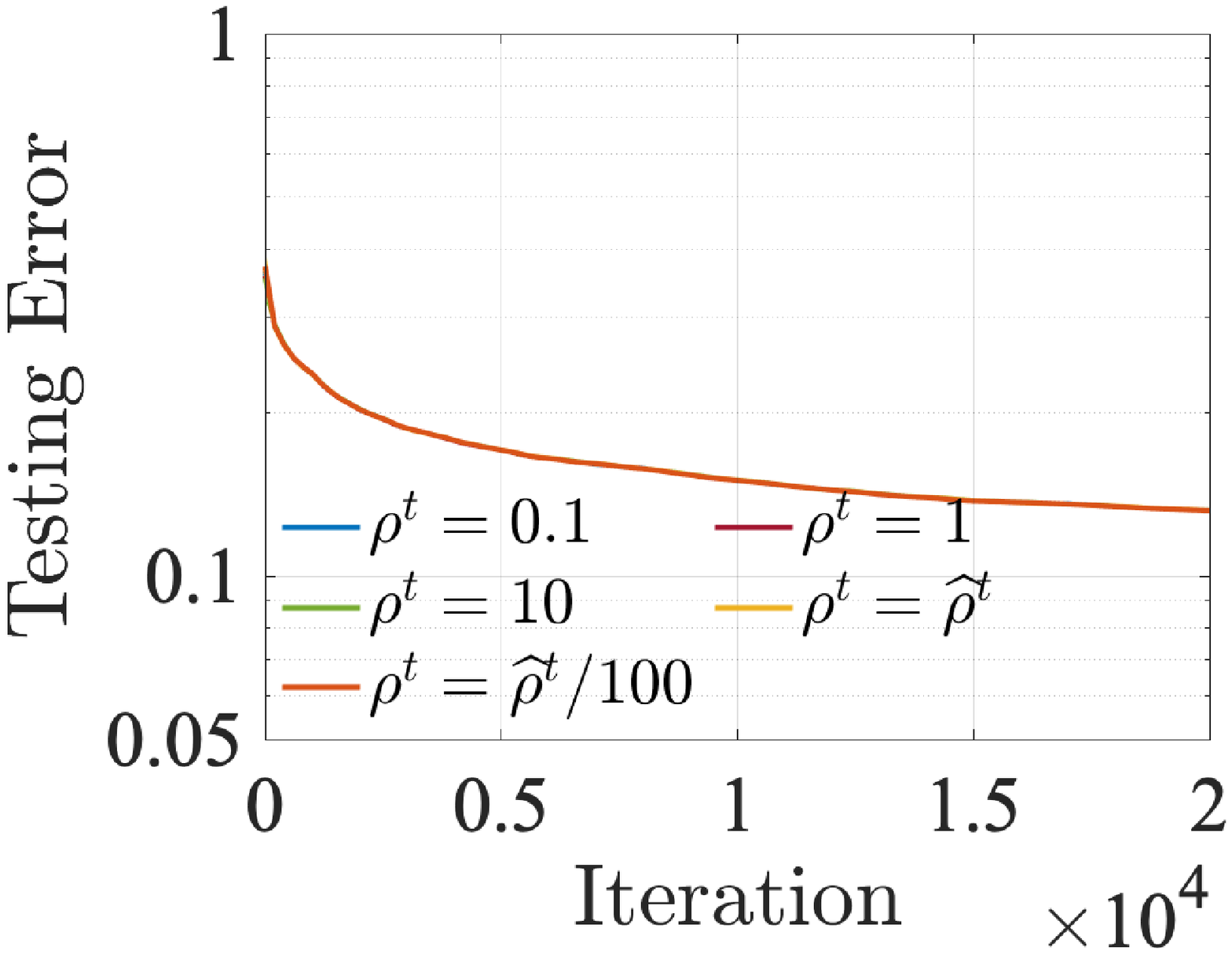}
      \includegraphics[width=\textwidth]{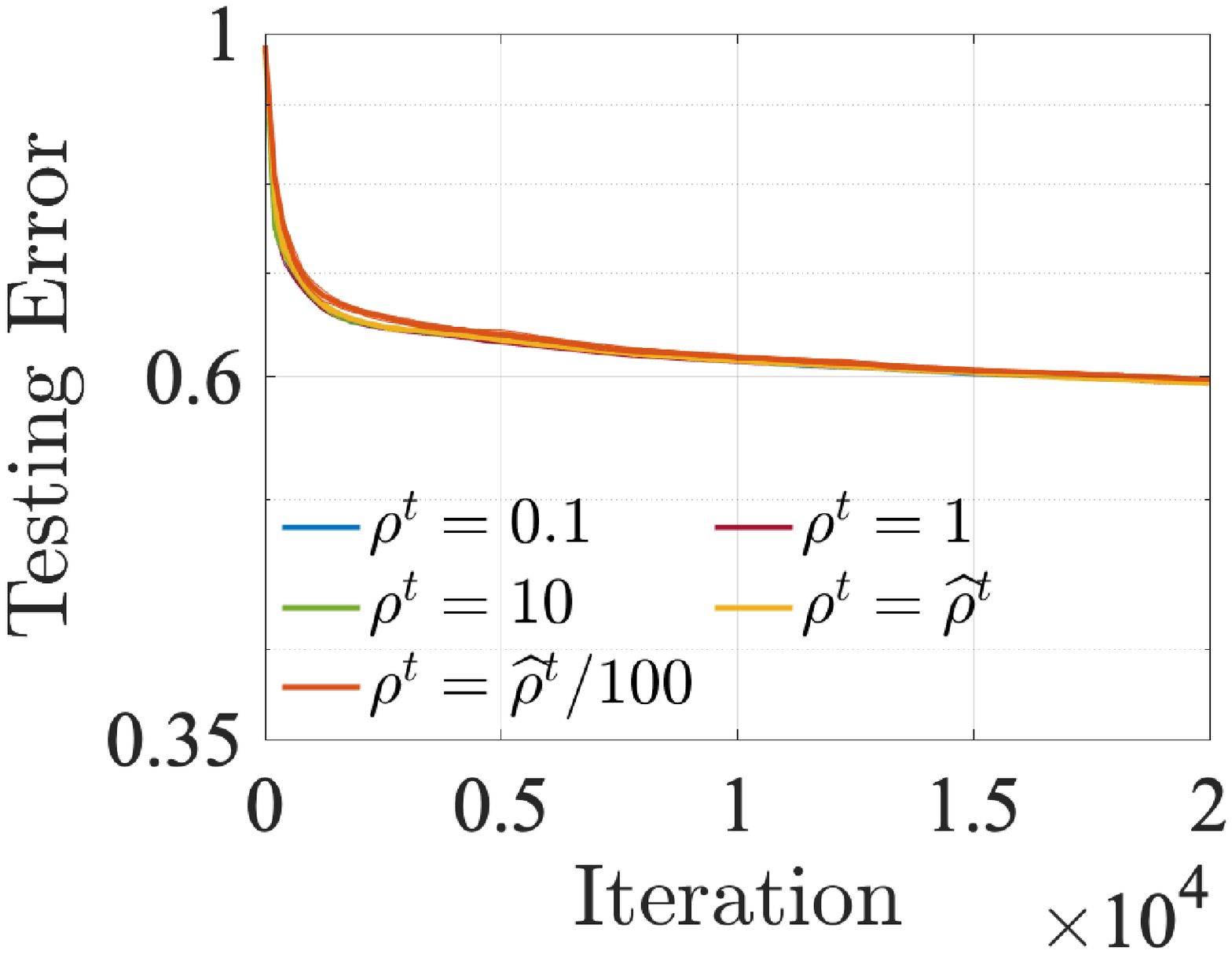}
      \caption{$\bar{\epsilon}=1$}
  \end{subfigure}
  \begin{subfigure}[b]{0.24\textwidth}
    \centering
    \includegraphics[width=\textwidth]{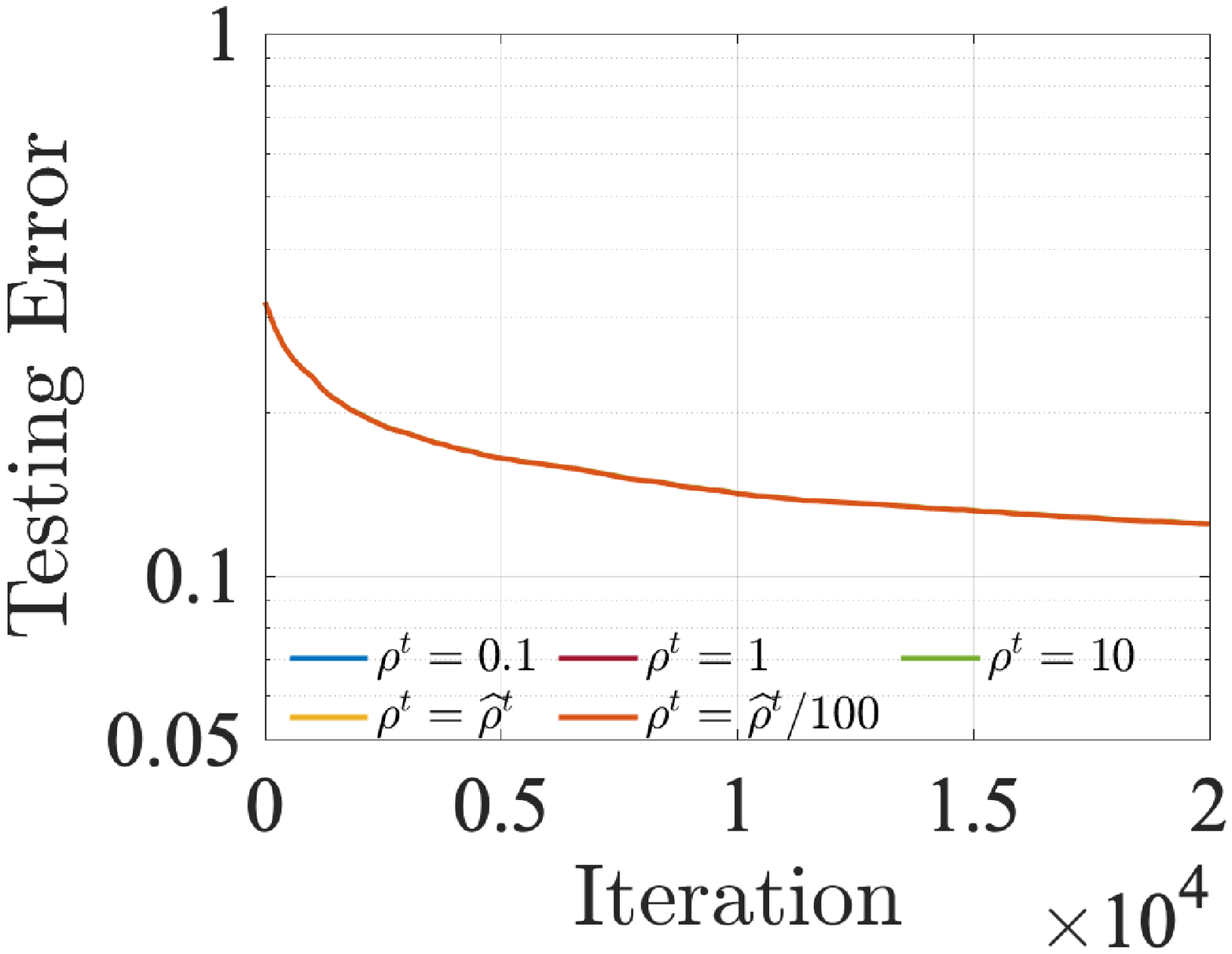}
    \includegraphics[width=\textwidth]{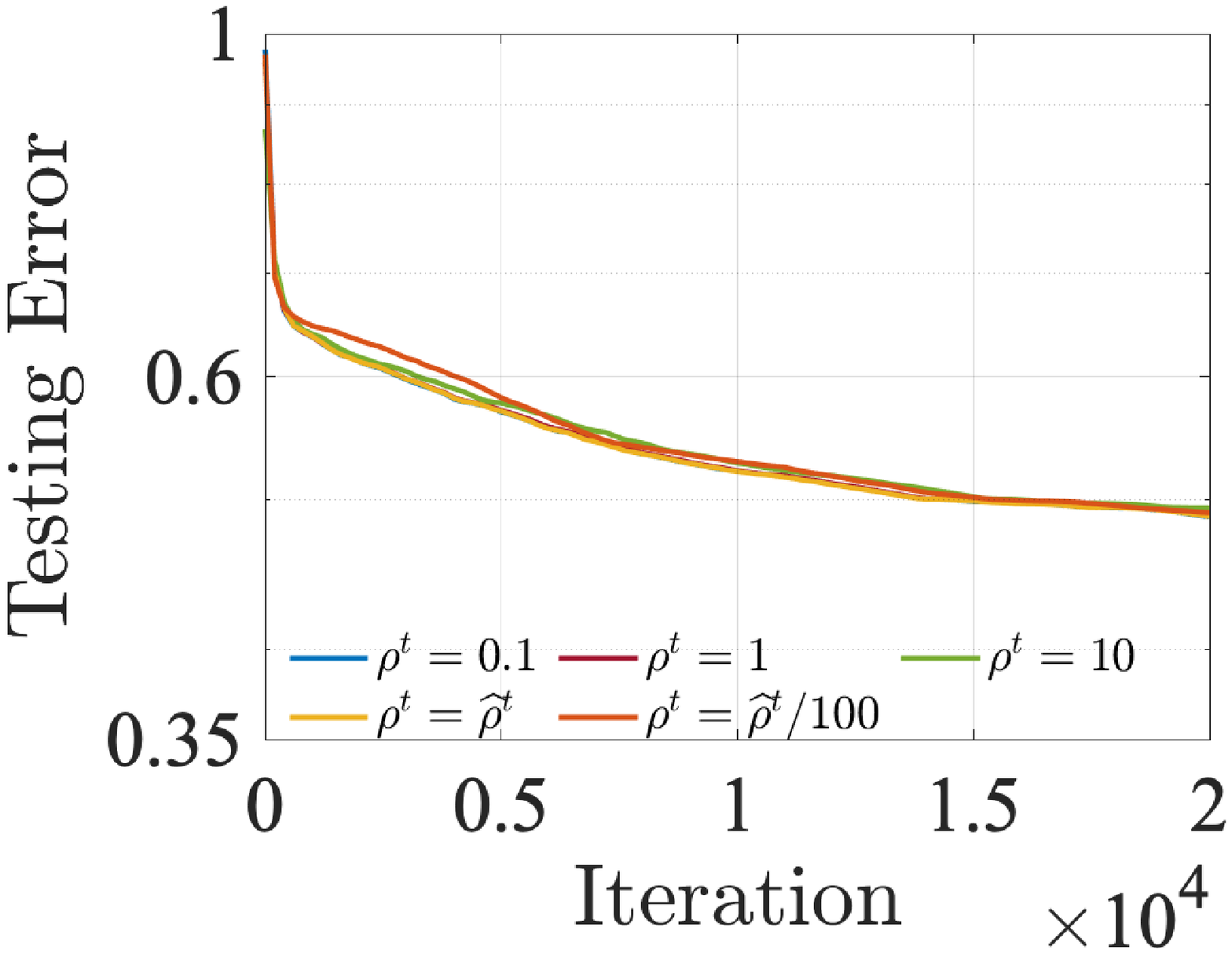}
    \caption{$\bar{\epsilon}=5$}
  \end{subfigure}
     \caption{Testing errors of \texttt{OutP} using MNIST (top) and FEMNIST (bottom).}
     \label{fig:hyper_rho}
\end{figure}

\end{document}